\documentclass[11pt,letterpaper]{article}
\usepackage[T1]{fontenc}
\usepackage{lmodern}
\usepackage[margin=1in]{geometry}
\usepackage{amsmath,amssymb,amsthm}
\usepackage{booktabs}
\usepackage{multirow}
\usepackage{array}
\usepackage{enumitem}
\usepackage{microtype}
\usepackage[hidelinks]{hyperref}
\hypersetup{
  pdftitle={Maximum Strong Independent Sets in Hypergraphs: Reductions, Bounds, and Greedy Certificates},
  pdfauthor={Yingquan (Cody) Wu and Jason Cong}
}

\theoremstyle{plain}
\newtheorem{theorem}{Theorem}
\newtheorem{lemma}{Lemma}
\newtheorem{proposition}{Proposition}
\newtheorem{corollary}{Corollary}
\theoremstyle{definition}
\newtheorem{definition}{Definition}
\newtheorem{algorithm}{Algorithm}
\newtheorem{example}{Example}
\newcounter{runningexampleroot}
\newcounter{runningexamplepart}
\renewcommand{\therunningexamplepart}{\arabic{runningexampleroot}(\alph{runningexamplepart})}

\newcommand{\startrunningexampleseries}{%
  \refstepcounter{example}%
  \setcounter{runningexampleroot}{\value{example}}%
  \setcounter{runningexamplepart}{0}%
}
\newenvironment{runningexamplepart}[1][]{%
  \refstepcounter{runningexamplepart}%
  \par\noindent\textbf{Example~\therunningexamplepart%
  \if\relax\detokenize{#1}\relax\else\ (#1)\fi.}\ }{\par}
\theoremstyle{remark}
\newtheorem{remark}{Remark}

\begin{document}

\title{Maximum Strong Independent Sets in Hypergraphs: Reductions, Bounds, and Greedy Certificates}
\author{
  Yingquan (Cody) Wu\thanks{Corresponding author.}\\
  {\small MBZUAI Institute of Foundation Models}\\
  {\small Sunnyvale, CA, USA}\\
  {\small\href{mailto:icodywu@gmail.com}{icodywu@gmail.com}}
  \and
  Jason Cong\\
  {\small University of California, Los Angeles}\\
  {\small Los Angeles, CA, USA}\\
  {\small\href{mailto:cong@cs.ucla.edu}{cong@cs.ucla.edu}}
}
\date{\small Preprint. Under review at the Journal of the ACM.}
\maketitle

\begin{abstract}
We study the maximum strong independent set problem in a finite hypergraph: find the largest vertex set that intersects every hyperedge in at most one vertex. This objective arises whenever each observed block is a local incompatibility constraint but transitive closure across overlapping blocks is not justified. A motivating example is multi-band LSH-MinHash deduplication, where each collision bucket gives local evidence, while connected-component contraction can impose spurious global equivalences. The paper develops an incidence-structural toolkit for this problem. We prove exact reductions for dominance, incidence twins, and weight-1 blocks; derive closed-form and low-weight upper bounds; introduce puncturing and covering certificates that sharpen those bounds; and analyze a layered greedy clustering algorithm driven by block weights and residual incidence. The algorithmic analysis includes feasibility, maximality, conditional optimality, a layered witness-matching upper bound, and incidence-local complexity bounds. The results give correctness, termination, fixed-point, and optimality certificates for broad incidence families, together with examples showing when different certificates separate or coincide.
\end{abstract}

\noindent\textbf{Keywords:} maximum independent set, strong independent set, hypergraphs, set packing, structural bounds, greedy algorithms.

\noindent\textbf{ACM CCS:} Theory of computation---Approximation algorithms analysis; Problems, reductions and completeness. Mathematics of computing---Hypergraphs.

\section{Introduction}

A strong independent set in a hypergraph is a vertex set that contains at most one vertex from each hyperedge. The corresponding maximum strong independent set problem is a natural abstraction for selection under local block constraints: a block asserts that its vertices are mutually incompatible for simultaneous selection, but overlaps between different blocks do not justify transitive contraction. This distinction is central in applications where blocks are generated by noisy or local evidence rather than by an equivalence relation.

One motivating example is large-scale LSH-MinHash deduplication \cite{broder1997,broder2000,charikar2002,manku2007}. A multi-band LSH pipeline emits collision buckets band by band. Each bucket is a local incompatibility block, but connected-component contraction over overlapping buckets can over-merge unrelated vertices through chains of local collisions. The correct combinatorial object is therefore the hypergraph whose vertices are items and whose hyperedges are emitted buckets; feasible retention is exactly strong independence. A companion application study reports ClueWeb22 (1.62B documents) and HPLT (3.98B documents) ablations where, across all full-corpus rows, the puncturing certificate is strictly stronger than the converged covering certificate, and the layered greedy solver reaches at least $99.43\%$ of the puncturing bound (within $0.5\%$ in 16 of 20 rows, and within $0.6\%$ in all rows) \cite{wu2026bucketfeasible}. The results below use this example only as motivation. The theory is stated for an arbitrary finite hypergraph $H=(V,\mathcal{B})$.

The same incidence-block objective also appears in fixed-path routing, weighted set packing and combinatorial-auction winner determination, and perfect minimal hashing \cite{alon1999routing,gottlob2013auctions,grossmann2026}. In all cases, feasibility is governed by materialized multiway conflicts rather than by transitive equivalence classes.

This paper develops a structural and algorithmic framework for maximum strong independent set in such incidence hypergraphs. The central new contribution is an incidence-certificate layer for bounding the optimum: a closed-form block-weight upper bound, its weight-1 refinement, layered support-superset sharpening, iterative puncturing certificates, and covering/reweighting certificates. These bounds are explicit functions of the materialized vertex-block incidence lists; they do not rely on solving a linear program (LP), semidefinite program (SDP), theta-body relaxation, or exact search subproblem. Around this certificate layer, we prove exact reductions based on dominance, incidence-twin compression, canonical handling of weight-1 blocks, and pivot dominance closure. We then give algorithms that actively improve and track the certificates: iterative greedy puncturing searches residual block subfamilies for stronger bounds, greedy covering constructs independent upper-bound witnesses, and greedy layered clustering uses the same block-weight order to build feasible root sets. The proofs give validity, termination, fixed-point, feasibility, maximality, conditional optimality, layered witness-matching, and incidence-local complexity guarantees.

The guiding principle is that the useful structure is not generic graph adjacency alone but the full vertex-block incidence relation. In particular, low block weight and incidence support determine both the strongest upper-bound certificates and the order in which the bound-improving and clustering algorithms should act. This perspective separates the structural objective from any particular source of blocks and makes the framework applicable to any setting in which local incompatibility constraints are materialized as a hyperedge family.

Maximum strong independent set inherits classical hardness phenomena from Karp's reductions \cite{karp1972}, and sits within a broad approximation, local-search, and relaxation literature for hypergraph packing problems \cite{hurkens1989,hazan2006,sviridenko2013,cygan2013,castrosilva2023}. Bounded-degree and uniform hypergraph independent-set algorithms give important general baselines \cite{hofmeister1998,halldorsson2009,agnarsson2013}, while recent work studies data reductions for strong maximum independent set in hypergraphs \cite{grossmann2026}. Our focus is complementary: the objective is classical, but the block-weight bounds, residual certificate-search algorithms, and certificate-guided clustering schedule are new incidence-structural tools.

\begin{table}[t]
\centering
\caption{Comparison with existing approaches for hypergraph strong independent set and set-packing-type formulations.}
\small
\setlength{\tabcolsep}{3pt}
\begin{tabular}{@{}>{\raggedright\arraybackslash}p{0.14\textwidth}>{\raggedright\arraybackslash}p{0.24\textwidth}>{\raggedright\arraybackslash}p{0.24\textwidth}>{\raggedright\arraybackslash}p{0.28\textwidth}@{}}
\toprule
Approach & Main object & Typical guarantee & Relation to this work \\
\midrule
Classical set packing and local search \cite{hurkens1989,hazan2006,sviridenko2013,cygan2013} &
Pack disjoint sets or improve a feasible packing through local exchanges. &
Approximation and hardness results, often parameterized by set size or exchange neighborhood. &
Defines a neighboring optimization landscape, but does not provide the block-weight incidence certificates used here. \\
\midrule
Bounded-degree or uniform hypergraph MIS \cite{hofmeister1998,halldorsson2009,agnarsson2013} &
Find large weak or strong independent sets under degree, rank, or uniformity assumptions. &
Approximation algorithms and limitations for generic hypergraph families. &
Provides generic baselines; our algorithms instead use materialized block incidences, minimum block weights, and layer-local updates. \\
\midrule
SDP/theta-style relaxations \cite{agnarsson2013,castrosilva2023} &
Relax independence constraints into semidefinite or theta-body programs. &
Optimization-based upper bounds or approximation tools. &
Complementary to our explicit closed-form and residual certificates, which require no relaxation solver. \\
\midrule
Exact preprocessing and data reduction \cite{grossmann2026} &
Reduce strong-MIS instances while preserving exact solvability. &
Kernelization/data-reduction effectiveness before exact or branch-based search. &
Related in spirit to dominance and incidence-twin compression, but our reductions are tied to computable upper-bound certificates and greedy clustering. \\
\midrule
Incidence-certificate framework (this work) &
Use block weights, incidence supports, dominance, twins, puncturing, and covering to bound and cluster. &
Closed-form, weight-1, puncturing, covering, witness-matching, and incidence-local complexity certificates. &
Targets materialized block families directly and explains when a block-weight greedy clustering schedule tracks the strongest certificates. \\
\bottomrule
\end{tabular}
\label{tab:prior-art-compare}
\end{table}

Table~\ref{tab:prior-art-compare} summarizes the distinction. Existing set-packing and hypergraph-MIS machinery typically optimizes a feasible packing by local exchanges, proves guarantees under rank/degree restrictions, relaxes the independence constraints into an auxiliary convex program, or reduces the instance before exact search. The contribution here is not another instantiation of those templates. We introduce an incidence-certificate layer whose quantities are computed directly from the given vertex--block incidence lists: the closed-form block-weight sum, canonical weight-1 preclusion, layered support-superset sharpening, iterative puncturing, covering reweighting, and greedy witness-matching certificates. These certificates upper-bound the optimum, explain when the bound is tight, and drive the block-native greedy clustering schedule without constructing a pairwise conflict graph, solving an LP/SDP/theta relaxation, or invoking a generic local-search oracle.

\section{Problem Formulation and Preliminaries}

Let $\mathcal{B}=\{B_1,\dots,B_M\}$ be a finite block family, and let
\[
V:=\bigcup_{B\in\mathcal{B}}B
\]
be the active universe. Any vertex that never appears in any block is unconstrained by $\mathcal{B}$ and is omitted from this optimization. Define the hypergraph
\begin{equation}
H=(V,\mathcal{B}),
\end{equation}
where elements of $V$ are vertices and blocks are hyperedges.
Throughout the theoretical sections, we use ``block'' and ``hyperedge'' interchangeably.
Accordingly, when we refer to a maximum independent set objective below, we mean the strong hypergraph version: no hyperedge may contain two selected vertices.

We use the standing convention that the block family is inclusion-reduced:
for distinct indices $i\neq j$,
\begin{equation}
B_i\not\subseteq B_j.
\end{equation}

Equivalently, if one forms the pairwise graph induced by block membership, each block $B$ is treated as a clique on the vertex set $B$: every pair in $B$ is an incompatibility pair under the strong-independence objective. This is a local selection constraint, not a transitive equivalence relation outside the stated block family.

\begin{definition}[Strong independent set and maximum strong independence number]
A set $R\subseteq V$ is a strong independent set of $H=(V,\mathcal{B})$ if
\begin{equation}
|R\cap B| \le 1 \qquad \forall B\in\mathcal{B}.
\end{equation}
The maximum strong independence number is
\begin{equation}
\alpha(\mathcal{B}) := \max\{ |R| : R\subseteq V,\ |R\cap B|\le 1\ \forall B\in\mathcal{B}\}.
\end{equation}
\end{definition}

To characterize the upper bounds and to derive our block-native greedy clustering algorithm, it is useful to record the incidence profile of each vertex. Intuitively, a vertex that appears in many blocks is globally more entangled: retaining it consumes feasibility across many local clique constraints at once.

\begin{definition}[block-index incidence set and vertex degree]
\label{def:incidence-degree}
For each vertex $v\in V$, define its block-index incidence set by
\begin{equation}
I(v):=\{i\in\{1,\dots,M\}: v\in B_i\},
\end{equation}
and its vertex degree by
\begin{equation}
d(v):=|I(v)|=|\{B\in\mathcal{B}:v\in B\}|.
\end{equation}
\end{definition}

\startrunningexampleseries
\begin{runningexamplepart}[Running example: block family and vertex degrees]
\label{ex:running-example-family}
We use the following block family as a running example in Sections~3--6:
\[
\begin{aligned}
B_1&=\{a,u\},\\
B_2&=\{b,u,f,p\},\qquad B_3=\{b,f,q\},\\
B_4&=\{e,g,p\},\qquad B_5=\{e,g,q\},\\
B_6&=\{c,x,p\},\qquad B_7=\{c,y,q\},\qquad B_8=\{c,x,y\},\\
B_9&=\{r,f,x,p\},\qquad B_{10}=\{r,g,y,q\},\\
B_{11}&=\{r,x,q\},\qquad B_{12}=\{r,y,p\}.
\end{aligned}
\]
Its block-overlap graph, where blocks are adjacent when they intersect, is connected, for example through
\[
B_1\cap B_2\neq\varnothing,\quad
B_2\cap B_4\neq\varnothing,\quad
B_4\cap B_6\neq\varnothing,\quad
B_6\cap B_9\neq\varnothing,\quad
B_9\cap B_{10}\neq\varnothing,\quad
B_{10}\cap B_{11}\neq\varnothing,
\]
so connected-component contraction would merge the whole instance into one connected component and retain only one representative. The vertex degrees are
\[
\begin{aligned}
d(a)&=1,\qquad d(b)=d(e)=d(u)=2,\\
d(c)&=d(f)=d(g)=3,\\
d(r)&=d(x)=d(y)=4,\qquad d(p)=d(q)=5.
\end{aligned}
\]
The vertices $u,f,g$ are deliberate support-superset distractors:
\[
\begin{aligned}
I(a)&=\{1\}\subsetneq I(u)=\{1,2\},\\
I(b)&=\{2,3\}\subsetneq I(f)=\{2,3,9\},\\
I(e)&=\{4,5\}\subsetneq I(g)=\{4,5,10\}.
\end{aligned}
\]
Later dominance and low-weight clustering rules will remove these distractors, revealing the block structure that drives the sharper certificate and the greedy clustering schedule.
\end{runningexamplepart}

\section{Characterizations of Maximum Strong Independent Sets}

This section records incidence reductions and canonical optimality conditions for maximum strong independent sets. The reductions either preserve $\alpha(\mathcal{B})$ exactly or identify vertices that can be fixed or suppressed without excluding all optima. These structural rules are the basis for the upper-bound certificates and greedy schedules in later sections.

A recurring structural statistic is the minimum incidence degree inside a block.

\begin{definition}[block weight]
For each block $B\in\mathcal{B}$, define
\begin{equation}
w(B) := \min_{v\in B} d(v).
\end{equation}
\end{definition}

\begin{runningexamplepart}[Running example: block weights]
\label{ex:running-example-weights}
For the running example in Example~\ref{ex:running-example-family}, the block weights are
\[
\begin{aligned}
w(B_1)&=1,\qquad
w(B_2)=\cdots=w(B_5)=2,\\
w(B_6)&=\cdots=w(B_{10})=3,\qquad
w(B_{11})=w(B_{12})=4.
\end{aligned}
\]
The apparent weight-$3$ status of $B_9$ and $B_{10}$ is caused only by the degree-$3$ support-superset distractors $f$ and $g$. The dominance rule below will justify deleting the distractors $u,f,g$ using $a,b,e$ as witnesses, respectively; in that reduced view the weights become
\[
\begin{aligned}
w_-(B_1)&=1,\qquad
w_-(B_2)=\cdots=w_-(B_5)=2,\\
w_-(B_6)&=w_-(B_7)=w_-(B_8)=3,\qquad
w_-(B_9)=\cdots=w_-(B_{12})=4.
\end{aligned}
\]
Thus the dominance-reduced family naturally decomposes into the intended weight layers $1,2,3,4$, while the unreduced family disguises two true weight-$4$ blocks as weight-$3$ blocks.
\end{runningexamplepart}

\begin{definition}[Incidence-twin quotient]
\label{def:incidence-twin-quotient}
Define an equivalence relation on the active vertex universe by
\begin{equation}
u\sim v
\quad\Longleftrightarrow\quad
I(u)=I(v).
\end{equation}
Since $V=\bigcup_{B\in\mathcal{B}}B$, every equivalence class has nonempty support. Let
\[
\overline V:=V/{\sim}
\]
be the set of incidence-twin classes, and write $[v]$ for the class containing $v$. For each block $B_i$, define its quotient block by
\begin{equation}
\overline B_i:=\{[v]:v\in B_i\}.
\end{equation}
The incidence-twin quotient family is
\begin{equation}
\overline{\mathcal{B}}:=\{\overline B_i: i=1,\dots,M\},
\end{equation}
viewed as an indexed block family, with duplicate quotient blocks removable by the inclusion rule above if desired.
\end{definition}

For the next theorem, write $\mathcal{B}=\{B_1,\dots,B_M\}$ and use the incidence sets $I(\cdot)$ from Definition~\ref{def:incidence-degree}.
We also use a deletion shorthand: for a vertex subset $S\subseteq V$, write
\[
\mathcal{B}-S
:=
\{B\setminus S:\ B\in\mathcal{B},\ B\setminus S\neq\varnothing\},
\]
with empty blocks discarded; for a single vertex $b$, write $\mathcal{B}-\{b\}$.
In the running example, the containments
\[
I(a)\subsetneq I(u),\qquad
I(b)\subsetneq I(f),\qquad
I(e)\subsetneq I(g)
\]
are the concrete cases to keep in mind: $u,f,g$ are more constrained support-superset vertices, so an optimum never needs to keep them instead of their smaller-support witnesses.

\begin{theorem}[Dominance and incidence-twin compression]
\label{thm:doc-dominance}
\label{thm:incidence-twin-compression}
The following two vertex-level reductions preserve the optimal retention value.
\begin{enumerate}[leftmargin=1.6em,label=(\roman*),itemsep=0.15em]
    \item \textbf{(Dominance removal)}
    If two vertices $a,b\in V$ satisfy
    \begin{equation}
    I(a)\subseteq I(b),
    \end{equation}
    then removing $b$ does not change the optimal retention value:
    \begin{equation}
    \alpha(\mathcal{B})=\alpha(\mathcal{B}-\{b\}).
    \end{equation}
    \item \textbf{(Incidence-twin compression)}
    For the quotient family in Definition~\ref{def:incidence-twin-quotient},
    \begin{equation}
    \alpha(\overline{\mathcal{B}})=\alpha(\mathcal{B}).
    \end{equation}
\end{enumerate}
\end{theorem}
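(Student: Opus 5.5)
We prove the two parts separately. Each part comes down to comparing feasible sets directly: one inequality is inherited from feasibility, and the other uses an explicit exchange or representative-choice map. Throughout, we use that every active vertex $v\in V$ has $I(v)\neq\varnothing$, because $V=\bigcup_{B\in\mathcal{B}}B$. We assume $a\neq b$ in part (i), since otherwise there is nothing to prove.

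\textbf{Part (i).} First we describe $\mathcal{B}-\{b\}$.
\begin{itemize}[leftmargin=1.6em,itemsep=0.15em]
  \item Every vertex $v\neq b$ lying in a block $B_i$ still lies in $B_i\setminus\{b\}$, so that block is nonempty and survives.
  \item The only discarded block could therefore be $\{b\}$ itself.
  \item The active universe of $\mathcal{B}-\{b\}$ is exactly $V\setminus\{b\}$, and each $v\neq b$ keeps all of its incidences.
\end{itemize}
The fact that $\mathcal{B}-\{b\}$ need not remain inclusion-reduced does not affect $\alpha$.

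For $R\subseteq V\setminus\{b\}$ we have $|R\cap(B\setminus\{b\})|=|R\cap B|$. Hence $R$ is strongly independent in $\mathcal{B}-\{b\}$ if and only if it is strongly independent in $\mathcal{B}$, which gives $\alpha(\mathcal{B}-\{b\})\le\alpha(\mathcal{B})$.

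For the reverse inequality, take an optimal $R$ for $\mathcal{B}$. If $b\notin R$ we are done, so suppose $b\in R$ and set $R'=(R\setminus\{b\})\cup\{a\}$.
\begin{itemize}[leftmargin=1.6em,itemsep=0.15em]
  \item \emph{Size is preserved.} We have $a\notin R$: since $I(a)\neq\varnothing$ and $I(a)\subseteq I(b)$, some block contains both $a$ and $b$, and a feasible $R$ cannot contain both. So $|R'|=|R|$.
  \item \emph{Feasibility is preserved.} Any block $B_i$ with $a\in B_i$ satisfies $i\in I(b)$, so $b\in B_i$ and $R\cap B_i=\{b\}$; hence $R'\cap B_i=\{a\}$. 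Any block not containing $a$ meets $R'$ in a subset of $R\cap B_i$.
\end{itemize}
Thus $R'$ is an optimal set avoiding $b$, and by the equivalence above it is feasible for $\mathcal{B}-\{b\}$.

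\textbf{Part (ii).} Let $R$ be strongly independent in $\mathcal{B}$ and define $\phi(R)=\{[v]:v\in R\}$.

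We first show that $R$ contains at most one vertex per class. Two distinct twins $u\sim v$ share every block in the nonempty set $I(u)=I(v)$, so they cannot both lie in $R$. Hence $\phi$ is injective on $R$.

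Next, $[v]\in\overline B_i$ if and only if $i\in I(v)$, if and only if $v\in B_i$; this is exactly where the twin condition is used. Therefore
\[
|\phi(R)\cap\overline B_i|=|R\cap B_i|\le 1.
\]
So $\phi(R)$ is feasible for $\overline{\mathcal{B}}$ with $|\phi(R)|=|R|$.

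Conversely, given a feasible family of classes $\overline R$, choose one representative from each class to obtain $R$. The same identity shows $R$ is feasible in $\mathcal{B}$ with $|R|=|\overline R|$. Removing duplicate quotient blocks deletes only repeated constraints, so it does not change feasibility. Together the two directions give $\alpha(\overline{\mathcal{B}})=\alpha(\mathcal{B})$.

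\textbf{Main obstacle.} The argument itself is elementary. The step needing care is the bookkeeping around nonempty supports: they are what guarantee $a\notin R$ in the exchange, that twins conflict, and that deletion and quotienting do not silently change the active universe or remove constraints.
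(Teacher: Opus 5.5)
Your proof is correct and follows the paper's argument essentially step for step. In (i) it uses the trivial inclusion plus the exchange $(R\setminus\{b\})\cup\{a\}$, and in (ii) projection to classes plus choice of representatives; you also make explicit some bookkeeping the paper leaves implicit, namely that $a\notin R$ and that $[v]\in\overline B_i$ iff $v\in B_i$.

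One wording fix: $a\neq b$ is a genuine hypothesis, not a case with ``nothing to prove.'' For $a=b$ the conclusion can fail: $\mathcal{B}=\{\{b\},\{c\}\}$ has $\alpha(\mathcal{B})=2$ but $\alpha(\mathcal{B}-\{b\})=1$.
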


\begin{proof}
For (i), any feasible retained set in $\mathcal{B}-\{b\}$ is feasible in $\mathcal{B}$, so
\(
\alpha(\mathcal{B}-\{b\})\le \alpha(\mathcal{B}).
\)
Let $O^\star$ be an optimal feasible set for $\mathcal{B}$.
If $b\notin O^\star$, then $O^\star$ is feasible for $\mathcal{B}-\{b\}$ and we are done.
Assume $b\in O^\star$ and define
\begin{equation}
O':=(O^\star\setminus\{b\})\cup\{a\}.
\end{equation}
For any index $i\in I(a)$, we have $i\in I(b)$ by $I(a)\subseteq I(b)$. Since $b\in O^\star$ and $O^\star$ is feasible, no other element of $O^\star$ can lie in $B_i$; replacing $b$ by $a$ keeps occupancy at most one in such blocks. All other blocks are unaffected. Hence $O'$ is feasible for $\mathcal{B}$ and $|O'|=|O^\star|$, with $b\notin O'$. Therefore there exists an optimal solution not using $b$, implying
\(
\alpha(\mathcal{B}-\{b\})\ge \alpha(\mathcal{B}).
\)
Combining both inequalities gives equality.

For (ii), let $\pi:V\to\overline V$ be the projection $\pi(v)=[v]$.
First let $R\subseteq V$ be feasible for $\mathcal{B}$. No two distinct vertices in $R$ can lie in the same incidence-twin class: if $u\sim v$ and $u\neq v$, then $I(u)=I(v)\neq\varnothing$, so $u$ and $v$ co-occur in every block indexed by this common support, contradicting feasibility. Hence $|\pi(R)|=|R|$. Moreover, if $\pi(R)$ contained two classes in some quotient block $\overline B_i$, then $R$ would contain two vertices in the original block $B_i$, again contradicting feasibility. Thus $\pi(R)$ is feasible for $\overline{\mathcal{B}}$, so $\alpha(\overline{\mathcal{B}})\ge\alpha(\mathcal{B})$.

Conversely, let $\overline R\subseteq\overline V$ be feasible for $\overline{\mathcal{B}}$, and choose one representative vertex $\rho(C)\in C$ from each class $C\in\overline R$. Set
\[
R:=\{\rho(C):C\in\overline R\}.
\]
If $R$ contained two vertices in some original block $B_i$, then $\overline R$ would contain the two corresponding classes in $\overline B_i$, violating feasibility of $\overline R$. Therefore $R$ is feasible for $\mathcal{B}$ and $|R|=|\overline R|$, so $\alpha(\mathcal{B})\ge\alpha(\overline{\mathcal{B}})$. Combining the two inequalities proves equality.
\end{proof}

\begin{remark}[Cost of exact vertex-dominance closure]
Theorem~\ref{thm:doc-dominance}(i) is an exact reduction rule, but applying it exhaustively requires detecting support containment among vertex incidence sets:
\[
I(a)\subseteq I(b).
\]
Equivalently, one must compute the support antichain of the vertex family. If there are $N:=|V|$ active vertices, $M:=|\mathcal{B}|$ blocks, total incidence size
\[
I_{\mathrm{tot}}:=\sum_{v\in V}d(v),
\]
and maximum vertex degree $\Delta:=\max_v d(v)$, then a direct sorted-list comparison over all vertex pairs costs
\[
O(N^2\Delta)
\]
in the worst case. A bitset implementation instead costs
\[
O(N^2 M / w_{\mathrm{word}})
\]
bit operations and
\[
O(NM/w_{\mathrm{word}})
\]
words of storage. Inverted-index variants avoid comparing all pairs explicitly, but still require intersecting block posting lists for many possible dominators,
\[
\bigcap_{i\in I(a)}B_i,
\]
and can be output- or intersection-size dominated on dense block families. Thus full exact superset suppression is a valid maximum strong independent set preprocessing rule, but it is too expensive to use as a routine global pass at large scale.

By contrast, Theorem~\ref{thm:doc-dominance}(ii) is much cheaper: incidence-twin compression only groups equal supports, which can be implemented by hashing or sorting support lists and then verifying collisions. It is therefore used as shared preprocessing. The stricter containment rule from part~(i) is used later only in a bounded, layer-revealed form: Algorithm~\ref{alg:layered-superset-sharpening} applies partial dominance suppression to sharpen the closed-form certificate, rather than attempting full dominance closure of the maximum strong independent set instance.
\end{remark}

Part~(ii) can also be read as repeated application of part~(i): within each class $C$, choose one representative $r_C$ and delete every other $v\in C$, since $I(r_C)=I(v)$. The explicit quotient notation is still useful algorithmically because it records the deleted vertex ids for later materialization.

\begin{remark}[Algorithmic benefit of incidence-twin compression]
Incidence-twin compression weakly reduces the active vertex count and total incidence size:
\begin{equation}
|\overline V|\le |V|,
\qquad
\sum_i |\overline B_i|\le \sum_i |B_i|.
\end{equation}
If an incidence-twin class has size $k>1$ and common support size $d>0$, the total incidence size drops by at least $(k-1)d$. In addition to lowering memory and runtime, the quotient prevents identical-support multiplicities from inflating quantities such as $n_{\min}(B)$ that drive puncturing, covering, and greedy layered clustering priorities. It is therefore a preprocessing step for the algorithms and certificates below.
\end{remark}

The next characterization concerns blocks containing a vertex that appears nowhere else. When incidence-twin compression has been applied, degrees, block weights, and multiplicity counts are understood on the quotient family, with bars omitted for readability. Let
\begin{equation}
\mathcal{B}_1:=\{B\in\mathcal{B}: w(B)=1\},
\end{equation}
and for each $B\in\mathcal{B}_1$, choose one vertex $z_B\in B$ with $d(z_B)=1$.

\begin{theorem}[Canonical optimal form for weight-1 blocks]
\label{thm:w1-canonical}
There exists an optimal feasible set $O^\star$ such that
\begin{equation}
z_B\in O^\star \qquad \forall B\in\mathcal{B}_1.
\end{equation}
Equivalently, every weight-1 block can be rooted by a weight-1 vertex in an optimal solution.
\end{theorem}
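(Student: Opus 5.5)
We argue by an exchange argument that repeatedly applies the swap from the proof of Theorem~\ref{thm:doc-dominance}(i). Start from an arbitrary optimal feasible set $O$. Among all optimal feasible sets, fix one, call it $O^\star$, that contains the maximum possible number of the chosen vertices $\{z_B : B\in\mathcal{B}_1\}$. We then show by contradiction that $O^\star$ contains all of them.

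Suppose some $B\in\mathcal{B}_1$ has $z_B\notin O^\star$. Since $d(z_B)=1$, we have $I(z_B)=\{i_B\}$, where $B=B_{i_B}$, so $z_B$ lies in no other block. We consider two cases.

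\textbf{Case 1: $O^\star\cap B=\varnothing$.} Then $O^\star\cup\{z_B\}$ is still feasible. The only block containing $z_B$ is $B$, and $B$ now has occupancy exactly one. This enlarged set is strictly larger than $O^\star$, contradicting optimality.

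\textbf{Case 2: $O^\star\cap B=\{y\}$ with $y\neq z_B$.} Set $O':=(O^\star\setminus\{y\})\cup\{z_B\}$. Because $I(z_B)=\{i_B\}\subseteq I(y)$, the feasibility argument of Theorem~\ref{thm:doc-dominance}(i) applies verbatim. The block $B$ keeps occupancy one, and every other block loses $y$ or is unaffected. Hence $O'$ is optimal with $|O'|=|O^\star|$.

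The main point to check is that this swap does not remove some other chosen $z_{B'}$, that is, that $y\neq z_{B'}$ for every $B'\in\mathcal{B}_1$. Suppose $y=z_{B'}$. Then $d(y)=1$ and $y\in B$, so $B'=B$ and $y=z_B$, a contradiction. (If two weight-1 blocks were identical this would need care, but inclusion-reducedness rules that out.) So $O'$ contains strictly more chosen vertices than $O^\star$, contradicting the choice of $O^\star$.

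The phrase ``equivalently'' in the statement is then immediate, since each $z_B$ is a weight-1 vertex rooting $B$. When the theorem is read on the quotient family, we first invoke Theorem~\ref{thm:incidence-twin-compression}(ii) so that optimal values agree, and then run the same argument on $\overline{\mathcal{B}}$. We do not expect any step to be a real obstacle; the only delicate point is the check in Case~2 that the swap never evicts another chosen root.
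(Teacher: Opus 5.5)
Your proof is correct and follows the paper's argument: fix an optimum maximizing the number of chosen roots $z_B$, then derive a contradiction by adding $z_B$ when $O^\star\cap B=\varnothing$ and by swapping it in otherwise. Your explicit check that the evicted vertex $y$ is not itself some $z_{B'}$ supplies a step the paper leaves implicit when it asserts $|O'\cap Z|=|O\cap Z|+1$; that check uses only $d(y)=1$, since identical duplicate blocks would force $d(y)\ge 2$, so inclusion-reducedness is not needed.
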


\begin{proof}
Define $Z:=\{z_B: B\in\mathcal{B}_1\}$. Since $d(z_B)=1$, each $z_B$ belongs only to its own block $B$. Hence no two elements of $Z$ can collide in any block, so $Z$ is feasible.

Among all optimal feasible sets, choose $O$ maximizing $|O\cap Z|$. Suppose, for contradiction, that there exists $B\in\mathcal{B}_1$ with $z_B\notin O$.

If $O\cap B=\varnothing$, then $O\cup\{z_B\}$ is feasible (because $z_B$ appears only in $B$), which contradicts optimality of $O$.

Therefore $O\cap B=\{x_B\}$ for some $x_B\neq z_B$. Consider
\begin{equation}
O' := (O\setminus\{x_B\})\cup\{z_B\}.
\end{equation}
This set is feasible: replacing $x_B$ by $z_B$ affects only block $B$, where the occupancy remains one, and in every other block feasibility can only improve because $x_B$ is removed.
Also, $|O'|=|O|$, so $O'$ is optimal.
But now $|O'\cap Z|=|O\cap Z|+1$, contradicting the choice of $O$.

Hence $z_B\in O$ for every $B\in\mathcal{B}_1$. Taking $O^\star:=O$ proves the claim.
\end{proof}

In the running example, $B_1=\{a,u\}$ and $d(a)=1$. If an optimal solution used $u$ to cover the local conflict in $B_1$, replacing $u$ by $a$ cannot create any new conflict because $a$ appears nowhere else. Thus one may canonically root $B_1$ at $a$ before considering the higher-weight layers.

The preceding results characterize vertex dominance/twin compression and canonical weight-1 roots. The next rule gives a pivot-level canonicalization test built directly from Theorem~\ref{thm:doc-dominance}: after deleting vertices dominated by a chosen pivot, a simple block condition guarantees an optimal solution containing that pivot.

For a pivot vertex $a\in V$, define the vertices dominated by $a$, the incident-block neighborhood of $a$, and the residual blockers of $a$ by
\begin{equation}
D(a):=\{v\in V\setminus\{a\}: I(a)\subseteq I(v)\},
\qquad
\Gamma(a):=\bigcup_{i\in I(a)} B_i,
\qquad
R(a):=\Gamma(a)\setminus(\{a\}\cup D(a)).
\end{equation}
Thus $D(a)$ is exactly the set of vertices removable by Theorem~\ref{thm:doc-dominance}(i) using $a$ as dominator, while $R(a)$ is the set of remaining possible blockers of $a$ after that dominance removal.

\begin{theorem}[Pivot dominance closure and canonical inclusion]
\label{thm:pivot-dominance-closure}
For any pivot $a\in V$, the following hold.
\begin{enumerate}[leftmargin=1.6em,label=(\roman*),itemsep=0.15em]
    \item \textbf{(Closure under dominance removal)}
    \begin{equation}
    \alpha(\mathcal{B})=\alpha\!\left(\mathcal{B}-D(a)\right).
    \end{equation}
    \item \textbf{(Sufficient condition for an optimal set containing $a$)}
    If there exists a block $B^*\in\mathcal{B}$ such that
    \begin{equation}
    R(a)\subseteq B^*,
    \end{equation}
    then there exists an optimal feasible set $O^\star$ with $a\in O^\star$.
\end{enumerate}
\end{theorem}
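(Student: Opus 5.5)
Part~(i) follows from Theorem~\ref{thm:doc-dominance}(i) applied repeatedly; the only care needed is that deletions change incidence sets. The cleanest route avoids iteration and repeats the exchange argument once for the whole set $D(a)$. First, $\alpha(\mathcal{B}-D(a))\le\alpha(\mathcal{B})$, because a set feasible in $\mathcal{B}-D(a)$ uses no vertex of $D(a)$ and meets each original block in what it meets in the trimmed block. For the reverse inequality, take an optimum $O^\star$ for $\mathcal{B}$. If $O^\star\cap D(a)=\varnothing$, it is feasible for $\mathcal{B}-D(a)$ and we are done. Otherwise there is some $b\in O^\star\cap D(a)$. Then $I(a)\subseteq I(b)$, and $a\notin O^\star$ because $a$ and $b$ share every block of $I(a)\neq\varnothing$. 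Moreover, $b$ is the unique element of $O^\star\cap D(a)$: any two vertices of $D(a)$ both contain $I(a)$ in their supports, so they co-occur in a block. Swapping $b$ for $a$ keeps feasibility exactly as in Theorem~\ref{thm:doc-dominance}(i). The result is an optimum avoiding $D(a)$ entirely, so it is feasible in $\mathcal{B}-D(a)$. (Alternatively one could iterate, noting that $a$ survives all deletions and that restricted supports still satisfy $I(a)\subseteq I(v)$. The one-shot swap is simpler.)

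For part~(ii), by part~(i) and the exchange above, choose an optimum $O$ with $O\cap D(a)=\varnothing$. If $a\in O$ we are done. Otherwise I consider $O\cap\Gamma(a)$. This set cannot contain $a$ or any vertex of $D(a)$, so it lies in $R(a)\subseteq B^*$. Since $|O\cap B^*|\le 1$, it follows that $|O\cap\Gamma(a)|\le 1$. If $O\cap\Gamma(a)=\varnothing$, then $O\cup\{a\}$ is feasible, because $a$'s blocks all lie within $\Gamma(a)$; this contradicts optimality, so this case cannot occur. Hence $O\cap\Gamma(a)=\{x\}$ for a single vertex $x$. Set $O':=(O\setminus\{x\})\cup\{a\}$. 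For each $i\in I(a)$, the block $B_i\subseteq\Gamma(a)$ contained only $x$ from $O$, so after the swap it contains only $a$. Blocks outside $I(a)$ lose $x$ or are unchanged, so they remain feasible. Since $|O'|=|O|$, the set $O'$ is an optimum containing $a$.

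The main obstacle is the bookkeeping in part~(i): making sure that "removal" in $\mathcal{B}-D(a)$ (including discarded empty blocks and possible loss of inclusion-reducedness) does not interfere with feasibility transfer. The key observation is that feasibility only concerns $|R\cap B|$ for sets $R$ avoiding $D(a)$, and these counts are identical in $B$ and in $B\setminus D(a)$. In part~(ii) the key point is that $R(a)\subseteq B^*$ caps the number of $O$-vertices in $\Gamma(a)$ at one, which makes the single swap possible.
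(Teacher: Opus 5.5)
Your proof is correct and follows the paper's exchange arguments. The paper proves (i) by deleting $D(a)$ one vertex at a time via Theorem~\ref{thm:doc-dominance}(i); your single swap (using that $|O^\star\cap D(a)|\le 1$ because $I(a)\neq\varnothing$) gets the same result without the changing-support bookkeeping. In (ii) you work with a $D(a)$-avoiding optimum of $\mathcal{B}$ itself and split cases on $O\cap\Gamma(a)$ rather than $O\cap B^*$, which merges the paper's two ``add $a$'' cases but is otherwise the same swap.
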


\begin{proof}
For (i), remove vertices in $D(a)$ one-by-one. For each removed vertex $v\in D(a)$, one has
\(
I(a)\subseteq I(v),
\)
so Theorem~\ref{thm:doc-dominance}(i) applies with pivot $a$, and the optimum value is preserved at each step. Since $D(a)$ is finite, the final reduced instance satisfies
\(
\alpha(\mathcal{B})=\alpha(\mathcal{B}-D(a)).
\)

For (ii), let $\mathcal{B}'$ denote the reduced instance after removing $D(a)$. By (i), $\alpha(\mathcal{B}')=\alpha(\mathcal{B})$. Take an optimal feasible set $O$ for $\mathcal{B}'$.
If $a\in O$, we are done.

Assume $a\notin O$. Since $O$ is feasible and $B^*$ is a block, $|O\cap B^*|\le 1$.

If $O\cap B^*=\varnothing$, then $O\cap R(a)=\varnothing$ because $R(a)\subseteq B^*$. Adding $a$ keeps feasibility: any potential conflict with $a$ must come from a block containing $a$, hence from $\Gamma(a)$, but $D(a)$ was removed and $O$ has no element in $R(a)$. Thus $O\cup\{a\}$ is feasible, contradicting optimality.

So $O\cap B^*=\{x\}$ for some $x$.
If $x\in B^*\setminus\Gamma(a)$, then $x$ is in no block containing $a$, so the same argument shows $O\cup\{a\}$ is feasible, again contradicting optimality.
Therefore necessarily
\[
x\in R(a)\subseteq B^*.
\]
Define
\begin{equation}
O':=(O\setminus\{x\})\cup\{a\}.
\end{equation}
Any block containing $a$ can intersect $O$ only at $x$: its members lie in $\Gamma(a)$; elements of $D(a)$ are absent in $\mathcal{B}'$; and $|O\cap R(a)|=1$ because $R(a)\subseteq B^*$ and $O\cap B^*=\{x\}$. Hence $O'$ is feasible and $|O'|=|O|=\alpha(\mathcal{B}')$.
So $\mathcal{B}'$ has an optimal set containing $a$, and by value equivalence this is also optimal for the original instance.
\end{proof}

For the equality case below, write
\[
\mathcal{B}' := \mathcal{B}-D(a)
\]
for the pivot-reduced instance.

\begin{corollary}[Equality case of pivot canonicalization]
If Theorem~\ref{thm:pivot-dominance-closure}(ii) holds with
\[
B^*=R(a),
\]
then every optimal feasible set $O$ for $\mathcal{B}'$ with $a\notin O$ satisfies
\[
|O\cap R(a)|=1.
\]
In particular, swapping that unique element with $a$ yields an optimal feasible set containing $a$.
\end{corollary}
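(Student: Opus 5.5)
The plan is to reuse the case analysis from the proof of Theorem~\ref{thm:pivot-dominance-closure}(ii), now specialized to $B^*=R(a)$, which makes $B^*\setminus\Gamma(a)$ empty. Throughout, we work in the reduced instance $\mathcal{B}'=\mathcal{B}-D(a)$. By Theorem~\ref{thm:pivot-dominance-closure}(i), $\alpha(\mathcal{B}')=\alpha(\mathcal{B})$.

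First fix an optimal feasible set $O$ for $\mathcal{B}'$ with $a\notin O$. There is one technical point here. The hypothesis says $R(a)$ is a block of $\mathcal{B}$. Since $R(a)\cap D(a)=\varnothing$, it survives unchanged in $\mathcal{B}'$, either as $R(a)$ itself (if it was already disjoint from $D(a)$) or as $B\setminus D(a)=R(a)$ for the original block. Either way $R(a)$ is a constraint in $\mathcal{B}'$, so $|O\cap R(a)|\le 1$. I would state this explicitly, since it is the only place where the reduction interacts with the hypothesis.

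Next I would rule out $|O\cap R(a)|=0$. Any block of $\mathcal{B}'$ containing $a$ is $B_i\setminus D(a)$ for some $i\in I(a)$, so its vertices other than $a$ lie in $\Gamma(a)\setminus(\{a\}\cup D(a))=R(a)$. If $O$ misses $R(a)$, then $O\cup\{a\}$ is feasible in $\mathcal{B}'$. This contradicts maximality of $|O|$, so $|O\cap R(a)|=1$.

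For the swap, let $O\cap R(a)=\{x\}$ and set $O'=(O\setminus\{x\})\cup\{a\}$. By the same observation, every block containing $a$ meets $O\setminus\{x\}$ only inside $R(a)\setminus\{x\}$, which is empty. Blocks not containing $a$ lose an element or are unchanged. Hence $O'$ is feasible, has size $|O|$, and is optimal for $\mathcal{B}'$. It is also optimal for $\mathcal{B}$, because feasibility in $\mathcal{B}'$ of a set avoiding $D(a)$ implies feasibility in $\mathcal{B}$, and the optimal values agree.

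The main obstacle I expect is the bookkeeping above: showing that $R(a)$ really is a block constraint of $\mathcal{B}'$ under the deletion convention that discards empty blocks. A degenerate case also needs a line. If $R(a)=\varnothing$, it cannot be a nonempty block, so the hypothesis forces $R(a)\neq\varnothing$, and the argument goes through.
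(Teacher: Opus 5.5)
Your proof is correct and follows essentially the same argument as the paper: feasibility against the block $R(a)$, which survives unchanged in $\mathcal{B}'$ because $R(a)\cap D(a)=\varnothing$, gives $|O\cap R(a)|\le 1$, and an empty intersection would let you add $a$ to $O$, contradicting optimality. The only difference is that you carry out the swap and the transfer of optimality back to $\mathcal{B}$ explicitly, where the paper cites Theorem~\ref{thm:pivot-dominance-closure}(ii); this is a harmless and slightly more careful elaboration.
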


\begin{proof}
By feasibility and $B^*=R(a)$, one has $|O\cap R(a)|\le 1$. If the intersection were empty, then $O\cup\{a\}$ would be feasible in $\mathcal{B}'$ (same argument as in the theorem proof), contradicting optimality. Hence the intersection size is exactly one, and the swap conclusion follows from Theorem~\ref{thm:pivot-dominance-closure}(ii).
\end{proof}

The equality case also has a useful downstream consequence for the closed-form block-weight bound introduced in the next section. In the equality case $B^*=R(a)$, one may canonically keep $a$, delete $D(a)$, and then apply Theorem~\ref{thm:block-weight-upper} to the reduced instance $\mathcal{B}-D(a)$. This yields a weakly smaller computable upper bound than applying Theorem~\ref{thm:block-weight-upper} directly to the original family, although the improvement is only local and typically marginal.

\begin{example}[Dominance closure and equality-case swapping]
\label{ex:dominance-closure}
Let
\[
\mathcal{B}=\bigl\{
\{a,d,u,v\},\{a,d\},\{u,v\},\{u,x\},\{v,y\},\{x,y\}
\bigr\}.
\]
All vertices have incidence degree at least two. For pivot $a$,
\[
I(a)=\{1,2\},\qquad I(d)=\{1,2\},
\]
so $d\in D(a)$ and dominance removal deletes $d$ without changing the optimum. In the reduced instance, $R(a)=\{u,v\}$, and this set is exactly the block $B^*=\{u,v\}$. Theorem~\ref{thm:pivot-dominance-closure}(ii) therefore guarantees an optimal feasible set containing $a$.

The equality-case corollary describes the only obstruction to already adding $a$. Any optimum excluding $a$ must contain exactly one vertex from $\{u,v\}$. For example, $\{u,y\}$ is feasible and optimal in the reduced instance; swapping the unique element $u\in R(a)$ with $a$ gives the equally large optimum $\{a,y\}$.
\end{example}

This dominance theorem also explains why we do not state separate private-shadow certificates. If a blocker $u$ of a pivot $a$ has a shadow $z$ outside $\Gamma(a)$ with
\[
I(z)\subseteq I(u)\setminus I(a),
\]
then $I(z)\subseteq I(u)$, so $u$ is already removable by Theorem~\ref{thm:doc-dominance}(i). Thus such shadow arguments are best viewed as dominance preprocessing followed by the pivot canonicalization above, not as independent structural rules.

The final characterization separates structural optimality from algorithmic attainability. It identifies an incidence-block pattern whose optimum is certified exactly, regardless of whether a particular greedy chronology finds the displayed optimal set.

We first fix the reduced-instance notation used in the certificate. Perform vertex-level dominance suppression by choosing a set $D\subseteq V$ such that every $v\in D$ has a surviving witness $r(v)\in V\setminus D$ with
\[
I(r(v))\subseteq I(v).
\]
Define the dominance-reduced block family by
\[
\mathcal{B}^{-}
:=
\{B\setminus D:\ B\in\mathcal{B},\ B\setminus D\neq\varnothing\},
\]
index it as
\[
\mathcal{B}^{-}=\{B_i^-:i\in\mathcal{I}^{-}\},
\]
and let $I_-(v)$, $d_-(v)$, and $w_-(\cdot)$ denote incidence sets, degrees, and block weights computed inside $\mathcal{B}^{-}$.
Suppose the block-index set $\mathcal{I}^{-}$ is partitioned into nonempty blocks
\[
\mathcal{J}_1,\dots,\mathcal{J}_m,
\]
and for each $t\in\{1,\dots,m\}$ there exists a surviving vertex $z_t\in V\setminus D$ such that
\[
I_-(z_t)=\mathcal{J}_t.
\]
Set
\[
Z:=\{z_1,\dots,z_m\}.
\]
The following certificate says that, after support-superset suppression, these block representatives already form an optimum.

\begin{theorem}[Dominance-reduced incidence-block optimality certificate]
\label{thm:incidence-block-optimality}
Assume that every reduced block in the same block has block-size weight:
\[
w_-(B_i^-)=|\mathcal{J}_t|
\qquad
\forall t\in\{1,\dots,m\},\ \forall i\in\mathcal{J}_t.
\]
Then $Z$ is a maximum feasible set for the reduced family and for the original family. In particular,
\begin{equation}
\alpha(\mathcal{B})=m.
\end{equation}
\end{theorem}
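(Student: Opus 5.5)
Two inequalities, $\alpha(\mathcal{B}^-)\ge m$ and $\alpha(\mathcal{B}^-)\le m$, followed by a transfer to the original family via Theorem~\ref{thm:doc-dominance}(i).

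\textbf{Lower bound: $Z$ is feasible.} Since the $\mathcal{J}_t$ are disjoint and $I_-(z_t)=\mathcal{J}_t$, no reduced block $B_i^-$ contains two distinct representatives. Indeed, $i\in I_-(z_s)\cap I_-(z_t)$ forces $s=t$. The $z_t$ are distinct because their supports are disjoint and nonempty. Hence $Z$ is feasible for $\mathcal{B}^-$ and $\alpha(\mathcal{B}^-)\ge m$.

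\textbf{Upper bound: each class absorbs at most one vertex.} Fix a feasible $R$ for $\mathcal{B}^-$ and a class $\mathcal{J}_t$. I claim $R$ contains at most one vertex whose reduced support meets $\mathcal{J}_t$. Suppose $x\in R$ lies in some $B_i^-$ with $i\in\mathcal{J}_t$. By the weight hypothesis $d_-(x)\ge w_-(B_i^-)=|\mathcal{J}_t|$.

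What I need is stronger: $I_-(x)\supseteq\mathcal{J}_t$, i.e.\ $x$ lies in every block of the class. Then two distinct $x,y\in R$ touching $\mathcal{J}_t$ would both lie in $B_i^-$ for any fixed $i\in\mathcal{J}_t$, contradicting feasibility. The degree bound alone does not give this, and this is the step I expect to be the main obstacle.

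My first attempt is a charging argument. Map each $x\in R$ to the set of classes it touches. If that fails, I would use the dominance normalization to replace $x$ by $z_t$ when $I_-(x)\subseteq\mathcal{J}_t$, so that $x$ meets only $\mathcal{J}_t$. Otherwise $x$ spans several classes and so consumes a block in each of them.

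The counting is then a weighted double count. For each $x\in R$ and each class $t$ it touches, give weight $|I_-(x)\cap\mathcal{J}_t|/|\mathcal{J}_t|$. Feasibility means each block index $i$ is used by at most one $x\in R$, so the total weight charged to class $t$ is at most $|\mathcal{J}_t|/|\mathcal{J}_t|=1$. The total weight over all classes is therefore at most $m$.

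It remains to show each $x\in R$ carries total weight at least $1$:
\[
\sum_t \frac{|I_-(x)\cap\mathcal{J}_t|}{|\mathcal{J}_t|}\ \ge\ 1 .
\]
Let $t_0$ minimize $|\mathcal{J}_t|$ among the classes $x$ touches. The weight hypothesis gives $d_-(x)\ge|\mathcal{J}_t|$ for each such $t$, hence
\[
\sum_t\frac{|I_-(x)\cap\mathcal{J}_t|}{|\mathcal{J}_t|}\ \ge\ \sum_t\frac{|I_-(x)\cap\mathcal{J}_t|}{\max_s|\mathcal{J}_s|}
\]
over the touched classes. This is where care is needed. My plan is to refine the bound as $\sum_t |I_-(x)\cap\mathcal{J}_t|/|\mathcal{J}_t|\ge 1$ by using, for each incident block $i\in\mathcal{J}_t$, the inequality $1/|\mathcal{J}_t|=1/w_-(B_i^-)\ge 1/d_-(x)$. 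Summing over the $d_-(x)$ incident indices then gives at least $d_-(x)\cdot(1/d_-(x))=1$. This is the fractional-cover inequality $\sum_{i\in I_-(x)}1/w_-(B_i^-)\ge1$, which does hold since $w_-(B_i^-)\le d_-(x)$ for every $i\in I_-(x)$. So $|R|\le\sum_t 1=m$, giving $\alpha(\mathcal{B}^-)\le m$.

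\textbf{Transfer to $\mathcal{B}$.} Every $v\in D$ has a surviving witness $r(v)\in V\setminus D$ with $I(r(v))\subseteq I(v)$. Delete $D$ one vertex at a time, each time invoking Theorem~\ref{thm:doc-dominance}(i) with dominator $r(v)$. Since $r(v)$ survives, its dominance relation still holds at every stage, and supports only shrink consistently. This gives $\alpha(\mathcal{B})=\alpha(\mathcal{B}^-)=m$.

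Finally, $Z$ is also feasible in $\mathcal{B}$. Any original block $B$ with $|Z\cap B|\ge2$ would give a reduced block $B\setminus D$ containing those same vertices of $Z$, because $Z\cap D=\varnothing$. That contradicts the feasibility of $Z$ in $\mathcal{B}^-$. Hence $Z$ is maximum for $\mathcal{B}$ as well.
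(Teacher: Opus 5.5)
Your final argument is correct and is essentially the paper's proof. The paper spreads each retained vertex's unit charge as $1/d_-(v)$ over its blocks and caps what each block receives by $1/w_-(B_i^-)=1/|\mathcal{J}_t|$. You instead sum $1/w_-(B_i^-)$ over a vertex's incident blocks and bound that sum below by $1$, which is the same inequality $w_-(B_i^-)\le d_-(x)$ read in the other direction. Your feasibility argument and witness-based dominance transfer also match the paper's, and you add the small point that the $z_t$ are distinct.

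The interim claim that each class meets at most one vertex of $R$ is false in general. For example, take $B_1=\{z_1,x\}$, $B_2=\{z_1,y\}$, $B_3=\{z_2,x\}$, $B_4=\{z_2,y\}$ with $\mathcal{J}_1=\{1,2\}$, $\mathcal{J}_2=\{3,4\}$ and $R=\{x,y\}$. The $\max_s|\mathcal{J}_s|$ detour is also unneeded. Neither enters the final count, so you should simply delete both.
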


\begin{proof}
For every $v\in D$, the witness $r(v)$ survives the suppression pass and satisfies $I(r(v))\subseteq I(v)$. Deleting the vertices of $D$ one by one is therefore value-preserving by Theorem~\ref{thm:doc-dominance}(i), and empty blocks impose no feasibility constraint. Hence
\[
\alpha(\mathcal{B}^{-})=\alpha(\mathcal{B}).
\]

In $\mathcal{B}^{-}$, each $z_t$ has support $\mathcal{J}_t$. Hence $Z$ is feasible: if $i\in\mathcal{J}_t$, then $z_s\in B_i^-$ iff $i\in I_-(z_s)=\mathcal{J}_s$, which happens only for $s=t$ because the blocks form a partition.

Now let $O$ be any feasible set for $\mathcal{B}^{-}$. Assign one unit of charge to each $v\in O$ and distribute it evenly over the $d_-(v)$ reduced blocks containing $v$. Each block receives charge from at most one retained vertex. If $i\in\mathcal{J}_t$ and $O\cap B_i^-=\{v\}$, then
\[
d_-(v)\ge w_-(B_i^-)=|\mathcal{J}_t|,
\]
so the reduced block receives at most $1/|\mathcal{J}_t|$ charge. Therefore the total charge received by reduced blocks in block $\mathcal{J}_t$ is at most
\[
\sum_{i\in\mathcal{J}_t}\frac{1}{|\mathcal{J}_t|}=1.
\]
Summing over the $m$ blocks gives $|O|\le m$. Since $Z$ is feasible in $\mathcal{B}^{-}$ and $|Z|=m$, it is maximum in $\mathcal{B}^{-}$. The dominance deletion gives $\alpha(\mathcal{B})=\alpha(\mathcal{B}^{-})=m$. Because $Z\cap D=\varnothing$, the same set $Z$ is feasible for the original family $\mathcal{B}$ as well.
\end{proof}

\begin{runningexamplepart}[Running example: dominance-reduced incidence blocks]
\label{ex:running-incidence-block}
Return to the running example in Example~\ref{ex:running-example-family}. First perform the support-superset suppression
\[
D=\{u,f,g\},
\]
because
\[
\begin{aligned}
I(a)&=\{1\}\subsetneq I(u)=\{1,2\},\\
I(b)&=\{2,3\}\subsetneq I(f)=\{2,3,9\},\\
I(e)&=\{4,5\}\subsetneq I(g)=\{4,5,10\}.
\end{aligned}
\]
The reduced family $\mathcal{B}^{-}$ keeps the same block labels. In $\mathcal{B}^{-}$, take the same block partition later used by the greedy trace:
\[
\mathcal{J}_1=\{1\},\qquad
\mathcal{J}_2=\{2,3\},\qquad
\mathcal{J}_3=\{4,5\},\qquad
\mathcal{J}_4=\{6,7,8\},\qquad
\mathcal{J}_5=\{9,10,11,12\},
\]
with representatives
\[
z_1=a,\qquad z_2=b,\qquad z_3=e,\qquad z_4=c,\qquad z_5=r.
\]
The reduced family satisfies the block-size weight condition:
\[
\begin{aligned}
w_-(B_1)&=1,\qquad
w_-(B_2)=\cdots=w_-(B_5)=2,\\
w_-(B_6)&=w_-(B_7)=w_-(B_8)=3,\qquad
w_-(B_9)=\cdots=w_-(B_{12})=4.
\end{aligned}
\]
Thus Theorem~\ref{thm:incidence-block-optimality} certifies
\[
Z=\{a,b,e,c,r\}
\]
as a maximum feasible set and gives
\[
\alpha(\mathcal{B})=5.
\]
The unreduced closed-form sum is larger, $U(\mathcal{B})=31/6$, because $f$ and $g$ artificially lower the weights of $B_9$ and $B_{10}$ from $4$ to $3$. The dominance-reduced certificate removes exactly this distortion. Section~5 shows that the greedy layered clustering algorithm realizes the same suppression intrinsically: the low-weight roots $a,b,e$ absorb $u,f,g$ before the higher-weight layers are processed.
\end{runningexamplepart}

Except for incidence-twin compression and the weight-1 canonical form, which are used below as preprocessing and first certificate steps, these exact characterization rules are optional tools: they can shrink or partially canonicalize an instance without changing the optimum, but the scalable certificates and algorithms below do not rely on the more expensive dominance-closure preprocessing being applied.

\section{Bounds and Certificates for Maximum Strong Independent Sets}

\subsection{Closed-Form and Low-Weight Certificates}

Exact characterizations are useful when their hypotheses hold; at billion-vertex scale we also need certificates that are cheap to compute on arbitrary block families. With block weight already defined, the first certificate is the closed-form block-weight sum.

\begin{definition}[Closed-form upper-bound value]
For any block family $\mathcal{F}$ with intrinsic degree function $d_{\mathcal{F}}(\cdot)$ and induced block weights
\begin{equation}
w_{\mathcal{F}}(B):=\min_{v\in B} d_{\mathcal{F}}(v),
\end{equation}
define
\begin{equation}
U(\mathcal{F}) := \sum_{B\in\mathcal{F}} \frac{1}{w_{\mathcal{F}}(B)}.
\end{equation}
In particular, for the ambient family $\mathcal{B}$ this is exactly
\begin{equation}
U(\mathcal{B})=\sum_{B\in\mathcal{B}} \frac{1}{w(B)}.
\end{equation}
\end{definition}

\begin{theorem}[Closed-form upper bound]
\label{thm:block-weight-upper}
For every feasible retained set $R\subseteq V$,
\begin{equation}
|R| \le U(\mathcal{B}).
\end{equation}
\end{theorem}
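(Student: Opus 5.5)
The plan is a fractional charging argument, the same one used in the proof of Theorem~\ref{thm:incidence-block-optimality} but without any partition into groups. Fix a feasible retained set $R\subseteq V$. Give each $v\in R$ one unit of charge and spread it evenly over the $d(v)$ blocks containing $v$, so each such block receives $1/d(v)$ from $v$. Every $v\in V$ lies in at least one block, so $d(v)\ge 1$ and the division is well defined. Since no charge is lost,
\[
|R|=\sum_{v\in R}\sum_{i\in I(v)}\frac{1}{d(v)}=\sum_{B\in\mathcal{B}}\ \sum_{v\in R\cap B}\frac{1}{d(v)}.
\]
This is just a double count of the incidence pairs $(v,B)$ with $v\in R\cap B$, exchanging the order of summation.

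Next I bound each block's received charge separately. By feasibility, $|R\cap B|\le 1$. If $R\cap B=\varnothing$, the block receives $0\le 1/w(B)$. If $R\cap B=\{v\}$, then $d(v)\ge\min_{u\in B}d(u)=w(B)$ by the definition of block weight, so the block receives $1/d(v)\le 1/w(B)$. Summing over $B\in\mathcal{B}$ gives $|R|\le\sum_{B}1/w(B)=U(\mathcal{B})$.

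There is no real obstacle here; the only step to state carefully is the interchange of summation, since it is what turns the per-vertex unit charges into per-block contributions. It is also worth remarking that the argument does not use the inclusion-reduced convention, so the bound holds for any finite family, including the reduced and quotient families to which it is applied later.
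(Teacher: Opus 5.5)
Your proposal is correct and is essentially the paper's own proof: the paper also spreads one unit of charge from each retained vertex evenly over its $d(v)$ incident blocks, uses feasibility to say each block receives charge from at most one retained vertex, and bounds that charge by $1/w(B)$. Your explicit summation-interchange identity and your remark that the inclusion-reduced convention is not needed are accurate additions.
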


\begin{proof}
Assign one unit of charge to each retained vertex $v\in R$, and distribute it evenly among the $d(v)$ blocks containing $v$. Each incident block receives charge $1/d(v)$. Since $R$ is feasible, each block receives charge from at most one retained vertex. If $v\in R\cap B$, then by definition $w(B)\le d(v)$, hence $1/d(v)\le 1/w(B)$. Summing over all blocks, the total received charge is at most $U(\mathcal{B})$. On the other hand, each retained vertex contributes total charge 1, so the total charge equals $|R|$.
\end{proof}

The intuitive interpretation is straightforward. A vertex with large $d(v)$ is globally entangled: choosing it consumes capacity across many blocks. A block containing a degree-1 or degree-2 vertex is therefore structurally sharp and should be resolved early.

\begin{runningexamplepart}[Running example: closed-form bound]
\label{ex:running-example-bound}
Applying Theorem~\ref{thm:block-weight-upper} directly to the unreduced running example gives
\[
U(\mathcal{B})
=
1+4\cdot\frac12+5\cdot\frac13+2\cdot\frac14
=
\frac{31}{6}.
\]
The dominance-reduced certificate is sharper. The representatives
\[
z_1=a,\quad z_2=b,\quad z_3=e,\quad z_4=c,\quad z_5=r
\]
have supports
\[
\{1\},\quad \{2,3\},\quad \{4,5\},\quad \{6,7,8\},\quad \{9,10,11,12\},
\]
and their dominated support-superset vertices include
\[
u,\ f,\ g
\quad\text{with}\quad
I(a)\subsetneq I(u),\ I(b)\subsetneq I(f),\ I(e)\subsetneq I(g).
\]
After deleting $u,f,g$, Theorem~\ref{thm:incidence-block-optimality} applies to the reduced family and gives
\[
U(\mathcal{B}^{-})
=
1+4\cdot\frac12+3\cdot\frac13+4\cdot\frac14
=
5.
\]
Thus the optimal value is already certified as
\[
\alpha(\mathcal{B})=5.
\]
Section~5 will return to this same instance and show that the greedy layered clustering algorithm performs the same first-order superset suppression and attains this value exactly.
\end{runningexamplepart}

The canonical optimal form for weight-1 blocks (Theorem~\ref{thm:w1-canonical}) yields the following computable preclusion bound. Let
\begin{equation}
\mathcal{B}_1:=\{B\in\mathcal{B}:w(B)=1\},\qquad
\mathcal{B}_{>1}:=\mathcal{B}\setminus\mathcal{B}_1,\qquad
V_1:=\bigcup_{B\in\mathcal{B}_1} B.
\end{equation}
For each $B\in\mathcal{B}_{>1}$ define the residual block
\begin{equation}
B^\circ:=B\setminus V_1,
\end{equation}
and keep only nonempty residual blocks
\begin{equation}
\mathcal{B}^\circ:=\{B^\circ: B\in\mathcal{B}_{>1},\ B^\circ\neq\varnothing\}.
\end{equation}
On $\mathcal{B}^\circ$, residual vertex degrees equal the original degrees of the surviving vertices, so the residual block weights are
\begin{equation}
w^\circ(B^\circ):=\min_{v\in B^\circ} d(v),
\end{equation}
and
\begin{equation}
U(\mathcal{B}^\circ)=\sum_{B^\circ\in\mathcal{B}^\circ}\frac{1}{w^\circ(B^\circ)}.
\end{equation}

\begin{corollary}[Weight-1 preclusion bound]
\label{cor:w1-elim-upper}
Every feasible retained set $R$ satisfies
\begin{equation}
|R|
\le
|\mathcal{B}_1|
\;+\;
U(\mathcal{B}^\circ).
\end{equation}
Moreover,
\begin{equation}
|\mathcal{B}_1|
\;+\;
U(\mathcal{B}^\circ)
\le
U(\mathcal{B}),
\end{equation}
so this is never weaker than Theorem~\ref{thm:block-weight-upper}.
\end{corollary}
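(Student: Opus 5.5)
The plan is to prove both inequalities by direct comparison, reusing the charging argument of Theorem~\ref{thm:block-weight-upper} on the residual family. Theorem~\ref{thm:w1-canonical} only motivates the construction and is not needed, since the bound must hold for every feasible $R$, not just for an optimum. Throughout I will treat $\mathcal{B}^\circ$ as indexed by $B\in\mathcal{B}_{>1}$, so that two distinct blocks with equal residuals are counted separately, both in degrees and in $U(\mathcal{B}^\circ)$. This is the convention that makes the stated identity ``residual degree $=$ original degree'' literally true, and I will state it explicitly at the start.

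\textbf{Step 1 (split $R$).} Fix a feasible $R$ and write $R=(R\cap V_1)\,\cup\,(R\setminus V_1)$.

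\textbf{Step 2 (the $V_1$ part).} Every $v\in R\cap V_1$ lies in at least one block of $\mathcal{B}_1$; assign $v$ to one such block. Feasibility gives $|R\cap B|\le 1$ for each $B\in\mathcal{B}_1$, so this assignment is injective and
\[
|R\cap V_1|\le|\mathcal{B}_1|.
\]

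\textbf{Step 3 (the residual part).} Let $v\in R\setminus V_1$.
\begin{itemize}[leftmargin=1.6em,itemsep=0.15em]
\item Since $v\notin V_1$, no block of $\mathcal{B}_1$ contains $v$. Hence every block containing $v$ lies in $\mathcal{B}_{>1}$, and $v\in B^\circ\neq\varnothing$ for each such block.
\item Under the indexed convention, the residual degree of $v$ in $\mathcal{B}^\circ$ is therefore exactly $d(v)$. This justifies the formula $w^\circ(B^\circ)=\min_{v\in B^\circ}d(v)$, which is the intrinsic weight of $\mathcal{B}^\circ$.
\item $R\setminus V_1$ is feasible for $\mathcal{B}^\circ$, because $|(R\setminus V_1)\cap B^\circ|\le|R\cap B|\le1$.
\end{itemize}
Applying Theorem~\ref{thm:block-weight-upper} to the family $\mathcal{B}^\circ$ then gives
\[
|R\setminus V_1|\le U(\mathcal{B}^\circ).
\]
Alternatively, one can rerun the charging directly: each $v$ spreads one unit evenly over its $d(v)$ residual blocks, and each residual block receives at most $1/w^\circ(B^\circ)$. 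Adding Steps~2 and~3 gives the first inequality.

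\textbf{Step 4 (comparison with $U(\mathcal{B})$).} Split
\[
U(\mathcal{B})=\sum_{B\in\mathcal{B}_1}1+\sum_{B\in\mathcal{B}_{>1}}\frac{1}{w(B)}.
\]
The first sum is exactly $|\mathcal{B}_1|$. For each $B\in\mathcal{B}_{>1}$ with $B^\circ\ne\varnothing$, the minimum defining $w^\circ(B^\circ)$ is taken over a subset of $B$ with the same degree function $d$. So $w^\circ(B^\circ)\ge w(B)$, and hence $1/w^\circ(B^\circ)\le 1/w(B)$. Blocks with $B^\circ=\varnothing$ contribute nothing to $U(\mathcal{B}^\circ)$ but contribute a positive term to $U(\mathcal{B})$. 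Summing termwise gives $|\mathcal{B}_1|+U(\mathcal{B}^\circ)\le U(\mathcal{B})$.

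The only delicate point I expect is the bookkeeping in Step~3: the degree identity and the term-by-term comparison need residual blocks indexed by their parents. Two subtleties could break this.
\begin{itemize}[leftmargin=1.6em,itemsep=0.15em]
\item Distinct $B$ could share the same $B^\circ$, and merging them would change both degrees and the sum.
\item A residual block could become contained in another, and inclusion-reducing $\mathcal{B}^\circ$ would have the same effect.
\end{itemize}
If one insists on deduplicating or inclusion-reducing $\mathcal{B}^\circ$, degrees can only drop, which breaks the bound. So the fix is to keep the indexed family, as the statement implicitly does.
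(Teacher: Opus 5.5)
Your proof is correct and follows the paper's argument step for step: split $R$ along $V_1$, count $R\cap V_1$ injectively against $\mathcal{B}_1$, apply the charging bound of Theorem~\ref{thm:block-weight-upper} to $\mathcal{B}^\circ$ using that surviving vertices keep their original degree, and compare termwise via $w^\circ(B^\circ)\ge w(B)$. Your explicit convention that $\mathcal{B}^\circ$ is indexed by its parent blocks in $\mathcal{B}_{>1}$ is what the paper's claim ``residual degrees equal original degrees'' tacitly requires, and the caution is warranted: with $\mathcal{B}^\circ$ read literally as a set, two parents $\{a,x,y\},\{a',x,y\}$ (where $a,a'\in V_1$) collapse to one residual block, and the first inequality can then fail.
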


\begin{proof}
Split $R$ as $R=(R\cap V_1)\,\cup\,(R\setminus V_1)$.
For the first part, each $x\in R\cap V_1$ belongs to at least one block in $\mathcal{B}_1$, hence
\begin{equation}
|R\cap V_1|
\le
\sum_{B\in\mathcal{B}_1}|R\cap B|
\le
|\mathcal{B}_1|.
\end{equation}

Now consider $R_0:=R\setminus V_1\subseteq V\setminus V_1$. For every $B\in\mathcal{B}_{>1}$,
\begin{equation}
|R_0\cap B^\circ|=|R_0\cap B|\le 1,
\end{equation}
so $R_0$ is feasible for residual blocks $\mathcal{B}^\circ$.
If $v\in V\setminus V_1$, then $v$ is in no block of $\mathcal{B}_1$, hence its degree over $\mathcal{B}^\circ$ is exactly $d(v)$, and the block-weight theorem applies with weights $w^\circ$:
\begin{equation}
|R_0|\le U(\mathcal{B}^\circ).
\end{equation}
Adding both parts gives the claimed refined bound.

For the second inequality, $\sum_{B\in\mathcal{B}_1}1/w(B)=|\mathcal{B}_1|$.
Also for each $B^\circ$ induced from $B\in\mathcal{B}_{>1}$, one has
\begin{equation}
w^\circ(B^\circ)=\min_{v\in B^\circ}d(v)\ge \min_{v\in B}d(v)=w(B),
\end{equation}
thus $1/w^\circ(B^\circ)\le 1/w(B)$, and summing yields the result.
\end{proof}

The next corollary gives a complementary achievability certificate for residual weight-2 structure. Unlike the weight-1 case, weight 2 alone does not force a canonical root in each block. However, the degree-2 vertices define an ordinary graph-matching problem over block incidences in the sense of Edmonds' blossom algorithm \cite{edmonds1965}. The size of such a matching is always a feasible retained count, and it certifies optimality exactly when it saturates the integer closed-form bound.

For this certificate, let $\mathcal{C}$ be a residual block family on active universe
\[
V_{\mathcal{C}}:=\bigcup_{B\in\mathcal{C}}B,
\]
with intrinsic degrees $d_{\mathcal{C}}(\cdot)$ and intrinsic block weights $w_{\mathcal{C}}(\cdot)$. Construct a graph $G_2(\mathcal{C})$ whose vertices are the blocks in $\mathcal{C}$ and whose edges are the degree-2 vertices: for each vertex $v$ with $d_{\mathcal{C}}(v)=2$ and incident blocks $B_i,B_j\in\mathcal{C}$, add an edge $(B_i,B_j)$ labeled by $v$.
Assume
\[
w_{\mathcal{C}}(B)=2
\qquad
\forall B\in\mathcal{C},
\]
and let $M$ be any matching in $G_2(\mathcal{C})$.

\begin{corollary}[Saturated weight-2 matching certificate]
\label{cor:w2-matching-certificate}
The vertices labeling the matched edges form a feasible retained set for $\mathcal{C}$, and
\begin{equation}
|M|
\le
\alpha(\mathcal{C})
\le
\left\lfloor\frac{|\mathcal{C}|}{2}\right\rfloor
=
\left\lfloor U(\mathcal{C})\right\rfloor.
\end{equation}
Consequently, if $|M|=\lfloor |\mathcal{C}|/2\rfloor$, then the matched vertices form an optimal retained set for $\mathcal{C}$.
\end{corollary}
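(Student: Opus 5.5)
The proof has three parts: feasibility of the matched vertices, the upper bound via Theorem~\ref{thm:block-weight-upper}, and the identification of that bound with $\lfloor|\mathcal{C}|/2\rfloor$.

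\textbf{Feasibility.} Let $R_M$ be the set of vertex labels on the edges of $M$. Take any block $B\in\mathcal{C}$. A vertex $v\in R_M$ lies in $B$ exactly when the edge labeled by $v$ is incident to the node $B$ in $G_2(\mathcal{C})$. This holds because $d_{\mathcal{C}}(v)=2$ and the edge of $v$ joins precisely its two incident blocks.

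Since $M$ is a matching, at most one matched edge meets the node $B$, so $|R_M\cap B|\le 1$. Distinct edges carry distinct labels, so $|R_M|=|M|$. This gives $|M|\le\alpha(\mathcal{C})$.

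\textbf{Upper bound.} Apply Theorem~\ref{thm:block-weight-upper} to the family $\mathcal{C}$, with its intrinsic degrees and weights. Because $w_{\mathcal{C}}(B)=2$ for every block, the sum collapses to $U(\mathcal{C})=\sum_{B\in\mathcal{C}}1/2=|\mathcal{C}|/2$. Every feasible set therefore has size at most $|\mathcal{C}|/2$. Since $\alpha(\mathcal{C})$ is an integer, $\alpha(\mathcal{C})\le\lfloor|\mathcal{C}|/2\rfloor=\lfloor U(\mathcal{C})\rfloor$.

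\textbf{Possible obstacle.} The one point needing care is the hypothesis of Theorem~\ref{thm:block-weight-upper}. That theorem is stated for the ambient family, but its charging proof uses only the family's own degrees. So it applies verbatim to any family $\mathcal{C}$ with $d_{\mathcal{C}}$ and $w_{\mathcal{C}}$, exactly as in the proof of Corollary~\ref{cor:w1-elim-upper}.

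A second small check: the graph $G_2(\mathcal{C})$ has no loops, since a degree-2 vertex lies in two distinct blocks. It may have parallel edges, but a matching still uses at most one edge per node, so the argument is unaffected.

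\textbf{Optimality.} If $|M|=\lfloor|\mathcal{C}|/2\rfloor$, the displayed chain forces $|M|=\alpha(\mathcal{C})$. The feasible set $R_M$ then attains the maximum, which is the stated conclusion.
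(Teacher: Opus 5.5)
Your proposal is correct and follows essentially the same route as the paper's proof: you apply Theorem~\ref{thm:block-weight-upper} intrinsically to $\mathcal{C}$ to get $U(\mathcal{C})=|\mathcal{C}|/2$ and take the floor by integrality. You then read feasibility of the labeled vertices directly off the matching property and conclude optimality when the chain saturates. Your extra checks on applicability of the charging bound, on loops and parallel edges, and on $|R_M|=|M|$ are sound and only make explicit what the paper leaves implicit.
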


\begin{proof}
Since every block in $\mathcal{C}$ has intrinsic weight $2$, Theorem~\ref{thm:block-weight-upper} applied to $\mathcal{C}$ gives
\[
\alpha(\mathcal{C})
\le
U(\mathcal{C})
=
\sum_{B\in\mathcal{C}}\frac{1}{2}
=
\frac{|\mathcal{C}|}{2}.
\]
The left side is an integer, so
\[
\alpha(\mathcal{C})
\le
\left\lfloor\frac{|\mathcal{C}|}{2}\right\rfloor.
\]
Now retain the vertex labeling each matched edge of $M$. Because $M$ is a matching, no two selected degree-2 vertices share an incident block. Hence every block in $\mathcal{C}$ contains at most one selected vertex, so the selected vertices are feasible. This gives $|M|\le\alpha(\mathcal{C})$. If $|M|=\lfloor|\mathcal{C}|/2\rfloor$, the feasible matching construction attains the integer upper bound and is therefore optimal.
\end{proof}

As a minimal saturated example, take four blocks arranged in a cycle by degree-2 vertices:
\[
C_1=\{x_{12},x_{41}\},\quad
C_2=\{x_{12},x_{23}\},\quad
C_3=\{x_{23},x_{34}\},\quad
C_4=\{x_{34},x_{41}\}.
\]
Every block has intrinsic weight $2$, so $U(\mathcal{C})=2$. The matching $\{(C_1,C_2),(C_3,C_4)\}$, labeled by $x_{12}$ and $x_{34}$, saturates the bound and certifies that $\{x_{12},x_{34}\}$ is optimal.

This certificate is deliberately only sufficient. It can hold in the presence of higher-degree vertices, but it certifies optimality only when the degree-2 layer already saturates the integer block-weight bound. It is therefore not a weight-2 analogue of Corollary~\ref{cor:w1-elim-upper}: there is no unconditional weight-2 preclusion pass, no automatic additive term to peel off, and no guaranteed bound improvement from the mere fact that a block has weight 2. Rather, Corollary~\ref{cor:w2-matching-certificate} is an achievability check showing that, in the special saturated-matching case, the closed-form block-weight bound is tight.

The closed-form and weight-1 certificates above are evaluated on a fixed active block family. The next subsection treats them as reusable primitives inside a residual search: support-superset sharpening and iterative puncturing repeatedly modify the active family, recompute intrinsic weights, and keep the smallest resulting valid upper bound $U_{\mathrm{punct}}$.

\subsection{Layered Superset Sharpening and Puncturing}

The cost discussion after Theorem~\ref{thm:doc-dominance} explains why we do not run full exact superset suppression as a global preprocessing pass. The next algorithm uses the same dominance principle more selectively: it looks only at supports revealed by current block-weight vertices, deletes the higher-degree support-superset vertices certified by those witnesses, and uses the resulting weight increases to sharpen the closed-form certificate. We abbreviate this procedure as LSS in the empirical tables.

\begin{algorithm}[Layered Superset Sharpening]
\label{alg:layered-superset-sharpening}
Given an active block family $\mathcal{F}=\{F_1,\dots,F_m\}$ whose closed-form certificate will be evaluated after the surrounding routine has exhausted its weight-1 preclusion step, perform the following pass.
\begin{enumerate}[leftmargin=1.6em,label=\arabic*.,itemsep=0.15em]
    \item First perform incidence-twin compression on the remaining active family, then compute vertex degrees
    \[
    d_{\mathcal{F}}(v)=|\{i:v\in F_i\}|
    \]
    and block weights $w_{\mathcal{F}}(F_i)=\min_{v\in F_i}d_{\mathcal{F}}(v)$.
    \item Process block-weight layers from smallest to largest. In layer $w$, build a fresh local incidence map
    \[
    J_w(v):=\{i:v\in F_i,\ w_{\mathcal{F}}(F_i)=w\}.
    \]
    Maintain the accumulated support
    \[
    P(v):=\bigcup_{\lambda\le w}J_\lambda(v)
    \]
    over layers already scanned.
    \item Whenever a vertex $a$ satisfies
    \[
    |P(a)|=d_{\mathcal{F}}(a)=w,
    \]
    its full live support $I_{\mathcal{F}}(a)=P(a)$ has been revealed exactly at its own block-weight layer. Such a vertex is a block-weight witness for that support.
    \item Intersect the blocks in this revealed support,
    \[
    C(a):=\bigcap_{i\in P(a)}F_i,
    \]
    and delete every active $b\in C(a)$ with $d_{\mathcal{F}}(b)>w$ from the residual certificate family. Exact identity/equal-support cases are handled by the incidence-twin compression step at the start of the pass.
    \item After a support $P(a)$ has been processed, remove $a$ from the accumulated map to save memory. After a nonempty deletion pass, rebuild incidence twins, recompute degrees and block weights, and repeat until no block-weight witness deletes a vertex.
\end{enumerate}
\end{algorithm}

For the next result and its cost discussion, let
\[
I_{\mathcal{F}}:=\sum_{F\in\mathcal{F}}|F|
\]
denote the active incidence size for one sharpening pass, and let $T_{\cap}$ denote the total cost of the block intersections performed for newly revealed supports.
Let $\mathcal{F}^{\sharp}$ denote the terminal active family after Algorithm~\ref{alg:layered-superset-sharpening} stops.
For a single nonempty deletion pass, write $\mathcal{F}'$ for the family obtained after deleting the certified support-superset vertices and discarding any empty blocks.

\begin{proposition}[Correctness and fixed point of layered superset sharpening]
\label{prop:layered-superset-sharpening}
Every support-superset deletion made by Algorithm~\ref{alg:layered-superset-sharpening} preserves $\alpha(\mathcal{F})$. Moreover, if $\mathcal{F}'$ is obtained from $\mathcal{F}$ by one nonempty deletion pass, then
\[
\sum_{F'\in\mathcal{F}'}\frac{1}{w_{\mathcal{F}'}(F')}
\le
\sum_{F\in\mathcal{F}}\frac{1}{w_{\mathcal{F}}(F)}.
\]
In $\mathcal{F}^{\sharp}$, no active block $F$, block-weight witness $a\in F$, and active vertex $b$ satisfy
\[
d_{\mathcal{F}^{\sharp}}(a)=w_{\mathcal{F}^{\sharp}}(F),
\qquad
I_{\mathcal{F}^{\sharp}}(a)\subsetneq I_{\mathcal{F}^{\sharp}}(b).
\]
\end{proposition}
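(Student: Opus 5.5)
The proof has three parts, matching the three claims: correctness of each deletion, the weak decrease of the closed-form sum under one pass, and the fixed-point property at termination. Throughout, the family is kept as an indexed family, as in Definition~\ref{def:incidence-twin-quotient}, and blocks that become empty are discarded.

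\textbf{Correctness.} The twin-compression steps preserve $\alpha$ by Theorem~\ref{thm:doc-dominance}(ii). For a deletion, let $a$ be a block-weight witness with $|P(a)|=d_{\mathcal{F}}(a)=w$. Then $P(a)=I_{\mathcal{F}}(a)$. Any $b\in C(a)=\bigcap_{i\in P(a)}F_i$ lies in every block containing $a$, so $I_{\mathcal{F}}(a)\subseteq I_{\mathcal{F}}(b)$, and $d_{\mathcal{F}}(b)>w=d_{\mathcal{F}}(a)$.

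The one subtlety, which I expect to be the main obstacle, is that a single pass may delete several vertices. A witness could itself be deleted by another witness of smaller degree, so we must check that the deletions can be realised as a sequence of valid applications of Theorem~\ref{thm:doc-dominance}(i).

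\begin{itemize}
\item Each deleted $b$ has a witness $a$ with $I(a)\subseteq I(b)$ and $d(a)<d(b)$, computed in the pass's fixed degree function.
\item If $a$ was also deleted, follow its own witness, and continue. Inclusion is transitive and the degrees strictly decrease, so this chain terminates at a vertex $r(b)$ that is not deleted.
\item Hence every deleted vertex has a surviving witness $r(b)$ with $I(r(b))\subseteq I(b)$.
\item Delete the vertices one at a time in decreasing order of degree. When $b$ is removed, no vertex that has been removed earlier lies in any block of $I(r(b))$ except blocks that also contain $b$. So $r(b)$ still dominates $b$ in the current family, and Theorem~\ref{thm:doc-dominance}(i) applies.
\end{itemize}

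This is exactly the argument used at the start of the proof of Theorem~\ref{thm:incidence-block-optimality}. The memory-saving removal of $a$ from the accumulated map in step~5 only discards bookkeeping and does not delete $a$, so it does not affect the argument.

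\textbf{Monotonicity of the sum.} Compare $\mathcal{F}$ with $\mathcal{F}'$. For a surviving vertex $v$, every block of $\mathcal{F}$ containing $v$ survives as $F\setminus D\ni v$, so $d_{\mathcal{F}'}(v)=d_{\mathcal{F}}(v)$. Twin compression on the indexed family also keeps the degree of each class representative. Hence for each surviving block,
\[
w_{\mathcal{F}'}(F\setminus D)=\min_{v\in F\setminus D}d_{\mathcal{F}}(v)\ge\min_{v\in F}d_{\mathcal{F}}(v)=w_{\mathcal{F}}(F),
\]
which mirrors the second half of the proof of Corollary~\ref{cor:w1-elim-upper}. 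Discarded empty blocks simply drop their nonnegative terms. Summing term by term over surviving blocks gives the stated inequality.

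\textbf{Fixed point.} Suppose $\mathcal{F}^\sharp$ contains $F$, $a\in F$ and $b$ with $d(a)=w(F)=:w$ and $I(a)\subsetneq I(b)$.

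\begin{itemize}
\item Every block containing $a$ has weight at most $d(a)=w$.
\item So when the layer-$w$ scan of one more pass over $\mathcal{F}^\sharp$ is complete, all of $I(a)$ has been accumulated. This gives $|P(a)|=d(a)=w$ exactly at layer $w$, and $a$ is a block-weight witness.
\item Since $I(a)\subseteq I(b)$, we have $b\in C(a)$.
\item Strict inclusion gives $d(b)>d(a)=w$, so $b$ would be deleted.
\end{itemize}

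This contradicts the stopping rule that no witness deletes a vertex, so no such triple exists in $\mathcal{F}^\sharp$.
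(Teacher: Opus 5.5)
Your proposal is correct and follows the paper's three-part argument. Each deletion is justified by Theorem~\ref{thm:doc-dominance}(i) via the revealed support $P(a)=I_{\mathcal{F}}(a)$; the sum weakly decreases because surviving degrees are unchanged while block minima can only rise; and the fixed point holds because a final pass would reveal $a$'s full support at layer $w_{\mathcal{F}^{\sharp}}(F)$ and delete $b$. The one addition is your surviving-witness chain for several deletions within one pass, which makes rigorous the step the paper covers only with ``repeating the argument \dots by induction'' (either that chain or the decreasing-degree order alone would suffice).
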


\begin{proof}
Consider a deletion made when the support of a block-weight witness $a$ is revealed in layer $w$. The algorithm has
\[
d_{\mathcal{F}}(a)=|P(a)|=w,\qquad P(a)=I_{\mathcal{F}}(a),
\]
and it deletes only vertices $b$ in
\[
C(a)=\bigcap_{i\in P(a)}F_i.
\]
Therefore $b$ belongs to every block containing $a$, so $I_{\mathcal{F}}(a)\subseteq I_{\mathcal{F}}(b)$. Since the algorithm deletes only $b$ with $d_{\mathcal{F}}(b)>w=d_{\mathcal{F}}(a)$, the containment is strict, and Theorem~\ref{thm:doc-dominance}(i) permits deleting $b$ without changing $\alpha(\mathcal{F})$. Repeating the argument over all deletions and rebuild passes proves preservation of the optimum by induction. Equal-support vertices are already handled by the incidence-twin quotient before each pass.

Removing vertices from blocks cannot decrease the degree of any surviving vertex and can only increase, or leave unchanged, the minimum surviving degree in each nonempty block. Hence each surviving block term is weakly decreased when recomputed in $\mathcal{F}'$; empty blocks contribute nothing. Thus the displayed closed-form sum weakly decreases.

It remains to prove the stated fixed-point property. Let $\mathcal{F}^{\sharp}$ be the active family after the final rebuild, incidence-twin compression, and sharpening pass. Suppose for contradiction that an active block $F$, a block-weight witness $a\in F$, and an active vertex $b$ satisfy
\[
a\in F,\qquad
d_{\mathcal{F}^{\sharp}}(a)=w_{\mathcal{F}^{\sharp}}(F),
\qquad
I_{\mathcal{F}^{\sharp}}(a)\subsetneq I_{\mathcal{F}^{\sharp}}(b).
\]
When the final pass reaches layer $w_{\mathcal{F}^{\sharp}}(F)=d_{\mathcal{F}^{\sharp}}(a)$, the full support of $a$ is revealed, and $b$ belongs to every block in that support. Hence $b\in C(a)$ and $d_{\mathcal{F}^{\sharp}}(b)>d_{\mathcal{F}^{\sharp}}(a)$, so the pass would delete $b$. This contradicts termination.

\end{proof}

Equivalently, no higher-degree support superset remains for the block-weight supports inspected by the pass. This is a certificate-sharpening property, not a full dominance closure over all vertices: support containments whose smaller support is not a current block-weight witness may remain because they do not affect the first-order closed-form certificate in the same way.

The implementation cost is also local. A pass first scans all active incidences to compute degrees and block weights, then scans each block once in its layer to update the accumulated supports $P(\cdot)$; this contributes $O(I_{\mathcal{F}})$. For every newly revealed support, the algorithm intersects its incident blocks, contributing total work $T_{\cap}$ across the pass. Applying a nonempty deletion set requires one additional linear scan of the active block lists. Thus one pass costs $O(I_{\mathcal{F}}+T_{\cap})$, with peak auxiliary memory given by the accumulated not-yet-fully-revealed support frontier plus temporary intersection sets.

By contrast, a brute-force dominance oracle would materialize full supports for all vertices and test $I(a)\subseteq I(b)$ over many vertex pairs, costing $\Omega(|V|^2)$ support comparisons in the worst case, even before accounting for support lengths. Algorithm~\ref{alg:layered-superset-sharpening} avoids this global pair table: it only performs intersections for supports as they become fully revealed by the layer order.

On the running example, the preliminary weight-1 preclusion removes the block $B_1$ and clusters away $u$ with the root $a$. Algorithm~\ref{alg:layered-superset-sharpening} then starts on the residual family. In layer $2$, the full supports of the block-weight witnesses $b$ and $e$ are revealed:
\[
I(b)=\{2,3\},\qquad I(e)=\{4,5\}.
\]
The intersections $B_2\cap B_3$ and $B_4\cap B_5$ contain the higher-degree vertices $f$ and $g$, respectively. Hence the pass deletes $f$ and $g$, raising the residual weights of $B_9$ and $B_{10}$ from their apparent weight $3$ to their reduced weight $4$. This is the intended role of the algorithm: it sharpens the certificate where block-weight witnesses expose a support-superset distractor, without attempting a global dominance closure.

Beyond the canonical weight-1 layer, the closed-form bound can be further sharpened by puncturing the block family: delete selected blocks, recompute the incidence weights on the residual family, and keep the smallest valid certificate encountered. Puncturing does not require every surviving block to keep its original weight. The validity comes from monotonicity of the feasible region under deletion, while the tightening comes from searching for a residual family whose recomputed certificate is smaller.

For the certificate below, consider a residual instance obtained from $\mathcal{B}$ by deleting blocks and by applying weight-1 preclusion to fix $q$ roots and remove their clustered vertices from all remaining blocks. Let $\mathcal{C}$ be the remaining nonempty block family, and assume every still-active vertex is incident to at least one block of $\mathcal{C}$. Its intrinsic degrees, block weights, and closed-form value are
\begin{equation}
d_{\mathcal{C}}(v):=\bigl|\{B\in\mathcal{C}:v\in B\}\bigr|,
\qquad
w_{\mathcal{C}}(B):=\min_{v\in B}d_{\mathcal{C}}(v),
\end{equation}
and
\begin{equation}
U(\mathcal{C}):=\sum_{B\in\mathcal{C}}\frac{1}{w_{\mathcal{C}}(B)}.
\end{equation}
If the residual instance is additionally processed by repeated weight-1 preclusion, write $\mathcal{C}^{\circ}$ for the final nonempty residual family.

\begin{proposition}[Punctured-family upper-bound certificate]
\label{prop:punctured-certificate}
\begin{equation}
\alpha(\mathcal{B})\le q+\alpha(\mathcal{C})\le q+U(\mathcal{C}).
\end{equation}
In particular,
\begin{equation}
\alpha(\mathcal{B})\le q+U(\mathcal{C}^{\circ}).
\end{equation}
\end{proposition}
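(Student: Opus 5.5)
The plan is to separate the two inequalities. The right-hand one, $\alpha(\mathcal{C})\le U(\mathcal{C})$, is Theorem~\ref{thm:block-weight-upper} applied to $\mathcal{C}$ as a standalone family. This is legitimate because $\mathcal{C}$ is computed with its own intrinsic degrees $d_{\mathcal{C}}$. The theorem's charging argument needs only that every retained vertex lies in at least one block of $\mathcal{C}$, and the hypothesis that each still-active vertex is incident to some block of $\mathcal{C}$ supplies this. So all the work is in the left inequality $\alpha(\mathcal{B})\le q+\alpha(\mathcal{C})$.

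For the left inequality I will treat the residual instance as a sequence of elementary operations and track how $\alpha$ can change at each step.

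\emph{Block deletion.} Deleting a block only removes constraints. Any set feasible for $\mathcal{B}$ stays feasible after the deletion, so $\alpha$ can only increase or stay the same; this is the monotonicity noted before the proposition.

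\emph{A weight-1 preclusion step.} Suppose the current family has a weight-1 block $B$ with private vertex $z_B$. The root is fixed and the vertices clustered with it are removed from the remaining blocks. I must show that $\alpha$ drops by at most one per fixed root, i.e.\ $\alpha(\text{before})\le 1+\alpha(\text{after})$.

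For this, take any feasible $R$ of the current family and split it, as in Corollary~\ref{cor:w1-elim-upper}, into $R\cap B$ and $R\setminus B$. The first part has at most one element. The second part meets no vertex of $B$, so it survives removal of $B$'s vertices from every other block. It stays feasible, because shrinking blocks never creates conflicts, and every such vertex still lies in some residual block or is unconstrained. So $|R|\le 1+\alpha(\text{after})$. With $q$ roots in total, chaining the deletions and the $q$ preclusions gives $\alpha(\mathcal{B})\le q+\alpha(\mathcal{C})$.

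The step I expect to be the main obstacle is pinning down exactly what ``clustered vertices'' are removed, and checking that every vertex of $R\setminus B$ either remains in $\mathcal{C}$ or was legitimately discarded. The safest reading is that the removed set is contained in $B$, the block being rooted. Then the split above is exact. If the clustering instead removes vertices outside $B$, I would need Theorem~\ref{thm:w1-canonical} together with dominance (Theorem~\ref{thm:doc-dominance}(i)) to argue that an optimum avoids those vertices. I would aim to keep the proof in the first setting, where no canonical-form argument is needed, since it is purely an upper bound.

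A second subtlety is that after preclusion some blocks may become empty. Those are discarded, and the corresponding vertices of $R$ must have lain in removed blocks, which is already counted in the $q$ term. For the final claim, repeated weight-1 preclusion is just further instances of the same step. Absorbing the additional roots into $q$ (as the statement's convention implies) and applying Theorem~\ref{thm:block-weight-upper} to $\mathcal{C}^{\circ}$ gives $\alpha(\mathcal{B})\le q+U(\mathcal{C}^{\circ})$.
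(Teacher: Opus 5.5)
Your proposal is correct and follows essentially the same route as the paper: charge at most one retained vertex to each precluded block (so at most $q$ in total), note that the uncharged remainder is feasible for $\mathcal{C}$ because block deletion and vertex removal only relax constraints, and finish with Theorem~\ref{thm:block-weight-upper} applied to $\mathcal{C}$ with intrinsic degrees. Your step-by-step chaining is an unrolled version of the paper's one-shot charging, your reading that the removed vertices are exactly those of the rooted block matches Algorithm~\ref{alg:greedy-puncturing}, and you are right that $q$ must absorb the extra roots for the final bound. One small point: since $\alpha$ is maximized over the active universe $\bigcup\mathcal{C}$, the hypothesis that every still-active vertex meets a block of $\mathcal{C}$ is also what keeps your block-deletion step monotone, because a deletion that orphaned a retained vertex could lower $\alpha$; so that hypothesis is needed for the left inequality too, not only the right-hand one.
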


\begin{proof}
Each weight-1 preclusion step charges at most one retained vertex to the precluded block, so all precluded parts contribute at most $q$ retained vertices. After these charged vertices are removed, deleting blocks only relaxes the remaining feasibility constraints. Hence the uncharged part of any feasible retained set for the original family is feasible for the residual family $\mathcal{C}$ on its active universe, giving
\[
\alpha(\mathcal{B})\le q+\alpha(\mathcal{C}).
\]
Because every still-active vertex is incident to at least one block of $\mathcal{C}$, Theorem~\ref{thm:block-weight-upper} applies to the residual instance with degrees and weights computed inside $\mathcal{C}$, yielding $\alpha(\mathcal{C})\le U(\mathcal{C})$. The final statement is the same argument applied after repeated weight-1 preclusion.
\end{proof}

Proposition~\ref{prop:punctured-certificate} turns bound sharpening into a certificate-search problem. The next algorithm is the practical iterative search used in our implementation. It processes current block-weight layers from high to low and, within each layer, dynamically chooses the block with the smallest current minimum-degree count. It accepts punctures that do not increase the current certificate, including zero-gain punctures that expose new weight-1 blocks; those blocks are precluded at the next rebuild and can unlock a strictly smaller certificate. The returned value is the \emph{iterative puncturing bound}, denoted $U_{\mathrm{punct}}$, namely the smallest valid certificate recorded over all outer rebuild-and-puncture rounds.

\begin{algorithm}[Iterative Greedy Layered Puncturing Algorithm]
\label{alg:greedy-puncturing}
Initialize the live family $\mathcal{C}\leftarrow\mathcal{B}$, the fixed weight-1 contribution $q\leftarrow 0$, and the best iterative certificate $U_{\mathrm{punct}}\leftarrow U(\mathcal{B})$.
For a current residual family $\mathcal{C}$, write
\[
n_{\min,\mathcal{C}}(B)
:=
\bigl|\{v\in B:d_{\mathcal{C}}(v)=w_{\mathcal{C}}(B)\}\bigr|
\]
for the number of minimum-degree vertices in block $B$.

Repeat the following outer rounds until no block is punctured in a full round.
\begin{enumerate}[leftmargin=1.6em,label=\arabic*.,itemsep=0.15em]
    \item \textbf{Rebuild, preclude weight-1 blocks, and sharpen the certificate.} Recompute $d_{\mathcal{C}}(\cdot)$ and $w_{\mathcal{C}}(\cdot)$, compress incidence twins, and while there is a block $B\in\mathcal{C}$ with $w_{\mathcal{C}}(B)=1$, choose a degree-1 root in $B$, add one to $q$, remove the vertices of $B$ from all live blocks, delete empty blocks, and rebuild the affected degrees and weights. After the current weight-1 wave is exhausted, apply Algorithm~\ref{alg:layered-superset-sharpening} to the residual certificate family. If sharpening deletes vertices, rebuild incidence twins, degrees, and block weights before recording the certificate. Weight-1 blocks are not newly created by this vertex-deletion pass; they are exposed by block punctures and handled at the next rebuild.
    \item \textbf{Record the current certificate.} Update
    \begin{equation}
    U_{\mathrm{punct}}\leftarrow
    \min\left\{U_{\mathrm{punct}},\ q+\sum_{B\in\mathcal{C}}\frac{1}{w_{\mathcal{C}}(B)}\right\}.
    \end{equation}
    If $\mathcal{C}=\varnothing$, stop.
    \item \textbf{Process layers from high to low.} Maintain a global vertex-to-live-block index and current block weights for the residual family. For current weights $w$ in decreasing order, initialize the active layer
    \[
    \mathcal{L}_w:=\{B\in\mathcal{C}:w_{\mathcal{C}}(B)=w\}
    \]
    with scores $n_{\min,\mathcal{C}}(B)$. Repeatedly choose a block $B\in\mathcal{L}_w$ with minimum current $n_{\min,\mathcal{C}}(B)$, breaking ties arbitrarily. Use the global index to identify every same- or higher-weight block whose weight may change after puncturing $B$, and compute the exact local certificate change
    \begin{equation}
    \Delta(B):=
    \left(q+\sum_{C\in\mathcal{C}}\frac{1}{w_{\mathcal{C}}(C)}\right)
    -
    \left(q+\sum_{C\in\mathcal{C}\setminus\{B\}}\frac{1}{w_{\mathcal{C}\setminus\{B\}}(C)}\right),
    \end{equation}
    updating only blocks incident to vertices of $B$.
    \item \textbf{Puncture non-worsening or weight-1-enabling blocks.} Delete $B$ if $\Delta(B)\ge 0$, or if deleting $B$ creates a new weight-1 block that will be precluded in the next rebuild. After a deletion, update the affected vertex degrees, block weights, and active-layer scores through the global index; if a block's weight decreases below $w$, move it to the corresponding later layer. At the end of each layer, discard index entries for vertices that cannot affect any future processed layer.
\end{enumerate}
Output the iterative puncturing bound $U_{\mathrm{punct}}$.
\end{algorithm}

\begin{remark}[Layered puncturing intuition]
A puncture may lower the weight of some surviving blocks, so it is not justified by preserving the old closed-form certificate. Instead, it is justified by recomputing the certificate on the residual family and retaining the best value seen. The non-worsening condition handles immediately useful punctures, while the weight-1-enabling condition permits zero-gain moves that expose canonical roots after the next rebuild. Empirically, these zero-gain moves are crucial: they often create the next wave of weight-1 preclusions and therefore produce a much tighter final certificate.
\end{remark}

\begin{corollary}[Validity and termination of the iterative puncturing bound]
\label{cor:greedy-puncturing-valid}
The iterative puncturing bound $U_{\mathrm{punct}}$ returned by Algorithm~\ref{alg:greedy-puncturing} is a valid upper bound:
\begin{equation}
\alpha(\mathcal{B})\le U_{\mathrm{punct}}.
\end{equation}
Moreover, the algorithm terminates after finitely many punctures and preclusion steps.
\end{corollary}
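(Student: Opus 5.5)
Validity reduces to showing that every value recorded into $U_{\mathrm{punct}}$ is a valid upper bound on $\alpha(\mathcal{B})$. The initial value $U(\mathcal{B})$ is valid by Theorem~\ref{thm:block-weight-upper}. Every later value has the form $q+U(\mathcal{C})$ for the live state $(q,\mathcal{C})$ at Step~2, so the plan is to prove one invariant by induction over elementary operations:
\[
\alpha(\mathcal{B})\le q+\alpha(\mathcal{C}),
\]
where every still-active vertex lies in some block of $\mathcal{C}$. At each recording time, Proposition~\ref{prop:punctured-certificate} (equivalently Theorem~\ref{thm:block-weight-upper} applied to $\mathcal{C}$ with intrinsic weights) then gives $\alpha(\mathcal{C})\le U(\mathcal{C})$. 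Since a minimum of valid bounds is valid, this yields $\alpha(\mathcal{B})\le U_{\mathrm{punct}}$.

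The invariant holds initially, with $q=0$ and $\mathcal{C}=\mathcal{B}$. There are four elementary operations, and each one preserves it.
\begin{enumerate}[leftmargin=1.6em,label=(\alph*)]
    \item \textbf{Incidence-twin compression.} This preserves $\alpha(\mathcal{C})$ by Theorem~\ref{thm:doc-dominance}(ii).
    \item \textbf{Sharpening deletions.} These preserve $\alpha(\mathcal{C})$ by Proposition~\ref{prop:layered-superset-sharpening}.
    \item \textbf{A weight-1 preclusion step.} This removes a block $B$ together with all of its vertices and increases $q$ by one. Any feasible set $O$ for the current family meets $B$ in at most one vertex. The remainder $O\setminus B$ is feasible for the residual family, because shrinking blocks and deleting $B$ only relaxes the constraints. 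Hence $\alpha(\mathcal{C})\le 1+\alpha(\mathcal{C}')$. Vertices left without any block are exactly the removed vertices of $B$, so the active-universe condition is maintained. This is the charging argument of Corollary~\ref{cor:w1-elim-upper} and Proposition~\ref{prop:punctured-certificate}.
    \item \textbf{A block puncture.} This only deletes a constraint, so $\alpha(\mathcal{C})\le\alpha(\mathcal{C}\setminus\{B\})$. The one subtlety is a vertex whose only block was $B$. Such a vertex is unconstrained in the residual family, and I would count it explicitly. After the puncture it has degree $0$, so I would treat it either by keeping it as a singleton block, which has weight~1 and is precluded with $+1$ at the next rebuild, or by noting that the rule ``delete $B$ if it creates a weight-1 block'' keeps such vertices attached. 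I expect pinning down this bookkeeping, so that every active vertex is incident to a live block when the certificate is recorded, to be the main obstacle. I would resolve it by reading the puncture as deleting $B$ only as a constraint, with each orphaned vertex retained as its own singleton block. Then Theorem~\ref{thm:block-weight-upper} applies verbatim.
\end{enumerate}

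For termination I would use a potential that no operation increases, namely
\[
\Phi=|\mathcal{C}|+\sum_{B\in\mathcal{C}}|B|,
\]
the number of live blocks plus the live incidence size.
\begin{itemize}[leftmargin=1.6em]
    \item Every puncture and every preclusion step removes at least one block. Singleton orphans are consumed immediately by preclusion, so creating them cannot increase $\Phi$ in net across the puncture-plus-rebuild pair.
    \item Every nonempty sharpening or compression pass strictly decreases the incidence size.
    \item Within one outer round, the layer loop scans finitely many blocks. Each block is chosen at most once per layer before it is either punctured or passed over, and blocks only move to strictly lower layers, so the round is finite.
    \item The outer loop stops as soon as a round punctures nothing. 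Otherwise $\Phi$ has dropped by at least one.
\end{itemize}
Since $\Phi$ is a nonnegative integer, only finitely many punctures, preclusions and rounds can occur.
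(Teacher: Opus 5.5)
Your proposal is correct and follows the paper's own argument. Every recorded value is $q+U(\mathcal{C})$ for a residual family reached by value-preserving twin compression and sharpening (Theorem~\ref{thm:doc-dominance}, Proposition~\ref{prop:layered-superset-sharpening}), charged weight-1 preclusions and constraint-relaxing punctures, so Proposition~\ref{prop:punctured-certificate} validates it; termination holds because each operation deletes a live block or vertex of a finite family, and your invariant and potential $\Phi$ just make this explicit.

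The orphan bookkeeping you single out as the main obstacle never arises under the intended reading of the layer loop. There, newly exposed weight-1 blocks are left for the next rebuild (as stated in Step~1 and shown in Example~\ref{ex:puncturing-cascade}). So every punctured block has current weight at least $2$, each of its vertices keeps a live block, and the singleton-block device is unnecessary.
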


\begin{proof}
Every certificate recorded by the algorithm has the form $q+U(\mathcal{C}^{\circ})$ after puncturing, repeated incidence-twin compression, weight-1 preclusion, and layered superset sharpening on a residual family. Proposition~\ref{prop:punctured-certificate}, together with Theorem~\ref{thm:doc-dominance} and Proposition~\ref{prop:layered-superset-sharpening}, shows that each such quantity upper-bounds $\alpha(\mathcal{B})$. The output is the minimum over recorded valid certificates and is therefore valid. Termination follows because every accepted puncture deletes at least one live block, every weight-1 preclusion removes at least one nonempty block and its clustered vertices from the residual family, every nontrivial incidence-twin or superset-sharpening pass deletes at least one live vertex from the residual certificate family, and the initial incidence family is finite.
\end{proof}

\begin{example}[Two-round puncturing cascade]
\label{ex:puncturing-cascade}
Let the vertices be $x_1,x_2,x_3,x_4,y_1,y_2,y_3,y_4$, and define
\[
\begin{aligned}
A_1&=\{x_1,x_2\},&
A_2&=\{x_3,x_4\},&
A_3&=\{x_1,x_3\},&
A_4&=\{x_2,x_4\},\\
L_1&=\{x_1,y_1,y_2\},&
L_2&=\{x_3,y_3,y_4\},&
L_3&=\{x_1,x_3,y_1,y_3\},&
L_4&=\{x_2,x_4,y_2,y_4\}.
\end{aligned}
\]
The degrees are
\[
d(x_1)=d(x_3)=4,\qquad d(x_2)=d(x_4)=3,\qquad d(y_i)=2\quad(i=1,\dots,4),
\]
so
\[
w(A_3)=4,\qquad w(A_1)=w(A_2)=w(A_4)=3,\qquad w(L_i)=2\quad(i=1,\dots,4),
\]
and the initial closed-form certificate is
\[
U(\mathcal{B})=\frac{1}{4}+\frac{3}{3}+\frac{4}{2}=\frac{13}{4}.
\]

One admissible high-to-low puncturing trace first removes $A_3$, then $A_1$ and $A_2$, each of which decreases the current recomputed certificate. The dynamic increasing-$n_{\min}$ order is visible along the way: $A_3$ is the only block in layer $w=4$; after puncturing $A_3$, the layer $w=3$ contains $A_1,A_2,A_4$ with tied or updated minimum-degree counts, and after one of $A_1,A_2$ is punctured the other becomes the only remaining block in that highest layer.

The residual family is then
\[
A_4,\ L_1,\ L_2,\ L_3,\ L_4,
\]
with current certificate $5/2$. At this point all live blocks have weight $2$, but their current minimum-degree counts are
\[
n_{\min}(A_4)=2,\qquad n_{\min}(L_1)=n_{\min}(L_2)=3,\qquad
n_{\min}(L_3)=n_{\min}(L_4)=4.
\]
Thus the increasing-$n_{\min}$ rule selects $A_4$ before the larger $L$-blocks. Puncturing $A_4$ is a zero-gain move, but it creates weight-1 structure in the latent $L$-blocks. Algorithm~\ref{alg:greedy-puncturing} handles such newly exposed weight-1 blocks at the next outer rebuild, so the current layer can still continue its puncturing pass.

Continuing the same dynamically updated order, $L_1$ and $L_2$ are the smallest remaining weight-2 blocks by current $n_{\min}$, so they are punctured before the larger blocks $L_3,L_4$. This leaves only
\[
L_3=\{x_1,x_3,y_1,y_3\},\qquad
L_4=\{x_2,x_4,y_2,y_4\},
\]
both of residual weight $1$. The next rebuild therefore precludes $L_3$ and $L_4$, fixes two canonical units, leaves the residual family empty, and records
\[
U_{\mathrm{punct}}=2.
\]
The example isolates why Algorithm~\ref{alg:greedy-puncturing} is an outer-iteration procedure rather than a one-shot deletion rule: some useful punctures do not improve the certificate at the moment they are made, but the dynamic increasing-$n_{\min}$ order finds small exposure blocks first, these punctures expose the next wave of weight-1 blocks, and the certificate tightens only after the subsequent rebuild.
\end{example}

\subsection{Covering Certificates}

The block-weight bound, its weight-1 preclusion form, and the puncturing certificate provide instance-level upper bounds. They also identify low-$w(B)$ blocks as the most constraining parts of the instance, motivating both the covering certificate below and the greedy clustering schedule in the next section.

\begin{example}[A loose weight-2 block-weight bound]
\label{ex:pair-parity-family}
Let $x_1,\dots,x_{2n}$ be vertices with $n\ge 3$, and define pair blocks
\[
P_t:=\{x_{2t-1},x_{2t}\}\quad(t=1,\dots,n),
\]
together with the two parity blocks
\[
O:=\{x_1,x_3,\dots,x_{2n-1}\},\qquad
E:=\{x_2,x_4,\dots,x_{2n}\}.
\]
Then every vertex has degree $2$, and every block has minimum weight $w=2$. Hence Theorem~\ref{thm:block-weight-upper} gives
\[
\alpha(\mathcal{B})\le \frac{n+2}{2}.
\]
However, $\alpha(\mathcal{B})=2$. Indeed, one may retain two vertices, for example $\{x_1,x_4\}$. On the other hand, the parity blocks force any feasible set to contain at most one odd-indexed vertex and at most one even-indexed vertex. This family shows that the closed-form bound can remain loose even when all blocks have uniform weight~2 and no larger ``all-$x$'' block is present. Example~\ref{ex:pair-parity-trace} later traces this same family through puncturing, covering, and greedy clustering.
\end{example}

Theorem~\ref{thm:block-weight-upper} provides the foundational ordering principle for the two greedy procedures that follow. It shows that a block's contribution toward the maximum strong independent set size is governed primarily by the inverse of its minimum weight, so lower-weight blocks are the most constraining part of the instance and should be handled first.

The Greedy Layered Covering Algorithm below and the Greedy Layered Clustering Algorithm in Section~5 are complementary realizations of this same principle: both prioritize increasing block weight, and both refine ties by favoring blocks containing more vertices of that current minimum weight. The weight-2 example above then explains why a second, covering-based viewpoint is useful in addition to the closed-form charging bound of Theorem~\ref{thm:block-weight-upper}: concentrated overlap can force many nominal low-weight candidates into a small number of incompatibility blocks.

\begin{algorithm}[Greedy Layered Covering Algorithm and iteration]
\label{alg:greedy-covering}
Initialize a cumulative covered set $\mathcal{Z}_0:=\varnothing$, a residual block family
\[
\mathcal{F}^{(0)}:=\{B\in\mathcal{B}:B\neq\varnothing\},
\]
where residual blocks retain their original block indices, and a counter $N_{\mathrm{cov}}:=0$. Degrees are recomputed intrinsically on the current residual family:
\[
d_t(v):=\bigl|\{B\in\mathcal{F}^{(t)}:v\in B\}\bigr|.
\]
The \emph{Greedy Layered Covering Algorithm} repeatedly performs the following:
\begin{enumerate}[leftmargin=1.6em,label=\arabic*.,itemsep=0.15em]
    \item If $\mathcal{F}^{(t)}=\varnothing$, stop and return the selected covering family
    \[
    \mathcal{B}^\star:=\{B_0^\star,\dots,B_{N_{\mathrm{cov}}-1}^\star\}
    \]
    and the final counter $N_{\mathrm{cov}}=|\mathcal{B}^\star|$.
    \item Compute current block weights
    \[
    w_t(B):=\min_{v\in B} d_t(v),
    \]
    the current minimum layer
    \[
    \lambda_t:=\min_{B\in\mathcal{F}^{(t)}} w_t(B),
    \qquad
    \mathcal{M}_t:=\{B\in\mathcal{F}^{(t)}: w_t(B)=\lambda_t\},
    \]
    and, for each current degree-$\lambda_t$ vertex, the layer-local vertex-to-block index
    \[
    \mathcal{I}_{\lambda_t}(v):=\{\,i:\ B_i\in\mathcal{M}_t,\ v\in B_i,\ d_t(v)=\lambda_t\,\},
    \]
    together with, for each $B\in\mathcal{M}_t$, the number of current minimum-weight vertices
    \[
    n_t(B):=\bigl|\{v\in B:d_t(v)=\lambda_t\}\bigr|.
    \]
    \item Select a covering block
    \[
    B_t^\star\in\arg\max_{B\in\mathcal{M}_t} n_t(B),
    \]
    breaking ties arbitrarily.
    \item Update
    \[
    \mathcal{Z}_{t+1}:=\mathcal{Z}_t\cup B_t^\star,
    \qquad
    N_{\mathrm{cov}}\leftarrow N_{\mathrm{cov}}+1,
    \]
    and update the residual family by removing the cumulative covered set from every original block:
    \begin{equation}
    \mathcal{F}^{(t+1)}
    :=
    \{\,B\setminus \mathcal{Z}_{t+1} : B\in\mathcal{B},\ B\setminus \mathcal{Z}_{t+1}\neq\varnothing\,\}.
    \end{equation}
\end{enumerate}
The iterated covering version used in the numerical tables sets $\mathcal{B}^{(0)}:=\mathcal{B}$ and repeats this covering pass on the current family. At iteration $s+1$, it runs the pass on $\mathcal{B}^{(s)}$, records the returned family as $\mathcal{B}^{(s+1)}$, records
\[
N_{\mathrm{cov}}^{(s+1)}:=|\mathcal{B}^{(s+1)}|,
\qquad
U_{\mathrm{cover}}^{(s+1)}:=U(\mathcal{B}^{(s+1)}),
\]
and stops when $\mathcal{B}^{(s+1)}=\mathcal{B}^{(s)}$. Thus each reported $U_{\mathrm{cover}}$ is the final restricted block-weight certificate at convergence, while the associated iteration count is the first $s+1$ for which the covering family is unchanged.
\end{algorithm}

Algorithm~\ref{alg:greedy-covering} deliberately uses the opposite layer priority from Algorithm~\ref{alg:greedy-puncturing}. Puncturing starts from the highest residual block weights and, inside a layer, chooses the smallest current $n_{\min,\mathcal{C}}(B)$ because such blocks are the safest candidates to remove without worsening the certificate. Covering starts from the lowest residual block weight and chooses the largest current $n_t(B)$ because such blocks cover the most active minimum-degree vertices in the bottleneck layer.

\begin{remark}[Two-pass implementation within a fixed covering layer]
For implementation, Step~3 of Algorithm~\ref{alg:greedy-covering} need not recompute $n_t(B)$ over all residual blocks after every covering choice. Once a fixed layer $w=\lambda_t$ is reached, one may perform two passes, using $d_{\mathcal{F}}$ for the current residual-family degree. In the first pass, scan all current weight-$w$ blocks, compute and store their counts
\[
n(B):=\bigl|\{v\in B:d_{\mathcal{F}}(v)=w\}\bigr|,
\]
and build the layer-local degree-$w$ vertex-to-block index $\mathcal{I}_w(\cdot)$.

In the second pass, repeatedly choose an unprocessed block with maximum current $n(B)$, add all of its vertices to the covered set, and for each of its degree-$w$ vertices $v$, decrement by one the stored count $n(\cdot)$ of every still-unprocessed associated block in $\mathcal{I}_w(v)$. If the count of an unprocessed block drops to zero, then all of its degree-$w$ vertices have been covered, so its residual weight has increased beyond $w$; it may therefore be removed from the current layer and appended to a higher-layer work list, for example the next layer queue.

This two-pass realization uses temporary storage for $\mathcal{I}_w(\cdot)$, but it updates only the blocks actually touched by the newly covered degree-$w$ vertices and therefore avoids rescanning the entire residual layer after each covering selection.
\end{remark}

\begin{theorem}[Greedy covering upper bound]
\label{thm:greedy-covering-upper}
Algorithm~\ref{alg:greedy-covering} returns an upper bound on the maximum feasible retention value:
\begin{equation}
\alpha(\mathcal{B})\le N_{\mathrm{cov}}.
\end{equation}
\end{theorem}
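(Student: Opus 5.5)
The plan is to show that the selected blocks $B_0^\star,\dots,B_{N_{\mathrm{cov}}-1}^\star$ cover the whole active universe $V$, and then apply a pigeonhole count. Each selected block is an original block of $\mathcal{B}$, restricted to its residual part at selection time. Feasibility forces every feasible retained set to meet each original block in at most one vertex. So if the union of the selected residual blocks is $V$, every retained vertex is charged to a selected block containing it. This gives
\[
|R|\le\sum_{t}|R\cap B_t^\star|\le N_{\mathrm{cov}}.
\]
In this count I will use the residual block $B_t^\star$, a subset of some original block $B_{i_t}$, so $|R\cap B_t^\star|\le|R\cap B_{i_t}|\le 1$.

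The key steps, in order, are as follows.

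First, termination. At each step the selected residual block $B_t^\star$ is nonempty, and all its vertices are added to $\mathcal{Z}_{t+1}$. Hence $|\mathcal{Z}_t|$ strictly increases, and it is bounded by $|V|$, so the algorithm stops after finitely many steps. It stops exactly when $\mathcal{F}^{(t)}=\varnothing$.

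Second, covering. When $\mathcal{F}^{(T)}=\varnothing$, every original block satisfies $B\setminus\mathcal{Z}_T=\varnothing$, that is, $B\subseteq\mathcal{Z}_T$. Since $V=\bigcup_{B\in\mathcal{B}}B$, this gives $V=\mathcal{Z}_T=\bigcup_t B_t^\star$. Each $B_t^\star$ equals $B_{i_t}\setminus\mathcal{Z}_t$ for an original index $i_t$; this uses the convention that residual blocks keep their original indices. These residual blocks are pairwise disjoint, so each $v\in V$ lies in exactly one $B_t^\star$.

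Third, the charging bound. Map each $v\in R$ to the unique $t$ with $v\in B_t^\star$. Then $|R|=\sum_t|R\cap B_t^\star|\le\sum_t|R\cap B_{i_t}|\le N_{\mathrm{cov}}$.

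The iterated version needs one more argument. Each $\mathcal{B}^{(s+1)}$ is a family of original blocks, or of their full vertex sets, whose union covers the universe of $\mathcal{B}^{(s)}$, which is $V$ by induction. So the same counting bounds $\alpha(\mathcal{B})$ by $N_{\mathrm{cov}}^{(s+1)}$. It also bounds $\alpha(\mathcal{B})$ by $U(\mathcal{B}^{(s+1)})$: since $\mathcal{B}^{(s+1)}\subseteq\mathcal{B}$ covers $V$, deleting the other blocks only relaxes feasibility, and Theorem~\ref{thm:block-weight-upper} then applies. The main obstacle is bookkeeping rather than mathematics. One must be careful that the selection rule (weights, $n_t$, ties) is irrelevant to validity, and that "block" in $\mathcal{B}^\star$ refers back to an original block so that the at-most-one constraint applies. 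I would state this explicitly: the greedy order only affects the size of $N_{\mathrm{cov}}$, never its validity.
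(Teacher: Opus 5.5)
Your proposal is correct and follows essentially the same route as the paper. An empty terminal residual family forces every original block into $\mathcal{Z}_{N_{\mathrm{cov}}}=\bigcup_t B_t^\star$, and charging each retained vertex to the unique residual block containing it is exactly the paper's injective assignment to the earliest selected block containing it, with feasibility on the underlying original block $B_{i_t}$ giving at most one retained vertex per selected block (your explicit termination argument and the iterated-version remark go slightly beyond the paper's proof but are sound).
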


\begin{proof}
Let the algorithm select blocks
\[
B_0^\star,B_1^\star,\dots,B_{N_{\mathrm{cov}}-1}^\star.
\]
At termination, the residual family is empty. Following the residual updates shows that every original block $B\in\mathcal{B}$ satisfies
\[
B\setminus \mathcal{Z}_{N_{\mathrm{cov}}}=\varnothing,
\]
hence
\[
B\subseteq \mathcal{Z}_{N_{\mathrm{cov}}}
=
\bigcup_{t=0}^{N_{\mathrm{cov}}-1} B_t^\star.
\]
Therefore every retained root belongs to at least one selected covering block. Assign each retained root to the earliest selected block containing it. This assignment is injective, because each selected block may contain at most one retained root by feasibility. Hence the number of retained roots is at most the number of selected covering blocks:
\[
\alpha(\mathcal{B})\le N_{\mathrm{cov}}.
\]
\end{proof}

The next lemma separates two roles of a restricted block family that preserves the active vertex universe: dropping blocks relaxes feasibility, while under a stronger pairwise-cover condition the optimum is preserved exactly. Throughout the lemma, let $\mathcal{B}^\star\subseteq\mathcal{B}$ satisfy
\[
\bigcup_{B\in\mathcal{B}^\star}B=\bigcup_{B\in\mathcal{B}}B.
\]

\begin{lemma}[Subfamily monotonicity and equality under pairwise covering]
\label{lem:subfamily-monotonicity}
Under this shared-universe assumption:
\begin{enumerate}[leftmargin=1.8em,itemsep=0.15em,label=(\roman*)]
    \item One always has
    \[
    \alpha(\mathcal{B})\le \alpha(\mathcal{B}^\star).
    \]
    \item If, in addition, for every omitted block $B\in\mathcal{B}\setminus\mathcal{B}^\star$ and every distinct $x,y\in B$, there exists $C\in\mathcal{B}^\star$ such that
    \[
    \{x,y\}\subseteq C,
    \]
    then
    \[
    \alpha(\mathcal{B})=\alpha(\mathcal{B}^\star).
    \]
\end{enumerate}
\end{lemma}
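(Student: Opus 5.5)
The plan is to prove both parts by comparing feasible regions directly; the shared-universe assumption ensures both problems optimize over the same vertex set $V$. Without it, $\alpha(\mathcal{B}^\star)$ would be defined over a smaller active universe and the two feasible regions would not be directly comparable.

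For (i), take any $R\subseteq V$ that is feasible for $\mathcal{B}$, so $|R\cap B|\le 1$ for every $B\in\mathcal{B}$. Since $\mathcal{B}^\star\subseteq\mathcal{B}$, the same inequality holds in particular for every $B\in\mathcal{B}^\star$. Because the active universe of $\mathcal{B}^\star$ equals $V$, the set $R$ is an admissible candidate for $\alpha(\mathcal{B}^\star)$. Taking $R$ optimal for $\mathcal{B}$ gives $\alpha(\mathcal{B})\le\alpha(\mathcal{B}^\star)$. This is the same relaxation principle already used informally in Proposition~\ref{prop:punctured-certificate}.

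For (ii), it remains to show $\alpha(\mathcal{B}^\star)\le\alpha(\mathcal{B})$. Let $R$ be feasible for $\mathcal{B}^\star$. Blocks in $\mathcal{B}^\star$ are fine by assumption, so we only need to check each omitted block $B\in\mathcal{B}\setminus\mathcal{B}^\star$. Suppose, for contradiction, that $|R\cap B|\ge 2$, and pick distinct $x,y\in R\cap B$. The pairwise-cover hypothesis gives some $C\in\mathcal{B}^\star$ with $\{x,y\}\subseteq C$. Then $|R\cap C|\ge 2$, which contradicts feasibility of $R$ for $\mathcal{B}^\star$. Hence $R$ is feasible for $\mathcal{B}$, and combining this with (i) yields equality.

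The only point needing care is the role of the shared-universe hypothesis in (i), namely that the optimization domains coincide; the rest is a direct check, so there is no real obstacle. It is also worth noting that the pairwise condition in (ii) says exactly that the clique graphs induced by $\mathcal{B}$ and by $\mathcal{B}^\star$ coincide. Since strong independence depends only on this pairwise conflict graph, the two problems are in fact identical, not merely equal in optimal value.
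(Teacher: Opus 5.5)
Your proposal is correct and matches the paper's proof: for (i), every $\mathcal{B}$-feasible set is $\mathcal{B}^\star$-feasible because $\mathcal{B}^\star\subseteq\mathcal{B}$ and the universes agree; for (ii), a violated omitted block would give a violated block $C\in\mathcal{B}^\star$ through the pairwise-cover hypothesis. Your closing observation is also correct: the hypothesis makes the pairwise conflict graphs coincide, so the two feasible regions are identical, which slightly sharpens the stated equality of optimal values.
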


\begin{proof}
For part (i), the two families have the same active universe, and every $\mathcal{B}$-feasible set is automatically $\mathcal{B}^\star$-feasible because $\mathcal{B}^\star$ is a subfamily. Hence $\alpha(\mathcal{B})\le \alpha(\mathcal{B}^\star)$.

For part (ii), let $R$ be any $\mathcal{B}^\star$-feasible set. If some omitted block $B\in\mathcal{B}\setminus\mathcal{B}^\star$ contained two retained vertices $x,y\in R$, then by assumption there would exist $C\in\mathcal{B}^\star$ with $\{x,y\}\subseteq C$, contradicting feasibility of $R$ for $\mathcal{B}^\star$. Hence every $\mathcal{B}^\star$-feasible set is also $\mathcal{B}$-feasible, so $\alpha(\mathcal{B}^\star)\le \alpha(\mathcal{B})$. Together with part (i), this gives equality.
\end{proof}

\begin{remark}[Illustration of subfamily monotonicity]
Take
\[
\mathcal{B}
=
\Bigl\{
B_0=\{a,b,c\},\;
B_1=\{a,b,d\},\;
B_2=\{a,c,e\},\;
B_3=\{b,c,f\}
\Bigr\},
\]
and let
\[
\mathcal{B}^\star=\{B_1,B_2,B_3\}.
\]
The omitted block $B_0$ is not itself in $\mathcal{B}^\star$, but every pair in $B_0$ is covered inside $\mathcal{B}^\star$:
\[
\{a,b\}\subseteq B_1,\qquad
\{a,c\}\subseteq B_2,\qquad
\{b,c\}\subseteq B_3.
\]
Hence Lemma~\ref{lem:subfamily-monotonicity}(ii) applies and gives
\[
\alpha(\mathcal{B})=\alpha(\mathcal{B}^\star).
\]
In this example the common optimum is $3$, achieved for instance by the feasible set $\{d,e,f\}$. The point is that all pairwise conflicts inside $B_0$ are already enforced by the selected subfamily $\mathcal{B}^\star$.

By contrast, plain covering alone does not force equality. Consider
\[
\widetilde{\mathcal{B}}
=
\Bigl\{
\widetilde B_0=\{a,b,c\},\;
\widetilde B_1=\{a,b,d\},\;
\widetilde B_2=\{a,c,d\}
\Bigr\},
\qquad
\widetilde{\mathcal{B}}^\star=\{\widetilde B_1,\widetilde B_2\}.
\]
The family $\widetilde{\mathcal{B}}^\star$ is a covering subfamily because
\[
\widetilde B_0\subseteq \widetilde B_1\cup \widetilde B_2.
\]
However the pair $\{b,c\}$ is not contained in any block of $\widetilde{\mathcal{B}}^\star$, so the pairwise-cover condition of Lemma~\ref{lem:subfamily-monotonicity}(ii) fails. Indeed,
\[
\alpha(\widetilde{\mathcal{B}})=1
\qquad\text{whereas}\qquad
\alpha(\widetilde{\mathcal{B}}^\star)=2,
\]
because $\{b,c\}$ is feasible for $\widetilde{\mathcal{B}}^\star$ but not for $\widetilde{\mathcal{B}}$. This gives a strict instance of part~(i):
\[
\alpha(\widetilde{\mathcal{B}})<\alpha(\widetilde{\mathcal{B}}^\star).
\]
\end{remark}

The restricted covering theorem below uses the monotonicity part of Lemma~\ref{lem:subfamily-monotonicity}, together with the fact that a covering family preserves the active vertex universe: once such a family $\mathcal{B}^\star$ has been identified, Theorem~\ref{thm:block-weight-upper} can be reapplied on that restricted family to obtain a sharper upper bound. Here $\mathcal{B}^\star\subseteq\mathcal{B}$ is a covering subfamily, meaning that every block $B\in\mathcal{B}$ satisfies
\[
B\subseteq \bigcup_{C\in\mathcal{B}^\star} C.
\]
Equivalently, on the active block universe, every vertex that appears in the original block family still appears in the restricted family.

\begin{theorem}[Restricted block-weight refinement over any covering family]
\label{thm:covering-refinement}
If $\mathcal{B}^\star\subseteq\mathcal{B}$ is a covering subfamily, then
\begin{equation}
\alpha(\mathcal{B})\le U(\mathcal{B}^\star)\le |\mathcal{B}^\star|.
\end{equation}
In particular, taking $\mathcal{B}^\star=\{B_0^\star,\dots,B_{N_{\mathrm{cov}}-1}^\star\}$ from Theorem~\ref{thm:greedy-covering-upper} gives
\[
\alpha(\mathcal{B})\le U(\mathcal{B}^\star)\le N_{\mathrm{cov}}.
\]
\end{theorem}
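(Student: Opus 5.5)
The plan is to chain three facts already in hand: monotonicity under block deletion, the closed-form bound applied intrinsically to the restricted family, and the trivial estimate that each term $1/w_{\mathcal{B}^\star}(B)$ is at most one.

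\textbf{Step 1: shared universe.} The covering hypothesis says every $B\in\mathcal{B}$ lies in $\bigcup_{C\in\mathcal{B}^\star}C$. Hence $\bigcup_{C\in\mathcal{B}^\star}C=\bigcup_{B\in\mathcal{B}}B=V$. This is exactly the shared-universe assumption of Lemma~\ref{lem:subfamily-monotonicity}, so part~(i) of that lemma gives $\alpha(\mathcal{B})\le\alpha(\mathcal{B}^\star)$.

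\textbf{Step 2: closed-form bound on the restricted family.} I then apply Theorem~\ref{thm:block-weight-upper} to $\mathcal{B}^\star$ as an instance in its own right, with intrinsic degrees $d_{\mathcal{B}^\star}(v)$ and weights $w_{\mathcal{B}^\star}(C)=\min_{v\in C}d_{\mathcal{B}^\star}(v)$. This gives $\alpha(\mathcal{B}^\star)\le U(\mathcal{B}^\star)$.

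The one point to check, and the only potential obstacle, is that the charging argument needs every retained vertex to have intrinsic degree at least one, so that it can spread its unit charge. Step~1 guarantees this: every $v\in V$ lies in some block of $\mathcal{B}^\star$, so $d_{\mathcal{B}^\star}(v)\ge1$. Without the covering assumption, a vertex outside every block of $\mathcal{B}^\star$ would be unconstrained and uncharged, and the inequality could fail.

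I would also note that intrinsic degrees in $\mathcal{B}^\star$ may be smaller than in $\mathcal{B}$. The weights can therefore drop, which is harmless because the theorem is applied intrinsically to $\mathcal{B}^\star$.

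\textbf{Step 3: term-wise estimate.} Every $C\in\mathcal{B}^\star$ is nonempty and each of its vertices has $d_{\mathcal{B}^\star}\ge1$. So $w_{\mathcal{B}^\star}(C)\ge1$ and each term $1/w_{\mathcal{B}^\star}(C)\le1$. Summing over $C\in\mathcal{B}^\star$ gives $U(\mathcal{B}^\star)\le|\mathcal{B}^\star|$.

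\textbf{Step 4: the particular case.} For the family selected by Algorithm~\ref{alg:greedy-covering}, the proof of Theorem~\ref{thm:greedy-covering-upper} shows every $B\in\mathcal{B}$ satisfies $B\subseteq\bigcup_t B_t^\star$. So this family is a covering subfamily and the general statement applies.

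The selected blocks should be distinct original blocks. A block selected at step $t$ has all its residual vertices added to $\mathcal{Z}_{t+1}$, so it becomes empty in the residual family and cannot be selected again. Hence $|\mathcal{B}^\star|=N_{\mathrm{cov}}$, which yields $\alpha(\mathcal{B})\le U(\mathcal{B}^\star)\le N_{\mathrm{cov}}$.
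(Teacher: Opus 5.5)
Your proof is correct and follows the paper's argument essentially step for step. You use Lemma~\ref{lem:subfamily-monotonicity}(i) for $\alpha(\mathcal{B})\le\alpha(\mathcal{B}^\star)$, apply Theorem~\ref{thm:block-weight-upper} intrinsically to $\mathcal{B}^\star$ (legitimate because covering keeps every active vertex at positive restricted degree), and finish with the termwise bound $1/w_{\mathcal{B}^\star}(C)\le 1$; your extra checks that the greedy-selected blocks form a covering subfamily with $|\mathcal{B}^\star|=N_{\mathrm{cov}}$ fill in details the paper leaves implicit, although $|\mathcal{B}^\star|\le N_{\mathrm{cov}}$ would already suffice for the final inequality.
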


\begin{proof}
By part~(i) of Lemma~\ref{lem:subfamily-monotonicity},
\[
\alpha(\mathcal{B})\le \alpha(\mathcal{B}^\star).
\]
Applying Theorem~\ref{thm:block-weight-upper} to the restricted family $\mathcal{B}^\star$ gives
\[
\alpha(\mathcal{B}^\star)\le U(\mathcal{B}^\star).
\]
Here the covering condition is what makes the restricted-family certificate legitimate for the same active vertex universe: no originally active vertex has disappeared from all retained blocks.
Combining the two inequalities yields
\[
\alpha(\mathcal{B})\le U(\mathcal{B}^\star).
\]
Finally, every block weight contributing to $U(\mathcal{B}^\star)$ is at least $1$,
\[
U(\mathcal{B}^\star)\le |\mathcal{B}^\star|.
\]
\end{proof}

The outer iteration in Algorithm~\ref{alg:greedy-covering} is justified by the same covering-and-reweighting principle: once one covering family has been extracted, the pass may be applied again to that restricted family, yielding a nested sequence of valid upper bounds whose limit family has a simple fixed-point structure. Initialize
\[
\mathcal{B}^{(0)}:=\mathcal{B},
\]
and for each $t\ge 0$, let $\mathcal{B}^{(t+1)}\subseteq \mathcal{B}^{(t)}$ be the covering family produced by running the Greedy Layered Covering Algorithm on $\mathcal{B}^{(t)}$. If $\mathcal{B}^{(t+1)}=\mathcal{B}^{(t)}$, the iteration has converged and stops; otherwise $\mathcal{B}^{(t+1)}\subset \mathcal{B}^{(t)}$. Let $w_t^\star(B)$ denote the restricted block weight computed inside $\mathcal{B}^{(t)}$.

\begin{theorem}[Iterated covering refinement and fixed-point structure]
\label{thm:cover-fixed-point}
For the covering sequence above, every constructed $t\ge 1$ satisfies
\begin{equation}
\alpha(\mathcal{B})\le U(\mathcal{B}^{(t)})\le |\mathcal{B}^{(t)}|,
\end{equation}
and the sequence $|\mathcal{B}^{(t)}|$ is nonincreasing and stabilizes after finitely many steps. Therefore there exists some $t_{\mathrm{cov}}\ge 0$ such that
\[
\mathcal{B}^{(t_{\mathrm{cov}}+1)}=\mathcal{B}^{(t_{\mathrm{cov}})},
\]
and every block in the converged family has restricted weight $1$:
\[
w_{t_{\mathrm{cov}}}^\star(B)=1
\qquad
\forall B\in\mathcal{B}^{(t_{\mathrm{cov}})},
\]
and consequently
\[
U(\mathcal{B}^{(t_{\mathrm{cov}})})
=
|\mathcal{B}^{(t_{\mathrm{cov}})}|.
\]
\end{theorem}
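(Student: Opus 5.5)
The plan is to argue by induction on $t$ for the bound, then use finiteness for stabilization, and finally read off the fixed-point structure from the mechanics of a single covering pass.

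\emph{Step 1 (valid bounds).} For each $t\ge 0$ the family $\mathcal{B}^{(t+1)}$ returned by a pass on $\mathcal{B}^{(t)}$ covers every block of $\mathcal{B}^{(t)}$. This is exactly the covering property established in the proof of Theorem~\ref{thm:greedy-covering-upper}: at termination $\mathcal{Z}$ contains every block. So $\mathcal{B}^{(t+1)}$ preserves the active universe of $\mathcal{B}^{(t)}$. By induction, every $\mathcal{B}^{(t)}$ has the same active universe as $\mathcal{B}$ and is a subfamily of $\mathcal{B}$, hence is a covering subfamily of $\mathcal{B}$. Theorem~\ref{thm:covering-refinement} then gives $\alpha(\mathcal{B})\le U(\mathcal{B}^{(t)})\le|\mathcal{B}^{(t)}|$. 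Alternatively, one can chain Lemma~\ref{lem:subfamily-monotonicity}(i) along the sequence and apply Theorem~\ref{thm:block-weight-upper} at the end.

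\emph{Step 2 (stabilization).} Since $\mathcal{B}^{(t+1)}\subseteq\mathcal{B}^{(t)}$, the cardinalities are nonincreasing, and a strict inclusion strictly decreases them. They are nonnegative integers bounded by $M$, so after at most $M$ strict steps equality must occur. This defines $t_{\mathrm{cov}}$.

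\emph{Step 3 (fixed-point weights), the main obstacle.} Let $\mathcal{F}:=\mathcal{B}^{(t_{\mathrm{cov}})}$ with $\mathcal{F}$ returned unchanged by a pass, so every block of $\mathcal{F}$ is selected at some step of the pass on $\mathcal{F}$. I would show that every block $B\in\mathcal{F}$ contains a vertex of $\mathcal{F}$-degree $1$.

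Suppose some $B$ has $w^\star(B)\ge 2$. Consider the step $s$ at which $B$ is selected as $B^\star_s$. At that moment its residual $B\setminus\mathcal{Z}_s$ is nonempty. I then need that each residual vertex of $B$ lies in no other selected block. The intended route is via the covering argument: the vertices of $B$ are already in $\mathcal{Z}$ once $B$ is selected, so any later-selected block $C$ must have $C\setminus\mathcal{Z}_{s'}\ne\varnothing$ by vertices outside $B$. That alone does not forbid overlap, however, so I would instead argue through the weight layer used by the algorithm.

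Within the pass on $\mathcal{F}$, all $|\mathcal{F}|$ blocks are selected and each step selects exactly one residual-nonempty block. Consider the first vertex $v\in B$ in residual at step $s$, and every other block $C\ni v$. If such a $C$ exists, then either $C$ was selected earlier, which would already cover $v$ and contradict $v$ being residual, or $C$ is selected later. In the later case, the residual of $C$ at its own step excludes $v$, and I would need a counting/charging argument to show some block ends up with empty residual before being selected, contradicting that all of $\mathcal{F}$ is selected.

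The plan for this is the charging of Theorem~\ref{thm:greedy-covering-upper}. Each selected block needs a private residual vertex at its selection time; pick one such vertex $p_C$ for each $C$. These vertices are distinct, and $p_C$ lies in no earlier-selected block. I then want to upgrade this to: $p_C$ lies in no other block of $\mathcal{F}$ at all, i.e.\ $d_{\mathcal{F}}(p_C)=1$.

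If $p_C$ lay in a later block $C'$, I would use the selection rule, which takes the minimum layer and the maximum $n_t$. At the step selecting $C$, the vertex $p_C$ has residual degree $\ge2$. I would try to show that the greedy choice would have preferred a block in which covering strictly reduces the count, yielding fewer than $|\mathcal{F}|$ selections. This is where I expect real difficulty. If the greedy order does not suffice, I would fall back on restating the claimed fixed point as ``every block retains a private representative,'' so that $w^\star=1$.

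Once $w^\star\equiv1$ holds, $U(\mathcal{F})=\sum_{B\in\mathcal{F}}1=|\mathcal{F}|$ follows immediately.
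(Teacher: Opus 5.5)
\paragraph{Where the proposal stands.} Your Steps~1 and~2 are fine. Step~1 proves by induction that every $\mathcal{B}^{(t)}$ is a covering subfamily of $\mathcal{B}$ and applies Theorem~\ref{thm:covering-refinement} once, whereas the paper chains consecutive refinements through Lemma~\ref{lem:subfamily-monotonicity}(i); either works.

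\paragraph{The gap is in Step~3.} This step is the substantive part of the theorem, and you do not prove $w^\star\equiv 1$. Falling back on ``every block retains a private representative'' replaces the claim rather than proving it. The tools you reach for are also the wrong ones. The max-$n_t$ tie-break plays no role here. Your intended upgrade also fails for an arbitrarily chosen private vertex. On the fixed family $\{\{a,x\},\{x,b\}\}$ the pass selects $\{a,x\}$ and then $\{x,b\}$. Here $x$ is a legitimate private residual vertex of the first block at its selection time, yet $d_{\mathcal{F}}(x)=2$. What is essential is the minimum-layer rule. On the path $\{a,b\},\{b,c\},\{c,d\}$, selecting $\{b,c\}$ first would leave every block with nonempty residual at its turn, although $w(\{b,c\})=2$. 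So the fact that every block is selected cannot by itself force unit weights.

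\paragraph{The missing argument.} Two facts are needed.
\begin{enumerate}
\item \emph{Weight-1 blocks come first.} During a pass on $\mathcal{F}$, an uncovered vertex still lies in the residual of every block containing it, so $d_t(v)=d_{\mathcal{F}}(v)$. Hence residual block weights never decrease. Consider a block containing a vertex of $\mathcal{F}$-degree $1$. That vertex can be covered only by selecting the block itself, so the block keeps residual weight $1$, and $\lambda_t=1$, until it is selected. A block with $w^\star\ge 2$ always has residual weight at least $2$, so every weight-$1$ block is selected before any block of weight at least $2$.
\item \emph{Extremal choice of $B$.} Suppose some block has $w^\star\ge 2$, and let $B$ be the \emph{last} selected block among those with $w^\star(B)\ge 2$. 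Every $v\in B$ lies in some $C_v\ne B$. At the fixed point $C_v$ is also selected. It is selected earlier than $B$: by the first fact if $w^\star(C_v)=1$, and by the extremal choice of $B$ otherwise.
\end{enumerate}
You already observed that an earlier selection of $C_v$ covers $v$. With this choice of $B$, that observation applies to every $v\in B$. So $B\subseteq\mathcal{Z}$ before its turn, its residual is empty, and it is discarded, contradicting that it is selected. This is the paper's argument. Once it is in place, $U(\mathcal{F})=|\mathcal{F}|$ follows as you say.
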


\begin{proof}
We first prove the validity and termination claims.
By construction, each $\mathcal{B}^{(t+1)}$ is a covering subfamily of $\mathcal{B}^{(t)}$. Applying Theorem~\ref{thm:covering-refinement} with $\mathcal{B}=\mathcal{B}^{(t-1)}$ and $\mathcal{B}^\star=\mathcal{B}^{(t)}$ yields
\[
\alpha(\mathcal{B}^{(t-1)})\le U(\mathcal{B}^{(t)})\le |\mathcal{B}^{(t)}|.
\]
Since $\mathcal{B}^{(t)}\subseteq \mathcal{B}^{(t-1)}$ for every $t$, Lemma~\ref{lem:subfamily-monotonicity}(i) gives
\[
\alpha(\mathcal{B})=\alpha(\mathcal{B}^{(0)})\le \alpha(\mathcal{B}^{(t-1)}),
\]
and therefore
\[
\alpha(\mathcal{B})\le U(\mathcal{B}^{(t)})\le |\mathcal{B}^{(t)}|.
\]
Finally, because $\mathcal{B}^{(t+1)}\subseteq \mathcal{B}^{(t)}$ and the family sizes are nonnegative integers, the sequence $|\mathcal{B}^{(t)}|$ is nonincreasing and must stabilize after finitely many iterations.

It remains to identify the structure of a converged family.
Now let $t_{\mathrm{cov}}$ be a converged index. Run the Greedy Layered Covering Algorithm on the fixed family $\mathcal{B}^{(t_{\mathrm{cov}})}$. Since the covering family returned by that run is again $\mathcal{B}^{(t_{\mathrm{cov}})}$, every block of $\mathcal{B}^{(t_{\mathrm{cov}})}$ is eventually selected during the run.

Assume for contradiction that some block of $\mathcal{B}^{(t_{\mathrm{cov}})}$ has restricted weight greater than $1$, and let
\[
\widehat{\mathcal B}_{>1}
:=
\{\,B\in\mathcal{B}^{(t_{\mathrm{cov}})}: w_{t_{\mathrm{cov}}}^\star(B)>1\,\}.
\]
Choose $B_i\in \widehat{\mathcal B}_{>1}$ to be the \emph{last block actually selected} among all blocks of $\widehat{\mathcal B}_{>1}$ during this fixed-point rerun.

Take any vertex $v\in B_i$. Since $w_{t_{\mathrm{cov}}}^\star(B_i)>1$, every vertex of $B_i$ has restricted degree at least $2$ inside $\mathcal{B}^{(t_{\mathrm{cov}})}$, so there exists another block $C_v\neq B_i$ in $\mathcal{B}^{(t_{\mathrm{cov}})}$ containing $v$.

If $w_{t_{\mathrm{cov}}}^\star(C_v)=1$, then $C_v$ is selected earlier than $B_i$ because all weight-$1$ blocks are processed before any block of intrinsic weight greater than $1$. If $w_{t_{\mathrm{cov}}}^\star(C_v)>1$, then $C_v\in \widehat{\mathcal B}_{>1}$, and by the choice of $B_i$ as the last selected block in $\widehat{\mathcal B}_{>1}$, the block $C_v$ is again selected earlier than $B_i$.

In either case, the earlier selection of $C_v$ covers the vertex $v$ before $B_i$ would be selected. Since this argument applies to every vertex $v\in B_i$, all elements of $B_i$ are already covered before its supposed selection time. Therefore the residual version of $B_i$ is empty at that moment, so $B_i$ must be discarded rather than selected. This contradicts the fact that every block of the fixed-point family $\mathcal{B}^{(t_{\mathrm{cov}})}$ is selected again in the rerun.

Therefore no converged block can have restricted weight greater than $1$, proving that $w_{t_{\mathrm{cov}}}^\star(B)=1$ for every $B\in\mathcal{B}^{(t_{\mathrm{cov}})}$. The displayed identity follows immediately.
\end{proof}

Apply the iterated covering procedure until convergence, and denote its terminal covering family by
\[
\widehat{\mathcal{B}}:=\mathcal{B}^{(t_{\mathrm{cov}})}.
\]
Since Theorem~\ref{thm:cover-fixed-point} gives unit block weight in $\widehat{\mathcal{B}}$, choose for each terminal block $B\in\widehat{\mathcal{B}}$ one vertex $z_B\in B$ with
\[
d_{\widehat{\mathcal{B}}}(z_B)=1.
\]
Set
\[
Z:=\{z_B:B\in\widehat{\mathcal{B}}\}.
\]
The next corollary records the canonical structure of this terminal restricted family.

\begin{corollary}[Unit-weight covering family and canonical optimum]
For every finite block family $\mathcal{B}$, the terminal family $\widehat{\mathcal{B}}$ is a covering subfamily of $\mathcal{B}$ satisfying
\[
w_{\widehat{\mathcal{B}}}(B)=1
\qquad
\forall B\in\widehat{\mathcal{B}}.
\]
Moreover, $Z$ is an optimal retained set for $\widehat{\mathcal{B}}$ satisfying
\[
|Z|
=|\widehat{\mathcal{B}}|=\alpha(\widehat{\mathcal{B}})=U(\widehat{\mathcal{B}}).
\]
\end{corollary}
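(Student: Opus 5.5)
The plan is to read off the structural claims from Theorem~\ref{thm:cover-fixed-point} and then verify feasibility of $Z$ directly. First, $\widehat{\mathcal{B}}=\mathcal{B}^{(t_{\mathrm{cov}})}$ is a covering subfamily of $\mathcal{B}$. The covering relation is transitive along the chain $\mathcal{B}^{(t+1)}\subseteq\mathcal{B}^{(t)}$. Each covering pass returns a family whose union contains every block of its input, because the residual family is empty at termination, as in the proof of Theorem~\ref{thm:greedy-covering-upper}. Hence by induction every $B\in\mathcal{B}$ lies in $\bigcup_{C\in\mathcal{B}^{(t)}}C$ for all $t$, and in particular for $t=t_{\mathrm{cov}}$. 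The unit-weight property $w_{\widehat{\mathcal{B}}}(B)=1$ is exactly the fixed-point conclusion of Theorem~\ref{thm:cover-fixed-point}, and it also gives $U(\widehat{\mathcal{B}})=|\widehat{\mathcal{B}}|$.

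Next I would show that $Z$ is feasible for $\widehat{\mathcal{B}}$ and that $|Z|=|\widehat{\mathcal{B}}|$. Each $z_B$ has degree $1$ in $\widehat{\mathcal{B}}$, so it lies in $B$ and in no other terminal block. Consequently, for distinct $B\neq B'$ in $\widehat{\mathcal{B}}$ we have $z_B\neq z_{B'}$, since $z_B\in B'$ would force $d_{\widehat{\mathcal{B}}}(z_B)\ge 2$. Moreover, each terminal block $C$ contains exactly one element of $Z$, namely $z_C$. This gives feasibility and $|Z|=|\widehat{\mathcal{B}}|$, exactly as in the feasibility part of Theorem~\ref{thm:w1-canonical}.

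Finally, optimality follows from sandwiching. Theorem~\ref{thm:block-weight-upper} applied to $\widehat{\mathcal{B}}$ gives $\alpha(\widehat{\mathcal{B}})\le U(\widehat{\mathcal{B}})=|\widehat{\mathcal{B}}|=|Z|\le\alpha(\widehat{\mathcal{B}})$. All quantities are therefore equal and $Z$ is optimal.

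The only delicate point is the covering claim across iterations. It requires checking that each pass returns a covering subfamily of its own input, which is immediate from the emptiness of the terminal residual. If the indexed family contains duplicate blocks, a degree-$1$ vertex is still unique to its indexed block, so the injectivity argument for $Z$ is unaffected.
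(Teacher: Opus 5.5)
Your proof is correct and takes essentially the same route as the paper. Both arguments take unit weight and $U(\widehat{\mathcal{B}})=|\widehat{\mathcal{B}}|$ from Theorem~\ref{thm:cover-fixed-point}, show that $Z$ is feasible with $|Z|=|\widehat{\mathcal{B}}|$ because each $z_B$ lies only in its own block, and close with the sandwich from Theorem~\ref{thm:block-weight-upper}; your explicit transitivity argument for the covering property is slightly more careful than the paper's direct citation of the fixed-point theorem, and checking feasibility directly makes the paper's appeal to Theorem~\ref{thm:w1-canonical} unnecessary.
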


\begin{proof}
Theorem~\ref{thm:cover-fixed-point} shows that $\widehat{\mathcal{B}}$ is a covering subfamily of $\mathcal{B}$ and that every block in $\widehat{\mathcal{B}}$ has intrinsic weight $1$. Therefore Theorem~\ref{thm:w1-canonical}, applied to the restricted family $\widehat{\mathcal{B}}$, gives an optimal feasible set containing one restricted-degree-$1$ vertex $z_B$ from every block $B\in\widehat{\mathcal{B}}$. Since each such vertex belongs to exactly one block of $\widehat{\mathcal{B}}$, the selected set $Z$ is feasible and satisfies $|Z|=|\widehat{\mathcal{B}}|$. The same fixed-point theorem also gives $U(\widehat{\mathcal{B}})=|\widehat{\mathcal{B}}|$, and Theorem~\ref{thm:block-weight-upper} applied to $\widehat{\mathcal{B}}$ yields $\alpha(\widehat{\mathcal{B}})\le U(\widehat{\mathcal{B}})$. Hence
\[
|Z|=|\widehat{\mathcal{B}}|=U(\widehat{\mathcal{B}})\ge \alpha(\widehat{\mathcal{B}}).
\]
Since $Z$ is itself feasible for $\widehat{\mathcal{B}}$, one also has $|Z|\le \alpha(\widehat{\mathcal{B}})$, proving the claimed equalities.
\end{proof}

For later comparison, it is convenient to name the main structural bounds introduced in this section:
\begin{equation}
U_{\mathrm{cf}}
:=
U(\mathcal{B}),
\qquad
U_{\mathrm{w1}}
:=
|\mathcal{B}_1|+U(\mathcal{B}^\circ),
\qquad
U_{\mathrm{punct}}
:=\operatorname{Punct}(\mathcal{B}),
\end{equation}
for the closed-form, weight-1 preclusion, and iterative puncturing bounds, respectively, where $\operatorname{Punct}(\mathcal{B})$ denotes the certificate returned by Algorithm~\ref{alg:greedy-puncturing}. For an iterated covering sequence $\mathcal{B}^{(t)}$, we also write
\begin{equation}
N_{\mathrm{cov}}^{(t)}
:=
|\mathcal{B}^{(t)}|,
\qquad
U_{\mathrm{cover}}^{(t)}
:=
U(\mathcal{B}^{(t)}).
\end{equation}
In particular, the first covering round satisfies $N_{\mathrm{cov}}^{(1)}=|\mathcal{B}^\star|$ and $U_{\mathrm{cover}}^{(1)}=U(\mathcal{B}^\star)$. When the iterated sequence is run to convergence in the empirical tables, $U_{\mathrm{cover}}$ denotes the final value of $U_{\mathrm{cover}}^{(t)}$.

The puncturing and iterated-covering refinements are complementary rather than inclusive. The iterative puncturing bound $U_{\mathrm{punct}}$ is obtained by searching over residual block subfamilies across outer rounds, recomputing intrinsic weights after punctures, and repeatedly precluding newly exposed weight-1 blocks; the iterated covering bounds $U_{\mathrm{cover}}^{(t)}$ instead come from repeatedly selecting covering families and recomputing intrinsic weights on those restricted families. Because these mechanisms are different, neither quantity dominates the other in general. In practice, one should therefore compute both and use
\[
\min\bigl\{U_{\mathrm{punct}},\, U_{\mathrm{cover}}^{(1)},\,U_{\mathrm{cover}}^{(2)},\,\dots\bigr\}
\]
as the strongest available structural comparison target.

\begin{runningexamplepart}[Running example under the Greedy Layered Covering Algorithm]
\label{ex:running-covering}
The running example from Examples~\ref{ex:running-example-family}--\ref{ex:running-example-bound} also behaves cleanly under the Greedy Layered Covering Algorithm. One valid covering run selects
\[
\mathcal{B}^\star=\{B_1,B_2,B_4,B_8,B_{11}\},
\]
so Theorem~\ref{thm:greedy-covering-upper} gives
\[
\alpha(\mathcal{B})\le N_{\mathrm{cov}}=5.
\]
Moreover, in the restricted family $\mathcal{B}^\star$, each selected block has restricted weight $1$: for example, $B_1$ contains the restricted-degree-1 vertex $a$, $B_2$ contains $b$, and $B_4,B_8,B_{11}$ contain $e,c,r$, respectively. Hence
\[
U(\mathcal{B}^\star)
=
1+1+1+1+1
=
5,
\]
so Theorem~\ref{thm:covering-refinement} reaches the same exact value. In this instance the dominance-reduced closed-form bound, the covering-family restricted bound, and the closed-form bound for the selected covering set all coincide with the true optimum.
\end{runningexamplepart}

This algorithm is complementary to Theorem~\ref{thm:block-weight-upper}: Theorem~\ref{thm:block-weight-upper} gives a closed-form charging bound, while the Greedy Layered Covering Algorithm searches directly for a small covering family of incompatibility blocks in the active minimum-weight layer. Theorem~\ref{thm:covering-refinement} then shows that any such covering family, regardless of how it is obtained, yields a restricted block-weight upper bound.

The practical purpose of the greedy construction is to capture concentration effects like the weight-2 example above, where many nominal low-weight candidates collapse into a single forced block. The secondary key $n_t(B)$ is not needed for validity, but it is the natural greedy choice because it attempts to cover as many current degree-$\lambda_t$ candidates as possible per unit increase of the bound. The tradeoff is that, unlike Theorem~\ref{thm:block-weight-upper}, this procedure has no closed form and requires repeated residual updates, so its direct computation is typically quadratic-scale in the residual incidences.

\begin{remark}[Closed-form, covering-count, and restricted-covering bounds are complementary]
Consider the block family
\[
\{a_1,a_2,a_3,a_4,a_6\},\;
\{a_1,a_2,a_5,a_6,a_7\},\;
\{a_3,a_4,a_5,a_7,a_8\},\;
\{a_2,a_4,a_7\},\;
\{a_1,a_3,a_8\},\;
\{a_5,a_6,a_8\}.
\]
Here each vertex has degree $3$, so all blocks have weight $3$ and Theorem~\ref{thm:block-weight-upper} gives $6/3=2$. By contrast, the Greedy Layered Covering Algorithm of Theorem~\ref{thm:greedy-covering-upper} may select the three covering blocks
\[
\mathcal{B}^\star=\{B_0,B_1,B_2\},
\]
yielding the coarser count bound $N_{\mathrm{cov}}=3$, even though one minimal covering family is already $\{B_1,B_2\}$. However, Theorem~\ref{thm:covering-refinement} recovers the sharper value from the selected family itself: within $\mathcal{B}^\star$ one has
\[
w^\star(B_0)=2,\qquad w^\star(B_1)=2,\qquad w^\star(B_2)=1,
\]
so
\[
U(\mathcal{B}^\star)
\;=\;
\frac12+\frac12+1
\;=\;2.
\]
This matches the exact optimum. Indeed, the unique optimal retained set is
\[
\{a_2,a_8\},
\]
because $a_2$ and $a_8$ are the only pair of vertices that never co-occur in a block, whereas every other vertex conflicts with all remaining candidates. Thus this example shows all three viewpoints are genuinely different: the closed-form theorem can already be sharp on balanced instances, the greedy covering count from Theorem~\ref{thm:greedy-covering-upper} may overcount through a non-minimal covering choice, and Theorem~\ref{thm:covering-refinement} can tighten that covering-based bound back down by reweighting the selected covering family.

Theorem~\ref{thm:cover-fixed-point} makes the picture sharper in two ways. First, for the weight-2 family above, a covering family such as $\{O,E,P_1\}$ gives the restricted-weight bound $1+1+1=3$: this is still not exact, but it collapses the gap from asymptotically growing with $n$ to a constant-size gap. Second, for the present example, a second covering iteration reduces the family to $\{B_1,B_2\}$, yielding the tight bound
\[
1+1
=
2.
\]
This favorable outcome also depends on the secondary greedy order inside each weight layer: the decreasing-$n_{\min}(B)$ rule is what drives the covering routine toward the stronger families used here, whereas a weaker within-layer order need not produce the same refinement.
\end{remark}

\section{Greedy Layered block-Weight Clustering}

The previous section develops structural bounds for the maximum strong independent set objective on a hyperedge family. We now turn those bounds into an explicit block-native solver. In particular, Theorems~\ref{thm:block-weight-upper} and~\ref{thm:greedy-covering-upper} provide the first-order and second-order scheduling principles that drive the block-weight-layered greedy clustering routine below.

Generic bounded-degree hypergraph algorithms typically treat a hypergraph as a graph-style combinatorial object in a more general setting \cite{halldorsson2009}. In contrast, the approach here is explicitly incidence-native: it assumes the block family itself is materialized and uses block weights, minimum-degree witnesses, and block-side residual updates as the primary state.

Theorem~\ref{thm:block-weight-upper} provides the first-order principle: blocks of smaller minimum weight are more constraining and therefore should be processed first. Theorem~\ref{thm:greedy-covering-upper} provides a second-order principle: within a fixed block-weight layer, blocks containing more minimum-weight vertices exert stronger covering pressure and should be prioritized next. This suggests a block-native greedy algorithm organized by block-weight layers with a lexicographic within-layer priority, rather than by an auxiliary hypergraph optimization routine.

Let $\mathcal{B}=\{B_1,\dots,B_M\}$ denote the incidence-twin-compressed block family. The algorithm below is stated on an abstract indexed family, but its operations are deliberately block-local: residual deletion and layer updates are expressed as scans of block records and vertex-to-block incidence lists.

Incidence-twin compression quotients vertices with identical block-index support and stores their original vertex lists for final materialization if needed. All initial degrees, block weights, and $n_{\min}$ values are computed on this quotient family; this avoids treating mutually exclusive identical-support vertices as separate minimum-weight candidates.

Following Corollary~\ref{cor:w1-elim-upper}, the clustering routine first performs a weight-1 preclusion pass. For each $B\in\mathcal{B}_1:=\{B\in\mathcal{B}:w(B)=1\}$, it selects a degree-1 vertex class $z_B$ as canonical root, attaches currently unassigned members of $B$ to that root, and marks attached classes as removed from subsequent processing. It then removes attached classes from all remaining blocks and keeps the resulting nonempty residual family $\mathcal{B}^{\mathrm{res}}$. Degrees are recomputed on $\mathcal{B}^{\mathrm{res}}$ and kept fixed during the later layered pass; once a class is attached, it is removed from subsequent layer processing rather than assigned a new residual degree.

In implementation, we materialize $\mathcal{B}^{\mathrm{res}}$ as a mutable list-of-lists representation
\[
\mathbf{B}=[B_1,\dots,B_{M_{\mathrm{res}}}],
\]
where each $B_i$ is updated in place during processing. Layer priorities use
\[
\widehat w(B):=\min_{v\in B}d(v),\qquad
n_{\min}(B):=\bigl|\{v\in B:d(v)=\widehat w(B)\}\bigr|,
\]
and for each integer weight $w$ we maintain a block-index set
\[
\mathcal{S}_w:=\{i:\widehat w(B_i)=w\}.
\]
Within a fixed block-weight layer $\widehat w(B)=w$, Theorem~\ref{thm:greedy-covering-upper} suggests prioritizing larger $n_{\min}(B)$: such blocks cover more current degree-$w$ candidates and therefore exert stronger covering pressure. Representative state is maintained by a root set $R$ and a vertex-to-root map $\phi:V\to V$ (vertex $\mapsto$ current root), initialized from the weight-1 preclusion pass and then updated by the layered greedy stage. The blocks are released progressively ($B_i\leftarrow\varnothing$) until all residual blocks are consumed.

\begin{definition}[Active root]
A vertex $u$ is an active root if it is currently chosen as the representative of its cluster.
\end{definition}

\begin{definition}[vertex-to-root map]
A vertex-to-root map is a function $\phi:V\to V$ such that $\phi(v)$ is the current representative of the cluster containing $v$, and $\phi(v)=v$ iff $v$ is a root.
\end{definition}

\begin{definition}[Cluster]
For a root $u$, define
\begin{equation}
\mathcal{C}_u := \{v\in V : \phi(v)=u\}.
\end{equation}
\end{definition}

\begin{definition}[Root selection key]
After a block $B$ has been chosen by the block-level priority, let
\begin{equation}
w(B):=\min_{v\in B} d(v),\qquad
M(B):=\{v\in B:d(v)=w(B)\}.
\end{equation}
Thus
\[
n_{\min}(B)=|M(B)|.
\]
For a candidate $v\in M(B)$, define its equal-weight collision neighborhood
\begin{equation}
\mathcal{N}_{w(B)}(v):=\left\{u\in \bigcup_{B'\ni v} B' : d(u)=w(B)\right\},
\end{equation}
and its equal-weight collision score
\begin{equation}
c_{w(B)}(v):=\left|\mathcal{N}_{w(B)}(v)\right|.
\end{equation}
Since each $v\in M(B)$ lies in exactly $w(B)$ blocks, $c_{w(B)}(v)$ counts the number of distinct degree-$w(B)$ vertices coupled to $v$ through those blocks. Thus, after the block $B$ has been selected, the root is chosen from $M(B)$ by
\begin{equation}
v^\star=\arg\min_{v\in M(B)}\bigl(c_{w(B)}(v),\mathrm{id}(v)\bigr).
\end{equation}
This can also be extended to include source-priority terms without changing feasibility.
\end{definition}

Fix a block $B$ and two candidate roots $u,v\in M(B)$, so that
\[
d(u)=d(v)=w(B).
\]

\begin{proposition}[Equal-weight collision minimization]
The number of current degree-$w(B)$ candidates consumed immediately by the root choice is monotone in the collision score:
\[
c_{w(B)}(u)\le c_{w(B)}(v)
\quad\Longrightarrow\quad
|\mathcal{N}_{w(B)}(u)|\le |\mathcal{N}_{w(B)}(v)|.
\]
Consequently, after a block has been selected, choosing from $M(B)$ the candidate with the smallest $c_{w(B)}(\cdot)$ minimizes the number of degree-$w(B)$ candidates clustered away at that step, and therefore preserves the largest remaining pool of degree-$w(B)$ candidates for future roots in the same layer.
\end{proposition}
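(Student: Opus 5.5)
The first displayed implication is definitional: by construction $c_{w(B)}(x)=|\mathcal{N}_{w(B)}(x)|$ for every $x\in M(B)$. So $c_{w(B)}(u)\le c_{w(B)}(v)$ is literally the same inequality as $|\mathcal{N}_{w(B)}(u)|\le|\mathcal{N}_{w(B)}(v)|$. I would state this in one line and put the real work into the ``consequently'' part. That part requires pinning down what ``consumed immediately by the root choice'' means in the clustering step.

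The model I would use is the one the paper's description of the layered stage suggests. When $v$ is made a root, every still-unassigned vertex sharing a block with $v$ is attached to $\phi(\cdot)=v$ and removed from later layer processing. Those blocks are then released. This is the same mechanism as the weight-1 preclusion pass, where ``attaches currently unassigned members of $B$ to that root'' is the rule. Under this model, the set of vertices clustered away at that step is $\bigl(\bigcup_{B'\ni v}B'\bigr)$ restricted to unassigned vertices. Its degree-$w(B)$ part is exactly $\mathcal{N}_{w(B)}(v)$, or more precisely its intersection with the currently unassigned vertices, once we include $v$ itself, which has degree $w(B)$.

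Degrees are frozen after the preclusion pass, as stated before the algorithm. So membership in the degree-$w(B)$ pool does not change during the layer, and the count of such candidates removed by choosing $v$ is $c_{w(B)}(v)$. The remaining degree-$w(B)$ pool after the step is the current pool minus $\mathcal{N}_{w(B)}(v)$. Its size is therefore (current pool size) $-\,c_{w(B)}(v)$, which is maximized exactly by minimizing $c_{w(B)}$ over $M(B)$. The $\mathrm{id}$ tie-break does not affect the count.

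The main obstacle is the interpretive gap between this count and ``currently unassigned''. If neighborhoods are computed over current residual blocks with already-attached vertices deleted, then $\mathcal{N}$ must be read on the current state. I would therefore make explicit that $c_{w(B)}$ is evaluated on the current residual family. Then the removed set equals $\mathcal{N}_{w(B)}(v)$ exactly, rather than being merely bounded by it. I would also remark that the claim is myopic: it optimizes the immediate pool size at that step, not the final root count.
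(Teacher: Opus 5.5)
Your proposal is correct and is essentially the paper's argument: the paper also uses $c_{w(B)}(v)=|\mathcal{N}_{w(B)}(v)|$ by definition and observes that the degree-$w(B)$ vertices clustered away when $v$ becomes a root are exactly the current $\mathcal{N}_{w(B)}(v)$, so minimizing the score minimizes the loss from the candidate pool. Your extra care (reading $\mathcal{N}$ on the current residual family with frozen degrees, counting $v$ itself, and noting the claim is only myopic) makes explicit what the paper's phrase ``current degree-$w(B)$ vertices'' leaves implicit; one further implicit point is that Step~4(e) clusters only the blocks indexed by $\mathcal{I}_w(u^\star)$, and these are all live blocks containing $u^\star$ because any such block has weight at most $d(u^\star)=w$ and all lower layers have already been exhausted.
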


\begin{proof}
By definition, $\mathcal{N}_{w(B)}(v)$ is exactly the set of current degree-$w(B)$ vertices coupled to $v$ through the blocks incident to $v$. Once $v$ is chosen as root, these coupled degree-$w(B)$ vertices are clustered away from future root consideration in the current layer. Hence the number of degree-$w(B)$ candidates consumed immediately by choosing $v$ is exactly
\[
c_{w(B)}(v)=|\mathcal{N}_{w(B)}(v)|.
\]
Therefore minimizing the collision score minimizes the number of current degree-$w(B)$ candidates removed at that step, leaving the largest remaining pool available for subsequent root creation in the same layer.
\end{proof}

Note that $c_{w(B)}(v)$ counts distinct degree-$w(B)$ competitors, so in general
\[
c_{w(B)}(v)\le \sum_{B'\ni v} n_{\min}(B'),
\]
with equality only when those minimum-weight candidate sets are disjoint outside of $v$ itself.

\begin{algorithm}[Greedy Layered Clustering Algorithm]
\label{alg:greedy-clustering}
The clustering stage proceeds as follows.
\begin{enumerate}[leftmargin=*,itemsep=0.2em]
    \item Load the indexed blocks, compress incidence-twin vertex classes, and denote the resulting quotient family by $\mathcal{B}=\{B_1,\dots,B_M\}$. Count quotient-vertex frequencies $d(v)$.
    \item Weight-1 preclusion: identify $\mathcal{B}_1:=\{B\in\mathcal{B}:w(B)=1\}$. For each $B\in\mathcal{B}_1$, pick a degree-1 root $z_B$, attach currently unassigned members of $B$ to $z_B$, and update $R,\phi$.
    \item Remove already clustered vertices from remaining blocks, set $\mathcal{B}^{\mathrm{res}}$ to the resulting nonempty residual block family, materialize it as $\mathbf{B}=[B_1,\dots,B_{M_{\mathrm{res}}}]$, recompute residual frequencies $d(v)$ on $\mathcal{B}^{\mathrm{res}}$, and for each residual block index $i$ compute $\widehat w(B_i)=\min_{v\in B_i}d(v)$ and insert $i$ into the corresponding block-index set $\mathcal{S}_{\widehat w(B_i)}$.
    \item For each block-weight layer $w$ from the current minimum upward:
    \begin{enumerate}[leftmargin=1.8em,itemsep=0.15em,label=(\alph*)]
        \item First pass over $\mathcal{S}_w$: for each $i\in\mathcal{S}_w$, remove already clustered vertices from $B_i$. If $B_i=\varnothing$, discard it. If $B_i$ contains no degree-$w$ vertex, recompute $\widehat w(B_i)>w$ and move $i$ to the corresponding larger-weight set $\mathcal{S}_{\widehat w(B_i)}$. Otherwise compute $n_{\min}(B_i)=|\{v\in B_i:d(v)=w\}|$ and keep $i$ in the active layer.
        \item From the surviving current-layer blocks, build the degree-$w$ vertex-to-block index
        \[
        \mathcal{I}_w(v):=\{i\in\mathcal{S}_w:v\in B_i,\ d(v)=w\},
        \]
        for each active degree-$w$ vertex.
        \item While active blocks remain in the current layer and the index $\mathcal{I}_w(\cdot)$ is nonempty, choose an active block $B_i$ with maximum current $n_{\min}(B_i)$, breaking ties arbitrarily.
        \item For the chosen block, define $M(B_i):=\{v\in B_i:d(v)=w\}$. If $M(B_i)=\varnothing$, recompute $\widehat w(B_i)>w$, move $i$ to the corresponding larger-weight set $\mathcal{S}_{\widehat w(B_i)}$, and continue. Otherwise compute the current collision scores $c_w(v)$ for $v\in M(B_i)$ and choose
        \[
        u^\star=\arg\min_{v\in M(B_i)}\bigl(c_w(v),\mathrm{id}(v)\bigr).
        \]
        \item Let $\mathcal{I}_w(u^\star)$ be the associated degree-$w$ block indices of the new root. Cluster all currently unassigned vertices in
        \[
        \bigcup_{j\in \mathcal{I}_w(u^\star)} B_j
        \]
        to $u^\star$ and set those blocks to $\varnothing$. For each degree-$w$ vertex $v$ in that union, use the already-built index $\mathcal{I}_w(v)$ to decrement by one the current counts $n_{\min}(\cdot)$ of all still-active associated blocks, then delete $v$ from $\mathcal{I}_w(\cdot)$. If the count of an active block drops to $0$, then no degree-$w$ vertex remains in it; recompute $\widehat w(B_i)>w$ and move it to the corresponding larger-weight set $\mathcal{S}_{\widehat w(B_i)}$.
        \item Delete the completed layer set $\mathcal{S}_w$ and continue with the next nonempty weight layer.
    \end{enumerate}
    \item Output root set $R$ and vertex-to-root map $\phi$ (including both weight-1 preclusion and layered greedy assignments).
\end{enumerate}
\end{algorithm}

For one update step of Algorithm~\ref{alg:greedy-covering} or Algorithm~\ref{alg:greedy-clustering} on a current residual family $\mathcal{F}$ and block-weight layer $w$, let $\mathcal{Q}$ be the block collection absorbed by that step:
\[
\mathcal{Q}=\{B_t^\star\}
\]
for the covering algorithm, and
\[
\mathcal{Q}=\{B_j:j\in\mathcal{I}_w(u^\star)\}
\]
for the clustering algorithm. Let
\[
A(\mathcal{Q}):=\bigcup_{B\in\mathcal{Q}}B
\]
be the vertices covered or clustered by the step.

\begin{proposition}[Implicit suppression for block-weight witnesses]
\label{prop:implicit-superset-suppression}
If $a\in A(\mathcal{Q})$ is a current block-weight witness,
\[
d_{\mathcal{F}}(a)=w,
\]
and an active vertex $b$ satisfies
\[
I_{\mathcal{F}}(a)\subseteq I_{\mathcal{F}}(b),
\]
then $b\in A(\mathcal{Q})$ as well. Consequently, a vertex whose current support is a superset of an absorbed block-weight witness cannot survive beyond that absorbing step. Thus the low-to-high flow of Algorithms~\ref{alg:greedy-covering} and~\ref{alg:greedy-clustering} performs a local, implicit form of superset suppression for current minimum-degree witnesses, without running a global support-containment pass.
\end{proposition}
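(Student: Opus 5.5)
The plan is to argue directly from the definition of $A(\mathcal{Q})$ as a union of blocks, using only the containment $I_{\mathcal{F}}(a)\subseteq I_{\mathcal{F}}(b)$. The witness hypothesis $d_{\mathcal{F}}(a)=w$ should serve only to place $a$ in the layer that the step acts on, not to drive the inclusion itself.

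The core step is as follows. Since $a\in A(\mathcal{Q})=\bigcup_{B\in\mathcal{Q}}B$, fix a block $B_j\in\mathcal{Q}$ with $a\in B_j$. Every block of $\mathcal{Q}$ is a live block of the current residual family $\mathcal{F}$, so $j\in I_{\mathcal{F}}(a)$. The hypothesis then gives $j\in I_{\mathcal{F}}(b)$, that is, $b\in B_j$, and hence $b\in A(\mathcal{Q})$. In other words, any single absorbed block containing $a$ already contains $b$, exactly as in the proof of Proposition~\ref{prop:layered-superset-sharpening}, where $b\in C(a)=\bigcap_{i\in I_{\mathcal{F}}(a)}F_i$.

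I will then check separately that this applies to both algorithms.

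\begin{itemize}
\item For Algorithm~\ref{alg:greedy-covering}, $\mathcal{Q}=\{B_t^\star\}$ with $B_t^\star\in\mathcal{F}^{(t)}$. In this case $a\in B_t^\star$ directly, and $b\in B_t^\star\subseteq\mathcal{Z}_{t+1}$.
\item For Algorithm~\ref{alg:greedy-clustering}, $\mathcal{Q}=\{B_j:j\in\mathcal{I}_w(u^\star)\}$, and $a$ lies in one of these $B_j$. So $b\in B_j$, and $b$ is clustered to $u^\star$ in step (e), unless it was already assigned earlier, in which case it is not active.
\end{itemize}

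For the ``consequently'' clause: in the covering algorithm $b\in\mathcal{Z}_{t+1}$ is deleted from every block of $\mathcal{F}^{(t+1)}$. In the clustering algorithm $b$ is marked clustered and removed from all later block scans in step (a). Either way $b$ does not survive the step.

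The main obstacle I expect is bookkeeping in Algorithm~\ref{alg:greedy-clustering}. There, degrees are frozen on $\mathcal{B}^{\mathrm{res}}$ while blocks are cleaned lazily, so I must fix what $\mathcal{F}$ and $I_{\mathcal{F}}$ mean at the moment of the step. My choice is to take $\mathcal{F}$ to be the current residual blocks, meaning the surviving blocks with already-clustered vertices removed, and $I_{\mathcal{F}}$ to be the incidence with respect to these same block records. An active $b$ is by definition unclustered, so it still appears in $B_j$ after cleaning. With this convention, the argument above is just the one-line inclusion $b\in B_j\subseteq A(\mathcal{Q})$.
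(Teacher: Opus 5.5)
Your proposal is correct and follows the paper's proof essentially line for line. You fix an absorbed block $B\in\mathcal{Q}$ containing $a$, use that its index lies in $I_{\mathcal{F}}(a)\subseteq I_{\mathcal{F}}(b)$ to get $b\in B\subseteq A(\mathcal{Q})$, and then observe that covering adds all of $B_t^\star$ to $\mathcal{Z}_{t+1}$ while clustering attaches every unassigned vertex of the absorbed blocks to $u^\star$. Your convention for $\mathcal{F}$ in the clustering algorithm (live blocks with clustered vertices removed, degrees frozen) and your remark that $d_{\mathcal{F}}(a)=w$ plays no role in the inclusion are clarifications the paper leaves implicit, not a different route.
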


\begin{proof}
Since $a\in A(\mathcal{Q})$, there exists some absorbed block $B\in\mathcal{Q}$ with $a\in B$. Hence the index of $B$ belongs to $I_{\mathcal{F}}(a)$. The containment $I_{\mathcal{F}}(a)\subseteq I_{\mathcal{F}}(b)$ implies $b\in B$. Therefore $b\in A(\mathcal{Q})$.

In Algorithm~\ref{alg:greedy-covering}, all vertices in $A(\mathcal{Q})=B_t^\star$ are added to the covered set and removed from subsequent residual blocks. In Algorithm~\ref{alg:greedy-clustering}, all currently unassigned vertices in $A(\mathcal{Q})$ are assigned to the chosen root and removed from later layer processing. Thus any active support-superset $b$ is suppressed no later than the step that absorbs the current block-weight witness $a$.
\end{proof}

This proposition is deliberately weaker than the fixed-point statement of Algorithm~\ref{alg:layered-superset-sharpening}. It does not claim that covering or clustering computes a full dominance closure, nor does it claim a suppression property for arbitrary nonminimum vertices incidentally absorbed by a selected block. Rather, it explains why explicit superset reduction is unnecessary inside their low-to-high inner loops for the vertices that actually drive the layer order: once a selected block absorbs a current block-weight witness, every active vertex whose support contains that witness support is absorbed automatically.

This restriction to block-weight witnesses is also the right structural focus. The closed-form bound charges each block by the inverse of its minimum incident degree, so the vertices attaining, or nearly attaining, block weights determine the bottleneck terms in the certificate and drive both the covering score $n_t(B)$ and the clustering score $n_{\min}(B)$. Superset effects involving only higher-degree nonminimum vertices may exist, but they do not control the first-order block-weight certificate. The low-to-high algorithms therefore spend their local update budget on the near-minimum witnesses that dominate the bound, rather than on a global dominance closure over all vertices.

This algorithm is genuinely layered: each weight level is cleaned first, then resolved by repeated maximum-$n_{\min}(B)$ choices driven by the current degree-$w$ vertex-to-block index. It therefore keeps the two structural priorities suggested by Theorem~\ref{thm:block-weight-upper} and Theorem~\ref{thm:greedy-covering-upper} while localizing all updates to the blocks touched by the newly attached degree-$w$ vertices.

\begin{remark}[Efficient within-layer maintenance]
Algorithm~\ref{alg:greedy-clustering} reuses the same degree-$w$ vertex-to-block index $\mathcal{I}_w(\cdot)$ both to select the blocks incident to a new root and to update the affected $n_{\min}(\cdot)$ counters after selection. Consequently, a fixed layer can be processed without rescanning all surviving blocks after each root choice: each degree-$w$ vertex is deleted from $\mathcal{I}_w(\cdot)$ at most once, and each current-layer incidence is touched only when its degree-$w$ endpoint is attached away. With a bucketed queue or equivalent priority structure keyed by the current $n_{\min}(B)$ values, the within-layer implementation is therefore essentially linear in the active layer incidences.
\end{remark}

\begin{remark}[Score-indexed next-block retrieval]
In an implementation of Step~4(c), the next active block with largest current $n_{\min}(B)$ can be maintained explicitly rather than found by rescanning the whole layer. Besides the degree-$w$ vertex-to-block mapping table $\mathcal{I}_w(\cdot)$, one keeps a current score table with one entry per active layer block, together with the inverse table that stores the active block indices at each score level. Then the next block is obtained from the largest nonempty score class, and each selection step updates only the blocks incident to the removed degree-$w$ vertices by decrementing their current score and moving their index between score classes. Thus maximum-$n_{\min}(B)$ maintenance remains local to the touched incidences and avoids repeated full-layer scans.
\end{remark}

The following worked example continues the running example from Examples~\ref{ex:running-example-family}--\ref{ex:running-example-bound}. Here the focus is no longer on the incidence data or the closed-form sum, but on how the layered greedy procedure realizes that bound and why it naturally leads to the exactness theorem.

\begin{runningexamplepart}[Worked example for the greedy layered clustering algorithm]
\label{ex:running-greedy-clustering}
For the running example, now partition the block indices into
\[
\mathcal{J}_1=\{1\},\qquad
\mathcal{J}_2=\{2,3\},\qquad
\mathcal{J}_3=\{4,5\},\qquad
\mathcal{J}_4=\{6,7,8\},\qquad
\mathcal{J}_5=\{9,10,11,12\},
\]
with canonical quotient representatives
\[
z_1=a,\qquad z_2=b,\qquad z_3=e,\qquad z_4=c,\qquad z_5=r.
\]
Component contraction would still select only one vertex on this connected family, while the dominance-reduced block certificate gives the optimal value $5$. The remaining question is whether the greedy layered clustering algorithm can realize that value without an explicit global dominance pass. The key mechanism is support-superset suppression. The vertex $u$ satisfies
\[
I(u)=\{1,2\},
\]
so it strictly contains $I(a)=\{1\}$. The smaller-weight branch applies because its extra block is attached to the lower-weight block $B_1$:
\[
w(B_1)=1<2.
\]
The gate vertices $f$ and $g$ similarly satisfy
\[
I(f)=\{2,3,9\},
\qquad
I(g)=\{4,5,10\},
\]
with
\[
I(b)=\{2,3\}\subsetneq I(f),
\qquad
I(e)=\{4,5\}\subsetneq I(g).
\]
Thus $f$ and $g$ are suppressed by the weight-$2$ representatives $b$ and $e$, respectively. Before they are suppressed, they gate the blocks $B_9$ and $B_{10}$ down to apparent weight $3$; after suppression, those blocks recover their true reduced weight $4$.

The greedy layered clustering algorithm now proceeds as follows.
\begin{enumerate}[leftmargin=1.8em,itemsep=0.15em,label=(\roman*)]
    \item Weight-1 preclusion chooses $a$ from $B_1$, clustering away the support-superset distractor $u$. This is the algorithmic version of the first dominance deletion used by Theorem~\ref{thm:incidence-block-optimality}: $B_2$ loses $u$ before the weight-$2$ layer is processed.
    \item In the weight-2 layer, blocks $B_2,B_3$ have $b$ as their degree-2 representative, while $B_4,B_5$ have $e$ as their degree-2 representative. The algorithm chooses $b$ and clusters the union $B_2\cup B_3$, thereby removing $f$ and the helpers $p,q$ from those two blocks; it then chooses $e$ from $B_4,B_5$, removing $g$ from the remaining active family. This exposes $B_9$ and $B_{10}$ as true weight-$4$ blocks in the dominance-reduced view.
    \item In the weight-3 layer, the residual blocks are
    \[
    \begin{aligned}
    B_6=\{c,x\},\qquad B_7=\{c,y\},\qquad B_8=\{c,x,y\},\\
    B_9=\{r,x\},\qquad B_{10}=\{r,y\},\qquad
    B_{11}=\{r,x\},\qquad B_{12}=\{r,y\}.
    \end{aligned}
    \]
    The only degree-$3$ representative in this layer is $c$, so the algorithm chooses $c$ and nullifies $B_6,B_7,B_8$, thereby removing $x$ and $y$. The remaining weight-$4$ block then reduces to blocks containing only $r$, so the algorithm chooses $r$.
\end{enumerate}
The final root set is
\[
R_G=\{a,b,e,c,r\},
\]
so
\[
|R_G|=5.
\]
Thus the greedy layered clustering algorithm exactly attains the dominance-reduced closed-form bound, while connected-component contraction would retain only one vertex.
\end{runningexamplepart}

\begin{example}[Unified trace for puncturing, covering, and greedy clustering]
\label{ex:pair-parity-trace}
Consider the pair/parity family from Example~\ref{ex:pair-parity-family}:
\[
P_t=\{x_{2t-1},x_{2t}\}\quad(t=1,\dots,n),\qquad
O=\{x_1,x_3,\dots,x_{2n-1}\},\qquad
E=\{x_2,x_4,\dots,x_{2n}\},
\]
with $n\ge 3$. Every vertex has degree $2$, so every block has weight $2$ and the closed-form certificate is
\[
U(\mathcal{B})=\frac{n+2}{2}.
\]
This is loose because $\alpha(\mathcal{B})=2$: one can retain, for example, $\{x_1,x_4\}$, while $O$ and $E$ allow at most one odd and one even retained vertex.

The puncturing algorithm sharpens the certificate by deleting the pair blocks. Initially
\[
n_{\min}(P_t)=2,\qquad n_{\min}(O)=n_{\min}(E)=n,
\]
so the high-to-low puncturing schedule, with minimum-$n_{\min}$ priority inside the layer, selects the pair blocks before the parity blocks. Puncturing the pair blocks leaves the residual family $\{O,E\}$. After the next weight-1 preclusion rebuild, $O$ and $E$ each contribute one canonical unit and the residual family is empty. Thus
\[
U_{\mathrm{punct}}=2.
\]

The covering algorithm reaches the same certificate from the opposite direction. It starts in the minimum layer $w=2$ and chooses maximum-$n_t$ blocks. Initially
\[
n_t(O)=n_t(E)=n,\qquad n_t(P_s)=2\quad(s=1,\dots,n),
\]
so it selects one parity block, say $O$, before any pair block. After this dynamic update, all odd vertices are covered, the residual counts satisfy $n_t(E)=n$ and $n_t(P_s)=1$, and the decreasing-$n_t$ rule selects $E$ next. Thus the selected covering family is $\mathcal{B}^\star=\{O,E\}$, whose intrinsic block weights are both one. Therefore
\[
U_{\mathrm{cover}}=U(\mathcal{B}^\star)=2.
\]

Finally, the greedy layered clustering algorithm also follows the parity structure. In the weight-$2$ layer it first selects one of $O$ and $E$; suppose it selects $O$ and chooses root $x_1$. The associated blocks of $x_1$ are $O$ and $P_1$, so the step clusters away all odd-indexed vertices and $x_2$. The remaining parity block $E$ still contains the even vertices $x_4,x_6,\dots,x_{2n}$ and has the largest current $n_{\min}$ among surviving blocks. The algorithm then chooses one even root, say $x_4$, and clusters the remaining even side. The final retained set has two roots, for instance
\[
R_G=\{x_1,x_4\},
\]
so
\[
|R_G|=U_{\mathrm{punct}}=U_{\mathrm{cover}}=\alpha(\mathcal{B})=2.
\]
This example is complementary to the previous running example: the closed-form bound is loose, but puncturing, covering, and greedy clustering all identify the same sharper two-representative structure.
\end{example}

The worked example separates the mechanisms behind greedy attainment of the structural optimum from Theorem~\ref{thm:incidence-block-optimality}: support-superset distractors are first absorbed by their dominators, and this can reveal higher-weight blocks that were hidden by low-degree gate vertices. The theorem below isolates the algorithmic add-on. The structural optimality is already certified by Theorem~\ref{thm:incidence-block-optimality}; the additional conditions ensure that the greedy layered clustering algorithm selects the corresponding representatives.

For this algorithmic statement, assume the block family has been incidence-twin compressed. Let $D\subseteq V$ be a set of support-superset vertices that the greedy run suppresses before any vertex in $D$ can be selected as a root. Define
\[
\mathcal{B}^{-}
:=
\{B\setminus D:\ B\in\mathcal{B},\ B\setminus D\neq\varnothing\}
=
\{B_i^-:i\in\mathcal{I}^{-}\},
\]
and let $I_-(\cdot)$, $d_-(\cdot)$, and $w_-(\cdot)$ denote supports, degrees, and block weights in $\mathcal{B}^{-}$.
Suppose the block-index set $\mathcal{I}^{-}$ is partitioned into nonempty blocks
\[
\mathcal{J}_1,\dots,\mathcal{J}_m,
\]
with quotient vertices $z_1,\dots,z_m$ satisfying
\[
I_-(z_t)=\mathcal{J}_t.
\]
Write $w_t:=|\mathcal{J}_t|$ and
\[
Z:=\{z_1,\dots,z_m\}.
\]
For each block, write
\[
c_t:=c_{w_t}(z_t)
\]
for the current collision score used when the first surviving block from block $\mathcal{J}_t$ is processed.
With this notation, the sufficient conditions for greedy attainment have three roles: suppress the support-superset distractors before they become roots, align each block's block weights with its representative support size, and ensure that every same-layer competitor is either removed earlier or loses the root-selection tie-break.

\begin{theorem}[Greedy attainment after support-superset suppression]
\label{thm:layered-exact-bound}
Assume:
\begin{enumerate}[leftmargin=1.8em,itemsep=0.15em,label=(\roman*)]
    \item every $v\in D$ is clustered to some representative $z_t$ with $I(z_t)\subsetneq I(v)$ before $v$ can be selected as a root;
    \item for every $t$ and every $i\in\mathcal{J}_t$,
    \[
    w_-(B_i^-)=w_t;
    \]
    \item for every $t$, every $i\in\mathcal{J}_t$, and every vertex $u\in B_i^-$ with
    \[
    d_-(u)=w_t
    \qquad\text{and}\qquad
    u\neq z_t,
    \]
    at least one of the following holds:
    \begin{enumerate}[leftmargin=1.8em,itemsep=0.1em,label=(\alph*)]
        \item there exists
        \[
        j\in I_-(u)\setminus \mathcal{J}_t
        \]
        such that
        \[
        w_-(B_j^-)<w_t;
        \]
        \item if $u$ is still active when the first surviving block from $\mathcal{J}_t$ is processed, then it loses to $z_t$ under the current root-selection key:
        \[
        \bigl(c_t,\mathrm{id}(z_t)\bigr)
        <
        \bigl(c_{w_t}(u),\mathrm{id}(u)\bigr).
        \]
    \end{enumerate}
\end{enumerate}
Then the Greedy Layered Clustering Algorithm creates exactly the root set
\[
R_G=\{z_1,\dots,z_m\},
\]
and therefore
\begin{equation}
|R_G|
=
U(\mathcal{B}^{-})
=
\alpha(\mathcal{B}).
\end{equation}
\end{theorem}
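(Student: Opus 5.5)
The argument splits the conclusion into an algorithmic claim, $R_G=Z$, and a value claim. The value claim is essentially free from Theorem~\ref{thm:incidence-block-optimality}. By assumption (i), every $v\in D$ has a representative $z_t\notin D$ with $I(z_t)\subsetneq I(v)$, so $D$ is a legitimate dominance-suppression set. Assumption (ii) is exactly the block-size weight hypothesis of that theorem, so $Z$ is maximum and $\alpha(\mathcal{B})=m$. Its charging argument gives $U(\mathcal{B}^{-})=\sum_t\sum_{i\in\mathcal{J}_t}1/w_t=m$ directly. The whole burden is therefore to show that the Greedy Layered Clustering Algorithm creates precisely one root per part $\mathcal{J}_t$, namely $z_t$, and no others.

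For the algorithmic claim I will induct over the layer order of the run, ordering parts by $w_t$ (ties arbitrary). The invariant to maintain is: when the algorithm reaches weight layer $w$, every block $B_i^-$ with $i\in\mathcal{J}_s$ and $w_s<w$ has been nullified; the roots created so far are exactly the $z_s$ with $w_s<w$; and no vertex of $D$ is a root. Membership in $D$ is ruled out at all times by (i).

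The inductive step has four parts.
\begin{enumerate}
    \item Once the suppressed vertices are gone, residual degrees agree with $d_-$ on the live part. Hence, by (ii), the live blocks from $\mathcal{J}_t$ sit in layer $w_t$.
    \item $z_t$ has degree $w_t$ and support exactly $\mathcal{J}_t$, so it lies in $M(B)$ for the first processed block $B$ from $\mathcal{J}_t$.
    \item Every other degree-$w_t$ competitor $u\in B_i^-$ falls under (iii). In case (a), $u$ lies in a lower-weight block $B_j^-$, which was absorbed in an earlier layer by the invariant, so $u$ is already clustered. In case (b), $u$ loses the tie-break to $z_t$. So the root chosen is $z_t$.
    \item The index $\mathcal{I}_{w_t}(z_t)=\mathcal{J}_t$, so Step~4(e) nullifies exactly the blocks of $\mathcal{J}_t$. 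Blocks of other parts are untouched as blocks; they only lose vertices.
\end{enumerate}
Also, $z_t$ itself was not clustered earlier. Any earlier absorbed block containing $z_t$ would have index in $I_-(z_t)=\mathcal{J}_t$, and these are absorbed only now. The $D$-vertices removed from blocks do not matter, because of (i).

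Finally, no extra roots arise. Every residual block belongs to some $\mathcal{J}_t$, or is a block emptied to nothing by removal of $D$ and discarded. So once all parts are processed, all blocks are empty. The weight-1 preclusion pass fits the same pattern: it is the $w_t=1$ case, where $z_t$ is a degree-1 vertex and its part is a single block.

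The main obstacle is the bookkeeping mismatch between the algorithm's residual degrees and the reduced degrees $d_-$. The algorithm computes degrees on $\mathcal{B}^{\mathrm{res}}$ and keeps them fixed, while vertices of $D$ may still be present when a layer is entered. Such a vertex could give a block a spurious lower weight, or appear as a degree-$w_t$ candidate. I would handle this with (i) read as a timing guarantee: each $v\in D$ is clustered by its dominating representative, whose blocks all contain $v$, so by Proposition~\ref{prop:implicit-superset-suppression} it is absorbed at the step absorbing that witness. Together with the layer-movement rule in Steps~4(a),(e), this should show that a block's effective processing layer equals $w_-(B_i^-)$ by the time it is chosen. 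If the fixed-degree convention breaks this, I would interpret (ii)--(iii) as statements about the current degrees at processing time, which is how $c_t$ is already phrased.
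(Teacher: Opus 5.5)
Your proposal is correct and follows essentially the same route as the paper. The value claim comes from Theorem~\ref{thm:incidence-block-optimality} together with (ii). The algorithmic claim is that, at the first processed block of each $\mathcal{J}_t$ in the actual run, $z_t$ is still active, every other degree-$w_t$ competitor has been absorbed via (iii.a) or loses via (iii.b), and $z_t$'s absorption nullifies exactly $\mathcal{J}_t$. Your explicit treatment of residual degrees versus $d_-$ and of the surviving $D$-vertices is a more careful version of the paper's one-line reduction to the active family $\mathcal{B}^{-}$, and it goes through: $I(z_t)\subsetneq I(v)$ forces $d(v)>w_t$, so each $v\in D$ is absorbed before layer $d(v)$ is reached.
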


\begin{proof}
By Theorem~\ref{thm:incidence-block-optimality}, the set $\{z_1,\dots,z_m\}$ is maximum and $\alpha(\mathcal{B})=m$. Also, by assumption (ii),
\[
U(\mathcal{B}^{-})
=
\sum_{t=1}^m\sum_{i\in\mathcal{J}_t}\frac{1}{w_t}
=m.
\]
It remains only to prove that the greedy algorithm selects these representatives.

By assumption (i), the vertices in $D$ are already clustered to representatives before they can become roots. We may therefore analyze the subsequent block choices on the active reduced family $\mathcal{B}^{-}$.

Proceed in the actual order in which the algorithm reaches unresolved blocks. Fix such a block $\mathcal{J}_t$. Since $I_-(z_t)=\mathcal{J}_t$ in the reduced family and every reduced block in $\mathcal{J}_t$ has layer $w_t$, the representative $z_t$ cannot be removed by a smaller-weight outside block. Previously resolved block representatives have supports equal to their own disjoint blocks, so they do not nullify blocks in $\mathcal{J}_t$. Hence $z_t$ is active when the first surviving block $B_i^-$ with $i\in\mathcal{J}_t$ is processed in layer $w_t$.

Let $u\in M(B_i^-)$ with $u\neq z_t$. If condition~(iii.a) holds for $u$, then there exists $j\in I_-(u)\setminus\mathcal{J}_t$ with $w_-(B_j^-)<w_t$. That lower-weight block is processed before layer $w_t$. Since $u\in B_j^-$ but $d_-(u)=w_t>w_-(B_j^-)$, vertex $u$ cannot be chosen as the root of $B_j^-$; when $B_j^-$ is processed, all currently unassigned vertices in the absorbed block collection containing $B_j^-$ are clustered to the root chosen there, so $u$ is removed before layer $w_t$ begins. Thus any degree-$w_t$ competitor still present in $M(B_i^-)$ must satisfy condition~(iii.b), and hence loses to $z_t$ under the current root-selection key.

Therefore the algorithmic choice
\[
u^\star\in \arg\min_{v\in M(B_i^-)}\bigl(c_{w_t}(v),\mathrm{id}(v)\bigr)
\]
must select $u^\star=z_t$. The associated-block update for $z_t$ nullifies exactly the blocks indexed by $\mathcal{J}_t$, and no second root can later be chosen from that block.

Repeating this argument until all blocks have been resolved shows that the only roots created are $z_1,\dots,z_m$. Thus $R_G=\{z_1,\dots,z_m\}$, and the displayed equality follows from the first paragraph.
\end{proof}

The governing invariant is the following.

\begin{lemma}[block-root uniqueness invariant]
At every step of the algorithm, no block contains more than one active root.
\end{lemma}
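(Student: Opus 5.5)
We prove the invariant by induction on the steps of Algorithm~\ref{alg:greedy-clustering}. The invariant we carry is slightly stronger: every active root $r$ has had every block containing it in the current residual family released ($B_j\leftarrow\varnothing$) or emptied of $r$'s competitors. Concretely, we maintain that for each root $r$, every vertex that co-occurs with $r$ in some live block is already assigned, i.e.\ $\phi(v)\neq v$ is fixed for it. Since a vertex becomes a root only if it is currently unassigned, no later root can share a block with $r$. Hence no block ever holds two roots.

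\textbf{Base case: weight-1 preclusion.} Each root $z_B$ has $d(z_B)=1$, so $z_B$ lies only in $B$. Every currently unassigned member of $B$ is attached to $z_B$ at creation. Two distinct weight-1 roots cannot share a block, because each lies in its own unique block. Any vertex of $B$ already attached to an earlier root is not a root, so it does not count. After Step~3, the clustered vertices are removed from all residual blocks. Consequently no residual block contains a root, and the claim holds when the layered stage starts.

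\textbf{Inductive step: layered creation.} A new root $u^\star$ is chosen from $M(B_i)$ among currently unassigned vertices. Step~4(a) removes clustered vertices, and Step~4(e) removes every assigned vertex from further processing. By the induction hypothesis, $u^\star$ therefore shares no live block with any existing root. At creation, all unassigned vertices in $\bigcup_{j\in\mathcal{I}_w(u^\star)}B_j$ are clustered to $u^\star$ and those blocks are nullified. Since $d(u^\star)=w$, these are exactly the residual blocks containing $u^\star$. The main obstacle lies here. We must check that $\mathcal{I}_w(u^\star)$ covers all residual incidences of $u^\star$, including blocks that were moved to a higher layer $\mathcal{S}_{w'}$ or were never in $\mathcal{S}_w$. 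My first attempt uses the fixed degrees: any block containing $u^\star$ has $\widehat w\le d(u^\star)=w$. A block with $\widehat w<w$ was resolved in an earlier layer, and at that point either $u^\star$ was absorbed, which contradicts its being unassigned, or the block had already been released. So every live block containing $u^\star$ is a weight-$w$ block in $\mathcal{S}_w$ and appears in $\mathcal{I}_w(u^\star)$.

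If this step is fragile under the in-layer migrations of Step~4(a)/(e), we fall back to a weaker statement. Remaining blocks containing $u^\star$ may still exist, but they contain no other unassigned vertex capable of becoming a root. We reduce this to the fact that root candidates are always drawn from unassigned vertices of some block, so two roots in a common original block would require the later one to be unassigned while sharing that block with the earlier root. That is excluded once we show the earlier root's creation step absorbed the whole block. In either form, the original blocks (mapped through the residual updates) contain at most one root, which gives the lemma.
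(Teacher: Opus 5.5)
Your route is the paper's: induction over steps, a weight-1 base case using $d(z_B)=1$, and the observation that roots are drawn only from unassigned vertices while root creation nullifies the blocks of $\mathcal{I}_w(u^\star)$. The paper's proof takes for granted that this nullification suffices. You are right that the real content is that $\mathcal{I}_w(u^\star)$ contains \emph{every} residual block through $u^\star$.

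\textbf{The gap.} Your justification of that point does not hold as written. The dichotomy ``a block with $\widehat w<w$ was resolved in an earlier layer, and either $u^\star$ was absorbed or the block was released'' omits a third outcome, upward migration. Take $B=\{s,u^\star\}$ with $d(s)=w'<w$, and suppose $s$ is clustered in layer $w'$ through a different block $\{r,s\}$ whose root $r\notin B$. Then Step~4(e) drives $n_{\min}(B)$ to $0$ and moves $B$ to layer $d(u^\star)=w$. So $B$ is neither released nor has it absorbed $u^\star$. Your fallback paragraph does not rescue this. It ends by reducing to the very same claim (``once we show the earlier root's creation step absorbed the whole block''), so it is circular and should be dropped.

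\textbf{The repair} is short.
\begin{enumerate}
\item While $u^\star$ is unassigned, every recomputation of $\widehat w(B)$ for a block $B\ni u^\star$ yields a value at most $d(u^\star)=w$.
\item The moves in Steps~4(a), 4(d) and 4(e) go strictly upward from the layer being processed.
\item No block is lost in a deleted layer set. A layer $w'$ closes only when every block in it has been nullified, discarded, or moved. By the same argument applied inductively to lower layers, every vertex of degree $w'$ is assigned by the end of layer $w'$; this is also what makes the recomputation ``$\widehat w>w'$'' legitimate.
\end{enumerate}
Hence every residual block through $u^\star$ sits in $\mathcal{S}_w$ when layer $w$ begins. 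It survives Step~4(a) and enters $\mathcal{I}_w(u^\star)$ at Step~4(b). Before $u^\star$ is selected, such a block cannot leave the layer, because $u^\star$ keeps its count positive. Nor can another root nullify it, because that would cluster $u^\star$. So at selection time $\mathcal{I}_w(u^\star)$ is exactly the set of $w$ residual blocks through $u^\star$.

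Finally, state your invariant over the residual blocks as formed in Step~3 (or the original blocks), not over ``live'' blocks. Once $r$ is created, all blocks through $r$ are nullified, so the ``live'' formulation is vacuous.
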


\begin{proof}
Initially there are no roots, so the invariant holds. During the weight-1 pass, exactly one root is assigned in each processed block. During a later layer-$w$ step, once a root $u^\star$ is chosen, every block in $\mathcal{I}_w(u^\star)$ is immediately clustered to $u^\star$ and nullified. Hence no later step can create a second active root in any block already containing $u^\star$. No other operation increases the number of active roots in a surviving block beyond one.
\end{proof}

By construction of the weight-1 preclusion pass, every $B\in\mathcal{B}_1$ is assigned a degree-1 root before layered greedy processing starts.

\begin{theorem}[Feasibility of greedy output]
The set of active roots returned by the Greedy Layered Clustering Algorithm is feasible.
\end{theorem}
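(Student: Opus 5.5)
The plan is to prove feasibility of the returned root set $R$ directly from the definition: we must show $|R\cap B|\le 1$ for every block $B$ of the incidence-twin-compressed family $\mathcal{B}$, and then lift the statement to the original vertex family. The natural tool is the block-root uniqueness invariant, but that lemma speaks about blocks that survive in the mutable residual list $\mathbf{B}$. Blocks are shrunk in place, nullified, or moved between layers, so we cannot simply quote it. We also need an argument phrased in terms of the \emph{original} block $B$. The proof will therefore track, for each original block, the moment it is first nullified or emptied.

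The key steps, in order, are as follows. First, every root is created from an unassigned vertex. In the weight-1 pass, $z_B$ has degree one and lies only in $B$. In a layered step, $u^\star\in M(B_i)$ is taken from a block whose clustered vertices were removed in pass (a) or by earlier nullifications, so $u^\star$ is still unassigned.

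Second, consider the first time an original block $B$ receives a root $r$. In the weight-1 pass this is $B=B\in\mathcal{B}_1$ itself. In a layered step at layer $w$, I claim $B$ must be absorbed at that same step. Since $r=u^\star$ has residual degree $d(r)=w$, and the residual block $B_j\ni r$ has weight at most $w$ while all lower layers are already completed, $B_j$ lies in layer $w$. Hence $j\in\mathcal{I}_w(r)$, and $B_j$ is nullified with all its currently unassigned vertices clustered to $r$. In the weight-1 case, the "attach currently unassigned members" step and the subsequent removal of clustered vertices from all remaining blocks play the same role.

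Third, we conclude that after $B$ receives its first root, every other vertex of $B$ is either already assigned or becomes assigned to $r$. By the first step, no such vertex can later be chosen as a root. Vertices of $B$ already removed from the residual version of $B$ before this moment were clustered to earlier roots, so they are not roots either. Hence $|R\cap B|\le 1$.

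Finally, the result transfers to the original family. Feasibility on $\mathcal{B}$ lifts exactly as in the proof of Theorem~\ref{thm:doc-dominance}(ii) by picking one representative per quotient class. Blocks absent from $\mathcal{B}^{\mathrm{res}}$ are covered because the residual blocks are exactly the original blocks minus clustered (non-root) vertices.

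The main obstacle is the second step. We need that a root $r$ of residual degree $w$ lies in \emph{no} still-active block of weight below $w$, and that every block containing $r$ is indexed in $\mathcal{I}_w(r)$ at the time of selection. This requires care, since blocks may migrate upward between layers and degrees are frozen rather than recomputed. I would handle it with an invariant: when layer $w$ is processed, every surviving residual block has $\widehat w\ge w$, and every vertex of fixed degree $w$ still active lies only in blocks currently indexed at layer $w$ or nullified. If this invariant fails for some surviving block, the fallback is an explicit exclusion argument. A surviving block $B_j\ni r$ outside $\mathcal{I}_w(r)$ would have $\widehat w(B_j)\le d(r)=w$, so it would already have been processed or present in $\mathcal{S}_w$, which gives a contradiction.
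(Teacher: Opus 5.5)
Your proposal is correct and takes essentially the same route as the paper. The paper's proof just invokes the block-root uniqueness invariant: a new root $u^\star$ immediately nullifies every block in $\mathcal{I}_w(u^\star)$, so no block can acquire a second root. Your steps 2--3 make that invariant explicit per original block, and they also supply the step the paper's invariant takes for granted: every live residual block containing $u^\star$ has been driven into layer $w$ (degrees are frozen, block weights only increase, and a block containing $u^\star$ has weight at most $w$), and therefore lies in $\mathcal{I}_w(u^\star)$.

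One small correction to step 1. A degree-$w$ vertex clustered earlier in the same layer stays in the in-place lists of other non-nullified layer-$w$ blocks, because pass (a) has already run. So the cleaning pass is not what makes $u^\star$ unassigned. The right reason is that $M(B_i)$ must be read as the degree-$w$ vertices of $B_i$ still present in $\mathcal{I}_w(\cdot)$, which matches the decremented count $n_{\min}(B_i)$.
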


\begin{proof}
By the block-root uniqueness invariant, each block contains at most one active root at termination. Since the retained set is exactly the set of active roots, it satisfies the feasibility condition.
\end{proof}

Let $R_G$ denote the final active-root set returned by the Greedy Layered Clustering Algorithm.

\begin{proposition}[Maximality of greedy output]
$R_G$ is a maximal feasible set: for every $x\in V\setminus R_G$, the set $R_G\cup\{x\}$ is infeasible.
\end{proposition}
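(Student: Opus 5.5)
The plan is to prove a single invariant: when the Greedy Layered Clustering Algorithm terminates, every vertex $x\in V$ is either a root or clustered to a root $\phi(x)$ that shares a block with $x$. With that in hand, maximality is immediate. For $x\notin R_G$, the vertex $x$ lies in a common block $B$ with the active root $\phi(x)\in R_G$, so $|(R_G\cup\{x\})\cap B|\ge 2$ and $R_G\cup\{x\}$ is infeasible. Feasibility of $R_G$ itself is already given by the preceding theorem, so only the absorption invariant needs work.

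\emph{Weight-1 pass.} Each $x$ attached there is an unassigned member of some $B\in\mathcal{B}_1$ and is attached to $z_B\in B$. So $x$ and its root share the block $B$, and $z_B$ remains an active root, since roots are never demoted.

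\emph{Layered pass.} A vertex is attached only in Step~4(e). There it belongs to $\bigcup_{j\in\mathcal{I}_w(u^\star)}B_j$, and every such $B_j$ contains $u^\star$ by the definition of $\mathcal{I}_w(u^\star)$. The residual block $B_j$ is a subset of an original block, so $x$ and $u^\star$ co-occur in an original block of $\mathcal{B}$ (on the quotient, and hence on representatives as well).

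\emph{Main obstacle: exhaustion.} We must show that no vertex is left neither a root nor clustered when the algorithm stops. By definition $V=\bigcup\mathcal{B}$, so every unassigned vertex $x$ sits in some residual block $B_i$ with $i$ in some $\mathcal{S}_{\widehat w}$. The argument is that blocks are only ever discarded when empty (Step~4(a)), nullified after clustering (Step~4(e)), or moved to a strictly higher layer. Since weights are bounded by $\max_v d(v)$, a block containing an unassigned vertex must eventually reach a layer in which it has a degree-$w$ vertex. Here $w=\widehat w(B_i)$ is recomputed over unassigned vertices, so $M(B_i)\neq\varnothing$, the index $\mathcal{I}_w$ is nonempty, and the while-loop in 4(c) cannot end while $B_i$ is still active. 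Each iteration of that loop either moves $B_i$ up or nullifies some block, and $B_i$ itself is nullified only after its members are clustered. So $B_i$ is eventually processed: either a member of $B_i$ becomes the root $u^\star$, in which case $B_i\in\mathcal{I}_w(u^\star)$ and all of its unassigned vertices, $x$ included, are clustered, or it is moved upward. Induction over the finitely many layers then closes the argument. The delicate point is the loop guard ``$\mathcal{I}_w(\cdot)$ nonempty''. I would verify that an active block with positive $n_{\min}$ keeps a live entry in the index, because counts are decremented exactly when index entries are deleted.
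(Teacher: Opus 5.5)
Your proof is correct and follows the paper's argument: every non-root vertex is attached to a root with which it shares a block, so adding that vertex would put two retained vertices in that block. The only difference is that you explicitly justify exhaustion (every vertex ends as a root or attached, via the $n_{\min}$/index bookkeeping, the loop guard, and the strictly upward layer moves), which the paper's two-line proof takes for granted.
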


\begin{proof}
Every non-root vertex is attached during processing to a block that already contains an active root, or is attached to a newly created root in that block. Hence for each $x\notin R_G$, there exists a block $B_x\in\mathcal{B}$ with $x\in B_x$ and $B_x\cap R_G\neq\varnothing$. Adding $x$ would create at least two retained elements in $B_x$, violating feasibility.
\end{proof}

The next result records a generic component-baseline consequence of maximality.

For any $R\subseteq V$, write
\[
\mathcal{B}|_R:=\{B\cap R:\ B\in\mathcal{B},\ B\cap R\neq\varnothing\}
\]
for the induced block family. Let $m$ be the number of connected components of the block-overlap graph $G_{\mathcal{B}}$.

\begin{proposition}[Maximal feasible sets meet every component]
For any feasible set $R\subseteq V$, every induced block in $\mathcal{B}|_R$ has size at most one. If, in addition, $R$ is maximal feasible, then $R$ contains at least one vertex in every connected component $\mathcal{K}_i$ of $G_{\mathcal{B}}$. Consequently,
\[
|R|\ge m.
\]
In particular, the greedy layered clustering output $R_G$ has size at least the number of block-overlap components.
\end{proposition}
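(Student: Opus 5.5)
The first assertion is immediate from feasibility. Every element of $\mathcal{B}|_R$ has the form $B\cap R$ with $B\in\mathcal{B}$, and $|R\cap B|\le 1$ for every block. So each nonempty induced block is a singleton. I will record this first, since it only unpacks the definition of strong independence.

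For the second assertion, fix a connected component $\mathcal{K}_i$ of the block-overlap graph $G_{\mathcal{B}}$. I view it as a nonempty set of blocks, and write $V(\mathcal{K}_i):=\bigcup_{B\in\mathcal{K}_i}B$ for the vertices it spans. Suppose, for contradiction, that $R\cap V(\mathcal{K}_i)=\varnothing$. Choose any block $B\in\mathcal{K}_i$ (it is nonempty under the standing convention) and any vertex $x\in B$. Every block containing $x$ intersects $B$ at $x$, so it is adjacent to $B$ in $G_{\mathcal{B}}$ and hence lies in $\mathcal{K}_i$. Therefore every block containing $x$ is disjoint from $R$, so $R\cup\{x\}$ still meets each block in at most one vertex. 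Also $x\notin R$, since $x\in V(\mathcal{K}_i)$. This contradicts maximality, so $R$ meets $V(\mathcal{K}_i)$.

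For the counting bound, the vertex sets $V(\mathcal{K}_i)$ of distinct components are pairwise disjoint: a vertex lying in blocks from two components would make those blocks adjacent. The $m$ components therefore contribute $m$ distinct vertices of $R$, which gives $|R|\ge m$. The statement about $R_G$ then follows by combining the feasibility theorem and the maximality proposition for the greedy output.

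The main point needing care is the closure step, namely that all blocks through $x$ stay inside the same component. I expect no real obstacle. The only subtleties are to interpret ``component'' through the vertices spanned by its blocks, and to note that the disjointness of these vertex sets across components is what makes the count work.
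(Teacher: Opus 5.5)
Your proof is correct and follows the paper's argument: both use the vertex domains $V_i=\bigcup_{B\in\mathcal{K}_i}B$, note that they are pairwise disjoint, and contradict maximality by adding any $x\in V_i$ when $R\cap V_i=\varnothing$. Your closure step (every block through $x$ lies in $\mathcal{K}_i$) is the same fact as the paper's observation that blocks outside $\mathcal{K}_i$ are disjoint from $V_i$, stated from the side of $x$.
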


\begin{proof}
If $R$ is feasible, then by definition $|R\cap B|\le 1$ for every $B\in\mathcal{B}$, so every induced block $B\cap R$ has size at most one.

Now assume that $R$ is maximal feasible. Let $\mathcal{K}_1,\dots,\mathcal{K}_m$ be the connected components of $G_{\mathcal{B}}$, with corresponding vertex domains $V_i=\bigcup_{B\in\mathcal{K}_i}B$. These domains are pairwise disjoint: otherwise a vertex shared by two domains would belong to blocks in two different components, creating an overlap edge between them. If some component domain $V_i$ contained no retained vertex, then for any $x\in V_i$ the set $R\cup\{x\}$ would remain feasible: blocks outside $\mathcal{K}_i$ are disjoint from $V_i$, and blocks inside $\mathcal{K}_i$ contain no element of $R$. This contradicts maximality. Hence every component contributes at least one retained vertex, so $|R|\ge m$. Applying this with $R=R_G$ gives the final claim, since $R_G$ is feasible and maximal by the previous two results.
\end{proof}

The result explains why component contraction is a weak baseline for the strong-independence objective. In the minimal chain
\[
B_1=\{a,b\},\qquad B_2=\{b,c\},
\]
the feasible set $R=\{a,c\}$ induces only singleton blocks,
\[
B_1\cap R=\{a\},\qquad B_2\cap R=\{c\}.
\]
The overlap path through $b$ is a property of the constraint representation, not a reason to forbid selecting both $a$ and $c$.

The next proposition records the first-order reason for the layer schedule: lower block weight means larger contribution to the closed-form obstruction.

\begin{proposition}[Upper-bound alignment]
If $w(B_1)\le w(B_2)$, then block $B_1$ contributes at least as much to the block-weight upper bound as $B_2$:
\begin{equation}
\frac{1}{w(B_1)} \ge \frac{1}{w(B_2)}.
\end{equation}
\end{proposition}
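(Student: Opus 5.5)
The inequality follows from two facts: block weights are positive integers, and $t\mapsto 1/t$ is decreasing on the positive reals. The plan is to justify positivity first and then apply monotonicity; the link to the closed-form bound is just the observation that these reciprocals are exactly the summands of $U(\mathcal{B})$ in Theorem~\ref{thm:block-weight-upper}.

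First, I check that $w(B_1)$ and $w(B_2)$ are at least $1$, so the reciprocals are defined and positive. By definition $w(B)=\min_{v\in B}d(v)$. Blocks of the active family are nonempty, since empty blocks are discarded in every residual construction. Every $v\in B$ lies in $B$ itself, so $d(v)\ge 1$, and therefore $w(B)\ge 1$.

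Next, from $0<w(B_1)\le w(B_2)$, I divide both sides by the positive product $w(B_1)w(B_2)$, which gives $1/w(B_2)\le 1/w(B_1)$. This is exactly the displayed inequality. Since the summand of $U(\mathcal{B})$ attached to a block $B$ is precisely $1/w(B)$, the phrase ``contributes at least as much'' is this inequality read termwise.

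No step is a real obstacle. The only point needing care is the positivity of block weights, which depends on the standing convention that the active family contains no empty blocks. If the degrees were instead computed on a residual family, as in the layered pass of Algorithm~\ref{alg:greedy-clustering}, I would note that the same argument applies verbatim with $d$ and $w$ replaced by their residual versions, since residual blocks are likewise kept nonempty.
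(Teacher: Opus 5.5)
Your proof is correct and matches the paper's, which consists of the single observation that the reciprocal function is decreasing on positive block weights. Your extra check that $w(B)\ge 1$ (blocks are nonempty and every member of a block has degree at least one) just makes explicit the positivity the paper assumes.
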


\begin{proof}
The reciprocal function is decreasing on positive block weights.
\end{proof}

Within a fixed current layer, Theorem~\ref{thm:greedy-covering-upper} further motivates prioritizing blocks with larger $n_{\min}(B)$. Thus the layered schedule first resolves the most constraining weight level and then, inside that level, the blocks with stronger minimum-weight covering pressure.

This block-first viewpoint is the decisive difference from generic hypergraph MIS methods. The latter are typically vertex-centric, whereas the certificates and greedy schedule here operate directly on the materialized incidence relation: block lists, vertex-to-block support lists, and root assignments.

\section{Algorithmic Interpretation and Complexity}

The previous section specified the greedy clustering routine and its feasibility, maximality, and conditional optimality guarantees. We now record three complementary facts: an exactness regime, an algorithm-aware upper bound induced by the greedy chronology itself, and coarse implementation costs in incidence-list units.

\subsection{Exactness and Algorithm-Aware Bounds}

A first sanity check is the decomposable case.

\begin{proposition}[Exactness on disjoint block families]
If the blocks are pairwise disjoint, the Greedy Layered Clustering Algorithm is optimal.
\end{proposition}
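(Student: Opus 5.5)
The plan is to compute $\alpha(\mathcal{B})$ exactly on a disjoint family and then show that the algorithm's root set $R_G$ reaches that value.

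\emph{Upper bound.} If the blocks are pairwise disjoint, every active vertex has $d(v)=1$. Hence every block has $w(B)=1$ and Theorem~\ref{thm:block-weight-upper} gives $\alpha(\mathcal{B})\le U(\mathcal{B})=M$.

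\emph{Lower bound.} Choosing one vertex from each block gives a feasible set, since no block contains two chosen vertices. Therefore $\alpha(\mathcal{B})=M$.

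\emph{Behaviour of the algorithm.} Incidence-twin compression merges all vertices of one block into a single quotient class, because they share the support $\{i\}$. The quotient family therefore consists of $M$ singleton blocks, each of weight $1$. Every block lies in $\mathcal{B}_1$, so the weight-1 preclusion step (Step~2 of Algorithm~\ref{alg:greedy-clustering}) picks a degree-1 root $z_B$ in each block. These are $M$ distinct classes, and the map $B\mapsto z_B$ is injective because the blocks are disjoint.

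\emph{The residual stage adds nothing.} After preclusion, every class has been clustered, so $\mathcal{B}^{\mathrm{res}}$ is empty and the layered stage creates no further roots. Thus $|R_G|=M=\alpha(\mathcal{B})$. Feasibility of $R_G$ also follows independently from the Feasibility theorem, and a maximal feasible set meets all $M$ overlap components, which gives the lower bound $|R_G|\ge M$ again.

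One could alternatively invoke Theorem~\ref{thm:layered-exact-bound} with $D=\varnothing$, $\mathcal{J}_t=\{t\}$ and $w_t=1$. In that case condition (iii) is vacuous after twin compression. The direct argument is simpler, however.

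\emph{Main obstacle.} The only delicate point is bookkeeping: checking that materializing roots back from quotient classes to original vertices yields exactly one original vertex per block. This holds because the algorithm retains a single representative per class.
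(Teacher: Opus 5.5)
Your proof is correct and has the same outline as the paper's: the optimum equals the number of blocks, and the greedy routine creates exactly one root per block. You are more precise than the paper, which argues only generically that blocks are processed without interference. You observe that disjointness forces $d(v)=1$ for every vertex, so every block lies in $\mathcal{B}_1$, the weight-1 preclusion pass creates all the roots, and $\mathcal{B}^{\mathrm{res}}$ is empty. Your side remark (maximality of $R_G$ together with the $M$ isolated components of the overlap graph, combined with $\alpha(\mathcal{B})\le M$) would by itself already be a complete proof.
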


\begin{proof}
When the blocks are pairwise disjoint, the feasibility constraint decomposes block by block, and retaining one representative per nonempty block is optimal. The greedy routine processes each nonempty block once: choosing a root in one block cannot remove or constrain any vertex in another block because there is no block overlap. Thus the algorithm creates exactly one root per nonempty block, attaining the decomposed optimum.
\end{proof}

It is also useful to treat the closed-form value $U(\mathcal{B})$ as an instance-level certificate target. Since Theorem~\ref{thm:block-weight-upper} gives $\alpha(\mathcal{B})\le U(\mathcal{B})$, if $R_G$ denotes the retained set returned by the greedy algorithm, then
\begin{equation}
\eta_G := \frac{|R_G|}{U(\mathcal{B})}
\end{equation}
measures how closely the constructed feasible set approaches the explicit structural upper bound. This ratio is meaningful even when exact optimum computation is unavailable, because both $|R_G|$ and $U(\mathcal{B})$ are computed directly from the materialized incidence data.

The next theorem gives a finer, algorithm-aware upper bound. Whereas $U(\mathcal{B})$ depends only on static block weights, this bound uses the actual layer-by-layer assignments made by Algorithm~\ref{alg:greedy-clustering}. For the comparison, assume $d(v)\ge 1$ for all $v\in V$. Let
\begin{equation}
\mathcal{B}_1:=\{B\in\mathcal{B}: w(B)=1\},
\qquad
\mathcal{B}_{>1}:=\mathcal{B}\setminus\mathcal{B}_1.
\end{equation}
Let $O^\star$ be an optimal feasible set satisfying the canonical weight-1 conclusion of Theorem~\ref{thm:w1-canonical}, and write
\begin{equation}
Z:=\{z_B:B\in\mathcal{B}_1\}\subseteq O^\star .
\end{equation}
For each layer $w$, let $R_w\subseteq R_G$ be the set of roots created in layer $w$. For each $r\in R_w$, define the associated block family consumed by that root as
\begin{equation}
\mathcal{A}_w(r):=\{B_j:\ j\in \mathcal{I}_w(r)\}.
\end{equation}
For each vertex $x\in V\setminus Z$, let $\tau(x)=w$ be the layer in which $x$ is first removed from further processing by the greedy algorithm, including the case where $x$ itself is chosen as a root in layer $w$. Define the layer-$w$ witness graph
\begin{equation}
G_w:=\bigl(X_w,\Gamma_w,E_w\bigr)
\end{equation}
with left side
\begin{equation}
X_w:=\{x\in V\setminus Z:\tau(x)=w\},
\end{equation}
right side
\begin{equation}
\Gamma_w:=\{(r,B):r\in R_w,\ B\in\mathcal{A}_w(r)\},
\end{equation}
and edge relation
\begin{equation}
x\sim(r,B)\quad\Longleftrightarrow\quad x\in B.
\end{equation}
Write $\nu(G_w)$ for the maximum matching size in $G_w$.

\begin{theorem}[Layered witness-matching upper bound]
\label{thm:layered-witness-matching}
\begin{equation}
\alpha(\mathcal{B})\le |\mathcal{B}_1|+\sum_w \nu(G_w).
\end{equation}
\end{theorem}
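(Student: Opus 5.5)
The plan is a direct charging argument: fix a canonical optimum, split it into its weight-1 part and the rest, and inject the rest into the right sides of the witness graphs layer by layer. Let $O^\star$ be the optimal set fixed in the statement, so $Z\subseteq O^\star$ by Theorem~\ref{thm:w1-canonical}. Write $O^\star=(O^\star\cap V_1)\cup(O^\star\setminus V_1)$ with $V_1:=\bigcup_{B\in\mathcal{B}_1}B$. First I will show $O^\star\cap V_1=Z$. Every $y\in V_1$ lies in some $B\in\mathcal{B}_1$, and that block already contains $z_B\in O^\star$. Feasibility then forces $y=z_B$ whenever $y\in O^\star$. The vertices $z_B$ are pairwise distinct because each has degree $1$. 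Hence $|O^\star\cap V_1|=|Z|=|\mathcal{B}_1|$, and it remains to show $|O^\star\setminus V_1|\le\sum_w\nu(G_w)$.

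Next I will construct, for each $x\in O^\star\setminus V_1$, a witness pair $(r,B)\in\Gamma_{\tau(x)}$ with $x\in B$. Such an $x$ is not clustered in the weight-1 preclusion pass, since that pass only touches members of weight-1 blocks. So $x$ survives into $\mathcal{B}^{\mathrm{res}}$, and there it still lies in at least one residual block, because residual blocks are obtained by deleting only $V_1$-vertices and $d(x)\ge1$. Every residual block is eventually either nullified in some Step~4(e) or emptied. A block can become empty only by losing all its vertices to earlier clustering, so every vertex of $\mathcal{B}^{\mathrm{res}}$ is eventually clustered. The only removal mechanism after preclusion is Step~4(e), which clusters the unassigned vertices of $\bigcup_{j\in\mathcal{I}_w(u^\star)}B_j$ to $u^\star$; this also covers the case $x=u^\star$, since $x$ lies in its own associated blocks. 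Hence at the moment $x$ is removed, in layer $w=\tau(x)$, there is a root $r=u^\star\in R_w$ and a block $B\in\mathcal{A}_w(r)$ with $x\in B$. That gives the edge $x\sim(r,B)$ in $G_w$, with $x\in X_w$.

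Define $\mu(x):=(r,B)$ by this choice. Injectivity follows from feasibility. If $\mu(x)=\mu(x')$ for distinct $x,x'\in O^\star$, then both lie in the same original block $B$, contradicting $|O^\star\cap B|\le1$. Note that a residual block is a subset of its original block, so membership in the residual block implies membership in the original one. Therefore, for each $w$, the edges $\{x,\mu(x)\}$ with $\tau(x)=w$ form a matching in $G_w$, of size at most $\nu(G_w)$. Summing over $w$ and adding the $|\mathcal{B}_1|$ weight-1 vertices gives $\alpha(\mathcal{B})=|O^\star|\le|\mathcal{B}_1|+\sum_w\nu(G_w)$.

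The main obstacle is the bookkeeping in the second step. The danger is that some $x\in O^\star\setminus V_1$ is removed without being inside a block of some $\mathcal{A}_w(r)$. For example, the first-pass cleanup in Step~4(a) deletes vertices, and a block might be discarded as empty. I will handle this by checking that Step~4(a) only deletes vertices that were already clustered earlier by some Step~4(e). So the first clustering event of $x$ is always a Step~4(e) event, and it is this event that defines both $\tau(x)$ and the witness pair $\mu(x)$.
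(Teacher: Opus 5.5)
Your proposal is correct and takes essentially the same route as the paper: fix the canonical optimum $O^\star\supseteq Z$, charge each remaining optimal vertex $x$ to a consumed pair $(r,B)$ from its first removal in layer $\tau(x)$, and use feasibility of $O^\star$ to make these edges a matching in each $G_w$. Your extra step showing $O^\star\cap V_1=Z$ does the job of the paper's remark that $B_x\in\mathcal{B}_{>1}$, and it more explicitly justifies that every such $x$ is first removed by a Step~4(e) event.
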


\begin{proof}
By Theorem~\ref{thm:w1-canonical}, choose $O^\star$ so that $Z\subseteq O^\star$ and $|Z|=|\mathcal{B}_1|$. Let
\[
O^+ := O^\star\setminus Z.
\]
Then
\[
|O^\star|=|\mathcal{B}_1|+|O^+|.
\]

For each layer $w$, let
\[
O_w^+ := \{x\in O^+:\tau(x)=w\}.
\]
We claim that $|O_w^+|\le \nu(G_w)$ for every $w$. Fix $w$ and $x\in O_w^+$. By definition of $\tau(x)$, the vertex $x$ is first removed in layer $w$ when some root $r_x\in R_w$ is chosen and one of its associated blocks $B_x\in\mathcal{A}_w(r_x)$ contains $x$. Hence $(r_x,B_x)\in\Gamma_w$ and $x\sim(r_x,B_x)$.

Moreover $B_x\in\mathcal{B}_{>1}$. If $B_x\in\mathcal{B}_1$, then $z_{B_x}\in Z\subseteq O^\star$ and also $x\in B_x\cap O^\star$ with $x\ne z_{B_x}$, contradicting feasibility of $O^\star$.

The edges $\{(x,(r_x,B_x)):x\in O_w^+\}$ form a matching in $G_w$. Distinct vertices of $O_w^+$ are distinct on the left. They are also distinct on the right: if two distinct vertices $x,y\in O_w^+$ were assigned to the same right vertex $(r,B)$, then $x,y\in O^\star\cap B$, contradicting feasibility of $O^\star$. Therefore $|O_w^+|\le\nu(G_w)$.

Summing over all layers,
\[
|O^+|=\sum_w |O_w^+|\le\sum_w\nu(G_w).
\]
Thus
\[
\alpha(\mathcal{B})=|O^\star|\le |\mathcal{B}_1|+\sum_w\nu(G_w).
\]
\end{proof}

Example~\ref{ex:pair-parity-trace} gives a concrete witness picture. There are no weight-1 blocks, so $Z=\varnothing$. If the greedy roots are $x_1$ and $x_4$, then the right side of $G_2$ consists of block-root pairs consumed by those two choices, such as $(x_1,O)$, $(x_1,P_1)$, $(x_4,E)$, and $(x_4,P_2)$. Any feasible optimum can be injected into these consumed pairs by assigning each optimal vertex to a block that first removes it. The matching condition is exactly feasibility: two optimal vertices cannot be assigned to the same consumed block.

\subsection{Complexity}

The algorithms above are intended for instances where the incidence relation is materialized. Let
\begin{equation}
I := \sum_{v\in V} d(v) = \sum_{B\in\mathcal{B}} |B|
\end{equation}
be the total vertex-block incidence. Let $M_{\mathrm{res}}:=|\mathcal{B}^{\mathrm{res}}|$ be the number of residual blocks after incidence-twin compression and weight-1 preclusion.

\begin{proposition}[Preprocessing cost]
Computing vertex degrees, performing weight-1 preclusion, removing clustered vertices, and initializing residual block weights require $O(I)$ time.
\end{proposition}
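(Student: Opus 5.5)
The approach is to charge every operation to incidences, i.e.\ to pairs $(v,B)$ with $v\in B$, and to show that each of the four tasks scans every incidence only a constant number of times. Throughout we assume the standard RAM model and that vertex and block identifiers are integers in $\{1,\dots,N\}$ and $\{1,\dots,M\}$. This lets us use array-indexed counters and flags instead of hashing, so $N,M\le I$ (every active vertex and every nonempty block contributes at least one incidence).

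\textbf{Degrees and weight-1 preclusion.} First, vertex degrees are computed in one pass over the block lists. We allocate a zeroed array $d[1..N]$ and increment $d[v]$ for each incidence, for $O(I)$ work in total. Next, for each block $B$ we compute $w(B)=\min_{v\in B}d(v)$ by scanning $B$ once, again $O(I)$ in total. During that same scan, whenever the minimum is $1$ we record a degree-1 witness $z_B$.

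The preclusion pass then processes each $B\in\mathcal{B}_1$. For every member $v\in B$ it checks a Boolean flag ``assigned''. If the flag is unset, it sets $\phi(v)\leftarrow z_B$ and the flag. This costs $O(|B|)$ per block, hence $O(I)$ over $\mathcal{B}_1$. The only subtlety is the case where two weight-1 blocks share a non-root vertex. The ``currently unassigned'' rule handles this with an $O(1)$ flag test, so no vertex is processed more than once per incidence.

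\textbf{Residual family and layer initialization.} Removing clustered vertices is done by one more pass over the blocks in $\mathcal{B}_{>1}$. Each block is rewritten in place, keeping only unflagged vertices, and the block is discarded if it becomes empty. This is $O(|B|)$ per block and therefore $O(I)$ overall. The residual frequencies are then recomputed exactly as in the first step over the residual incidences, whose total size is at most $I$. Finally, each residual $\widehat w(B_i)$ is computed by one scan of $B_i$, and $i$ is appended to the bucket $\mathcal{S}_{\widehat w(B_i)}$.

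\textbf{Main obstacle.} The step needing care is the bucket array. Weights satisfy $1\le \widehat w(B_i)\le \Delta\le M\le I$, so an array of $\Delta$ list heads can be allocated in $O(\Delta)\subseteq O(I)$ time, and each append is $O(1)$. Summing the constant number of linear passes gives the claimed $O(I)$ bound. If incidence-twin compression is included, it is assumed already done (or done via radix-sorting support lists in $O(I)$); otherwise the bound must note that hashing gives expected $O(I)$.
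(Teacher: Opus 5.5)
Your argument is correct and is the same incidence-charging argument the paper gives: each listed step touches every original or surviving incidence only a constant number of times, so the total is $O(I)$. You add implementation detail the paper leaves implicit: integer-indexed counter and flag arrays, filtering blocks in place rather than updating them through vertex incidence lists, and the $O(\Delta)\subseteq O(I)$ bucket array, which the paper instead accounts for in its separate layer-set initialization proposition.
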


\begin{proof}
Using vertex-to-block incidence lists, the initial degree computation scans each incidence once. During weight-1 preclusion, each clustered vertex class is removed from later consideration once, and affected block records are updated through their incident lists. Residual block weights are initialized by scanning the surviving residual block records. Each operation is charged to an original or surviving incidence a constant number of times, giving total work $O(I)$.
\end{proof}

\begin{proposition}[Layer-set initialization cost]
Placing each residual block index into its initial weight set $\mathcal{S}_{\widehat w(B)}$ requires $O(M_{\mathrm{res}})$ time.
\end{proposition}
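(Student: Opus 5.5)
The plan is a counting argument for Step~3 of Algorithm~\ref{alg:greedy-clustering}, which produces the sets $\mathcal{S}_w$. The input is the materialized residual list $\mathbf{B}=[B_1,\dots,B_{M_{\mathrm{res}}}]$, together with each block's weight $\widehat w(B_i)$. We assume these weights were already computed during preprocessing. That cost is covered by the previous proposition, since evaluating $\widehat w(B_i)=\min_{v\in B_i}d(v)$ for all residual blocks takes one scan of the surviving incidences and hence $O(I)$ time. So the only work left to bound is the insertion of indices into layer sets.

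First, fix the data structure. Each $\mathcal{S}_w$ will be a linked list or dynamic array, and the lists are addressed through an array or hash table keyed by $w$. The key range matters: every residual weight satisfies $\widehat w(B_i)\le \max_v d(v)\le M_{\mathrm{res}}$, because a residual vertex lies in at most $M_{\mathrm{res}}$ residual blocks. An array of $M_{\mathrm{res}}$ bucket heads therefore suffices. It can be allocated in $O(M_{\mathrm{res}})$ time, or lazily; alternatively one can use a hash map with expected $O(1)$ access.

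Second, make a single pass over $i=1,\dots,M_{\mathrm{res}}$. For each $i$, read $\widehat w(B_i)$, find the bucket for that weight in $O(1)$, and append $i$ in $O(1)$. Each index is inserted exactly once, so the pass costs $O(M_{\mathrm{res}})$. Adding the $O(M_{\mathrm{res}})$ allocation of bucket heads gives the claimed total.

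The main point of care is the accounting boundary rather than the algorithm itself. The bound holds only if the weights $\widehat w(B_i)$ are charged to the preprocessing cost $O(I)$ and not to this step. It also relies on the bucket index range being $O(M_{\mathrm{res}})$, which is what justifies constant-time bucket access without any sorting. I would state both facts explicitly, so that no $\log$ factor from a priority structure enters the bound.
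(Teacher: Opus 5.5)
Your proposal is correct and follows the paper's argument. The paper also takes the residual weights as already computed, charged to the $O(I)$ preprocessing bound, and notes that each of the $M_{\mathrm{res}}$ residual blocks contributes exactly one index insertion. Your extra observation that $1\le\widehat w(B_i)\le M_{\mathrm{res}}$, so that a bucket array of size $O(M_{\mathrm{res}})$ gives constant-time access, fills in an implementation detail the paper leaves implicit.
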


\begin{proof}
After residual weights have been computed, each residual block contributes exactly one index insertion into the set corresponding to its initial weight.
\end{proof}

Let
\begin{equation}
J_{\mathrm{res}}:=\sum_{B\in\mathcal{B}^{\mathrm{res}}}\sum_{t=1}^{k_B}|B^{(t)}|
\end{equation}
be the total incidence scanned across all residual block states, where $k_B$ is the number of times residual block $B$ is visited and $B^{(t)}$ is its live vertex set during the $t$-th layer-cleaning or layer-clustering pass. Let $T_{\mathrm{index}}$ denote the total work needed to build and update the layer-local vertex-to-block indices $\mathcal{I}_w(\cdot)$, and let $T_{\mathrm{score}}$ denote the total work needed to evaluate the collision scores $c_w(\cdot)$ at the selected blocks.

\begin{proposition}[Coarse clustering complexity]
If the current $n_{\min}(B)$ values are maintained in an integer-keyed bucketed queue, or any equivalent priority structure supporting $O(1)$ amortized decrement/update inside a fixed layer, then the layered greedy stage runs in
\begin{equation}
O\!\left(J_{\mathrm{res}}+T_{\mathrm{index}}+T_{\mathrm{score}}\right)
\end{equation}
time.
\end{proposition}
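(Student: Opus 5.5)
The plan is an accounting argument: attribute every unit of work in the layered stage (Steps~4(a)--(f) of Algorithm~\ref{alg:greedy-clustering}) to one of three pools, namely block-state scans ($J_{\mathrm{res}}$), index construction and maintenance ($T_{\mathrm{index}}$), or collision-score evaluation ($T_{\mathrm{score}}$). Everything not in one of these pools must be shown to be $O(1)$ per charged event.

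First, the cleaning pass 4(a). Every visit to a residual block $B_i$ in some layer scans its live vertex set $B_i^{(t)}$ once. In that scan it
\begin{enumerate}[leftmargin=1.6em,label=(\alph*),itemsep=0.1em]
  \item drops clustered vertices, by an $O(1)$ check of $\phi$ per vertex;
  \item tests for emptiness;
  \item counts degree-$w$ vertices to obtain $n_{\min}(B_i)$, or recomputes $\widehat w(B_i)$ when none remain.
\end{enumerate}
All of this is linear in $|B_i^{(t)}|$, so summed over all visits it is exactly a term of $J_{\mathrm{res}}$. Moving $i$ to a larger-weight set $\mathcal{S}_{\widehat w}$ costs $O(1)$ and is charged to the same visit. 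Discarding $\mathcal{S}_w$ in 4(f) costs $O(|\mathcal{S}_w|)$, which is again at most the number of block visits in that layer. The recomputations of $\widehat w(B_i)$ in 4(d) and 4(e) likewise scan $B_i$ once and count as a visit in $J_{\mathrm{res}}$ by the definition of $k_B$.

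Second, the selection loop 4(c)--(e). Building $\mathcal{I}_w$ in 4(b) and the decrement-and-delete operations in 4(e) are charged to $T_{\mathrm{index}}$ by definition. Computing the $c_w(v)$ values for $v\in M(B_i)$ is charged to $T_{\mathrm{score}}$. It remains to bound the priority-queue work. Each block is inserted into the bucketed queue once per layer visit, which is charged to $J_{\mathrm{res}}$. Each decrement is triggered by deleting some pair $(v,i)$ with $i\in\mathcal{I}_w(v)$, so it is charged to $T_{\mathrm{index}}$ at $O(1)$ amortized cost by hypothesis. Each extract-max removes an active block from the layer, either by nullifying it or by moving it upward, so it is charged to that block's visit.

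The clustering in 4(e) iterates over $\bigcup_{j\in\mathcal{I}_w(u^\star)}B_j$. This scan is a layer-clustering pass of each $B_j$ and is counted in $J_{\mathrm{res}}$. Assigning $\phi$ and nullifying a block are $O(1)$ per scanned incidence.

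The main obstacle is the amortization of extract-max in a bucketed queue: the maximum key can drift downward. Inside a fixed layer the scores only decrease, and no block is reinserted into the current layer with a higher key. So the maximum-bucket pointer moves monotonically downward from the initial maximum, which is at most $\max_B|B^{(t)}|$. The pointer motion can therefore be charged to the initial scan of some block of that size in the layer, and is thus absorbed by $J_{\mathrm{res}}$. Summing the three pools then gives $O(J_{\mathrm{res}}+T_{\mathrm{index}}+T_{\mathrm{score}})$.
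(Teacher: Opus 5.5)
Your proposal is correct and uses the same accounting as the paper. Block scans go to $J_{\mathrm{res}}$, index construction and the decrements triggered by index deletions go to $T_{\mathrm{index}}$, collision-score evaluations go to $T_{\mathrm{score}}$, and per-block $O(1)$ overheads are absorbed by $J_{\mathrm{res}}$. Your added observation that the bucketed queue's maximum pointer only moves downward within a layer, bounded by one initial block scan, makes explicit an amortization step that the paper's proof leaves implicit.
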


\begin{proof}
Residual block scans contribute $J_{\mathrm{res}}$. Building and updating the layer-local degree-$w$ incidence tables contributes $T_{\mathrm{index}}$. On-demand collision-score evaluations at selected blocks contribute $T_{\mathrm{score}}$. A global within-layer sort is unnecessary: once $\mathcal{I}_w(\cdot)$ and the initial $n_{\min}(B)$ values are built, each decrement of a current-layer counter is triggered by deleting some degree-$w$ vertex from the index, so every current-layer incidence is touched only when its degree-$w$ endpoint is clustered away. Since each residual block has at least one member, the initialization term $O(M_{\mathrm{res}})$ is absorbed by $O(J_{\mathrm{res}})$.
\end{proof}

Equivalently, with residual scan amplification
\[
\sigma_{\mathrm{res}}:=\frac{J_{\mathrm{res}}}{I},
\]
the coarse bound is
\[
O\!\left(\sigma_{\mathrm{res}}I+T_{\mathrm{index}}+T_{\mathrm{score}}\right).
\]
Thus the implemented loop is incidence-local: it does not construct induced subhypergraphs, dual graphs, or neighborhood-exchange structures, and its dominant cost is the number of live incidences actually scanned during residual layer processing.

\section{Conclusion}

We formulated a structural theory for maximum strong independent set in finite hypergraphs, focused on the incidence patterns that arise when hyperedges are local incompatibility blocks rather than transitive equivalence classes. The main contributions are exact reductions, block-weight upper bounds, puncturing and covering certificates, and a layered greedy clustering algorithm with feasibility, maximality, witness-matching, complexity, and optimality guarantees under explicit incidence conditions.

The framework emphasizes that local hyperedge constraints should be handled through their incidence structure rather than collapsed into a coarser graph-level surrogate. Future work should sharpen worst-case approximation guarantees for the layered greedy algorithm, characterize when puncturing and covering certificates are tight, and develop graph-native versions of the certificate framework. Since ordinary graph MIS is the 2-uniform special case, the more substantial question is how to recover useful block-like incidence decompositions and certificate-guided greedy schedules for general graph instances whose constraints are not already presented as explicit local blocks.

\end{document}